
\documentclass[10pt,twocolumn,letterpaper]{article} 

\usepackage{iccv}      

%
%
\usepackage[dvipsnames]{xcolor}
\usepackage{lipsum}  
\usepackage{comment}
\usepackage[accsupp]{axessibility}


\newcommand{\zrv}{\mathbf{z}}



\usepackage{amsmath}
\usepackage{stmaryrd}
\usepackage{mathtools}
\usepackage{enumitem}

\newcommand{\Rcal}{\mathcal{R}}
\newcommand{\Zcal}{\mathcal{Z}}

\newcommand{\vc}[1]{\mathbf{#1}} 

\newcommand{\E}{\mathbb{E}}

\newcommand{\x}{\vc{x}}
\newcommand{\w}{\vc{w}}
\newcommand{\z}{\vc{z}}

\newcommand{\R}{{\mathbb{R}}}

\newcommand{\CSI}{\mathrm{CSI}}

\DeclareMathOperator*{\argmax}{arg\,max}

\newcommand{\changedc}[1]{{#1}}

\renewcommand{\paragraph}[1]{{\vspace{0.3mm}\noindent \bf #1}.}

\newcommand{\paragraphQ}[1]{{\vspace{0.3mm}\noindent \bf #1}?}

%
%


\newcommand{\tr}{\textrm{train}}
\newcommand{\te}{\textrm{test}}

\newcommand{\kprune}{$\kappa\textrm{-Prunability}$ }
\newcommand{\qkcomp}{$(q, \kappa)\textrm{-Compressibility}$}

\renewcommand{\paragraph}[1]{{\vspace{0.3mm}\noindent \bf #1}.}

\usepackage{xr-hyper}

\makeatletter
\newcommand*{\addFileDependency}[1]{
	\typeout{(#1)}
	\@addtofilelist{#1}
	\IfFileExists{#1}{}{\typeout{No file #1.}}
}

\makeatother

\definecolor{iccvblue}{rgb}{0.21,0.49,0.74}
\usepackage[pagebackref,breaklinks,colorlinks,allcolors=iccvblue]{hyperref}
\usepackage[capitalize]{cleveref}

\crefname{equation}{eq.}{eq.}
\Crefname{equation}{Eq.}{Eq.}
\crefname{theorem}{thm.}{thms.}
\Crefname{Theorem}{Thm.}{Thms.}
\crefname{conjecture}{conj.}{conjs.}
\Crefname{Conjecture}{Conj.}{Conjs.}
\crefname{proposition}{prop.}{props.}
\Crefname{proposition}{Prop.}{Props.}
\crefname{definition}{dfn.}{dfn.}
\Crefname{definition}{Dfn.}{Dfn.}
\crefname{remark}{remark}{remark}
\Crefname{Remark}{Remark}{Remark}
\Crefname{algorithm}{Alg.}{Alg.}

\newtheorem{thm}{Theorem}

\newtheorem{prop}{Proposition}
\newtheorem{dfn}{Definition}

\newtheorem{proof}{Proof}

\crefname{section}{Sec.}{Secs.}
\Crefname{section}{Sec.}{Secs.}
\crefname{equation}{Eq.}{Eqs.}
\Crefname{equation}{Eq.}{Eqs.}
\crefname{figure}{Fig.}{Figs.}
\Crefname{figure}{Fig.}{Figs.}
\crefname{table}{Tab.}{Tabs.}
\Crefname{table}{Tab.}{Tabs.}
\crefname{thm}{Thm.}{Thms.}
\Crefname{thm}{Thm.}{Thms.}
\crefname{conj}{Conj.}{Conjs.}
\Crefname{conj}{Conj.}{Conjs.}
\crefname{dfn}{Dfn.}{Dfns.}
\crefname{dfn}{Dfn.}{Dfns.}
\crefname{remark}{remark}{remarks}
\Crefname{Remark}{Remark}{Remarks}
\crefname{prop}{Prop.}{Prop.}
\Crefname{prop}{Prop.}{Prop.}
\Crefname{algorithm}{Alg.}{Alg.}
\crefname{appendix}{App.}{Apps.}
\Crefname{appendix}{App.}{Apps.}
\crefname{appsec}{appendix}{appendices}
\Crefname{appsec}{Appendix}{Appendices}


\title{Large Learning Rates Simultaneously Achieve\\ Robustness to Spurious Correlations and Compressibility}
\def\authorBlock{
    Melih Barsbey \qquad
    Lucas Prieto \qquad
    Stefanos Zafeiriou \qquad
    Tolga Birdal \\
    Imperial College London
}


%

\begin{document}
\author{\authorBlock}
\maketitle
\begin{abstract}
Robustness and resource-efficiency are two highly desirable properties for modern machine learning models. However, achieving them jointly remains a challenge. In this paper, we identify high learning rates as a facilitator for simultaneously achieving robustness to spurious correlations and network compressibility. We demonstrate that large learning rates also produce desirable representation properties such as invariant feature utilization, class separation, and activation sparsity. Our findings indicate that large learning rates compare favorably to other hyperparameters and regularization methods, in consistently satisfying these properties in tandem. In addition to demonstrating the positive effect of large learning rates across diverse spurious correlation datasets, models, and optimizers, we also present strong evidence that the previously documented success of large learning rates in standard classification tasks is related to addressing hidden/rare spurious correlations in the training dataset. Our investigation of the mechanisms underlying this phenomenon reveals the importance of confident mispredictions of bias-conflicting samples under large learning rates.
\end{abstract}
    
\vspace{-4mm}\section{Introduction}
\label{sec:intro}
\begin{figure}[t]
  \begin{subfigure}{0.585\linewidth}
    \centering
\includegraphics[width=\textwidth]{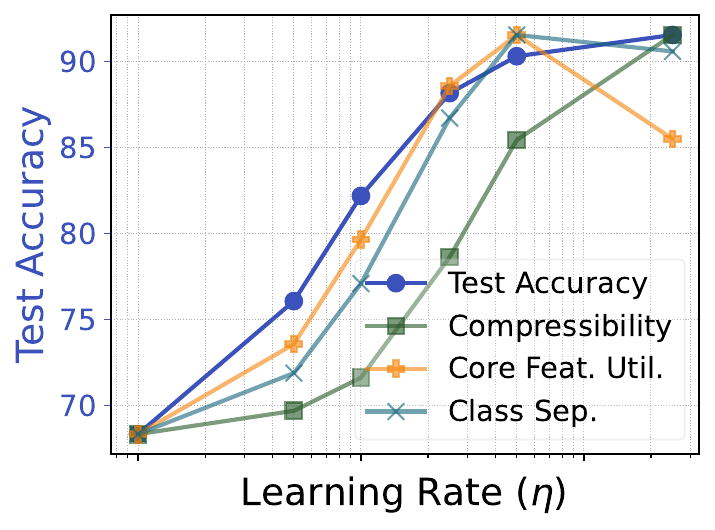}
\label{fig:teaser_cmnist_stats}
  \end{subfigure}
  \hfil
    \begin{subfigure}{0.39\linewidth}
    \centering
    \includegraphics[width=\linewidth]{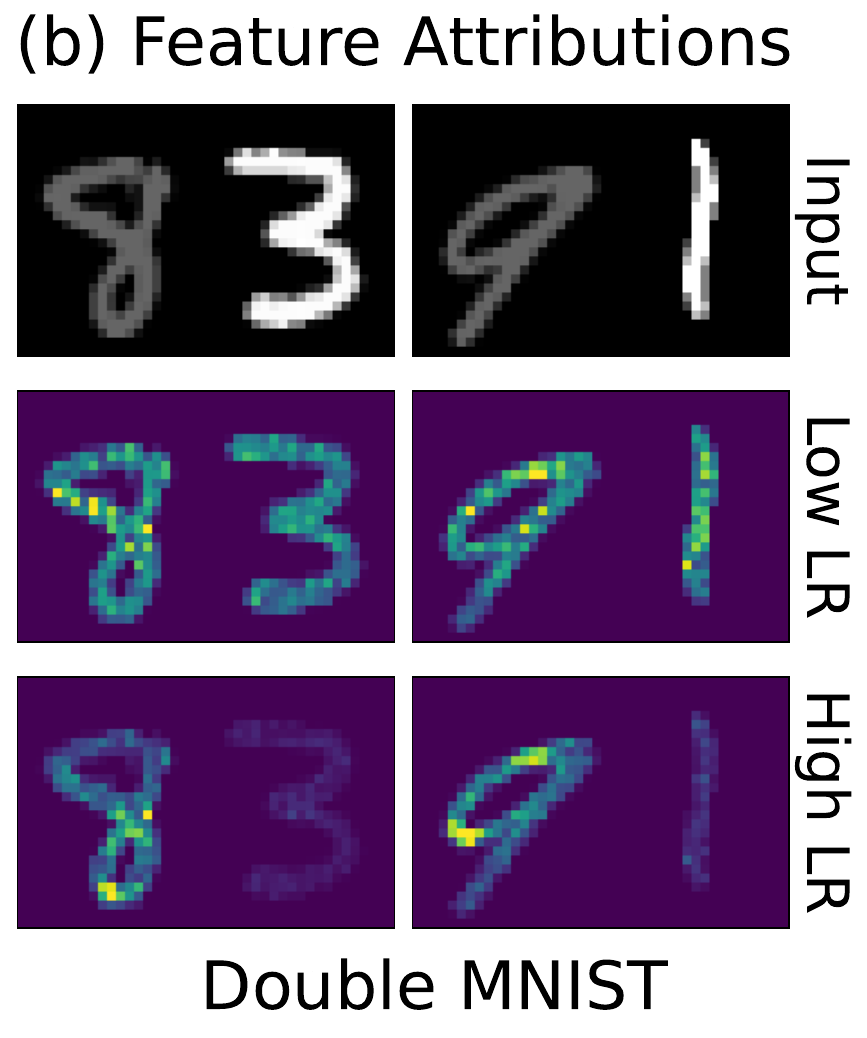}
    \vspace{0.27em}
\label{fig:teaser_cmnist_attr}
  \end{subfigure}
  
    \vspace{-2em}
    
    \begin{subfigure}{0.585\linewidth}
    \vspace{0.75em}
    \centering
\includegraphics[width=\textwidth]{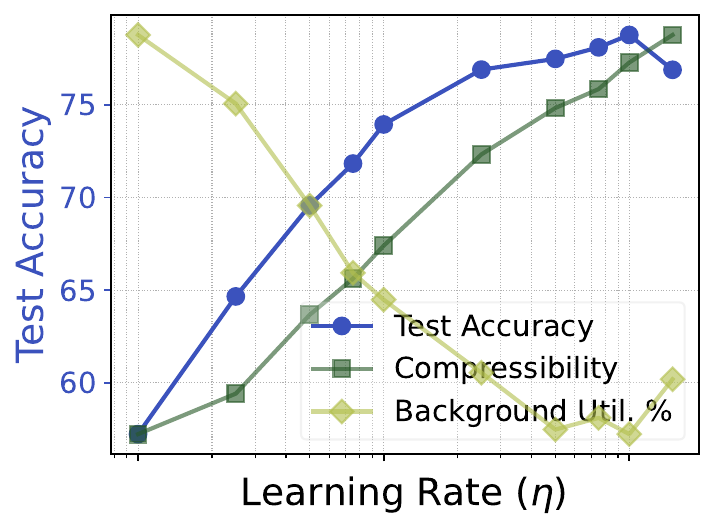}
\label{fig:teaser_cifar10_stats}
  \end{subfigure}
  \hfil
  \hspace{0.03em}
    \begin{subfigure}{0.397\linewidth}
    \centering
\includegraphics[width=\linewidth]{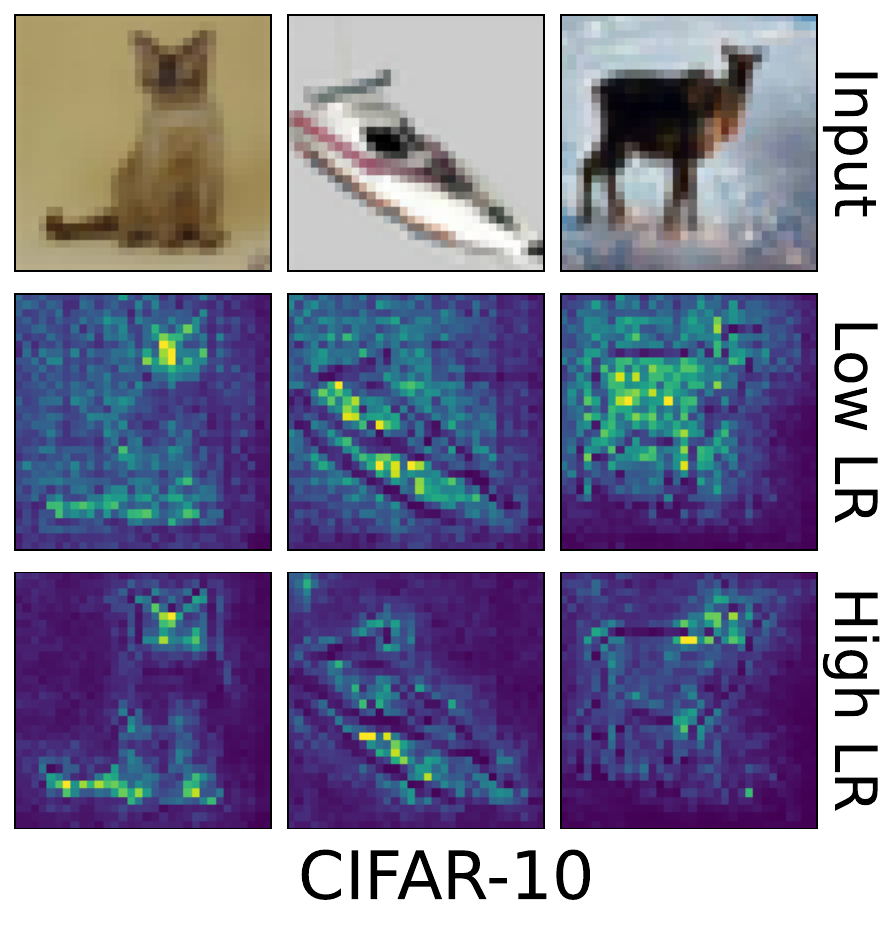}    
    \vspace{0.35em}        
\label{fig:teaser_cifar10_attr}
  \end{subfigure}
\vspace{-10mm}
    \caption{(\textbf{Top left}) When trained on the Double MNIST spurious correlation (SC) dataset with higher learning rates (LR), models tend to be more robust against SCs (higher accuracy), more compressible, and have more favorable core/invariant feature utilization and class separation. (\textbf{Top right}) Given images containing core and spurious features (bright vs. dim digit), high LR models are more likely to be attuned to the core feature vs. spurious feature, unlike low LR models. (\textbf{Bottom})
    Our findings extend to standard classification tasks since vulnerability to SCs, such as background, strongly associate with accuracy drops. All ranges normalized in $y$-axis to highlight relationship to LR. \vspace{-2em}} 
    \label{fig:teaser}
\end{figure}

The requirement to function well in novel circumstances and being resource-efficient are two central challenges for modern machine learning (ML) systems, which are expected to perform critical functions with less resources and on smaller hardware, in the face of limited access to computational resources, environmental concerns about energy consumption, and tightening bottlenecks around computational hardware \citep{constineAICompute2023, colemanAIClimate2023,fungBigBottleneck2023}. 
As ML-based technologies increasingly permeate every aspect of daily and industrial life \citep{haanMostCommon2024}, it becomes urgent to determine the inductive biases that help the models address both challenges together. 


Although no learner can be expected to perform well under arbitrary changes in the environment \citep{wolpertNoFree1997}, they are expected to do so under ``reasonable'' distribution shifts -- a capability commonly exhibited by numerous animal species. Such robustness to distributions that differ from the training data in principled ways have been studied under the umbrella term out-of-distribution (OOD) generalization \cite{wilesFineGrainedAnalysis2021}. Although scaling models and datasets, as well as targeting specific aspects of this issue have shown some success \cite{liuJustTrain2021,bubeckUniversalLaw2021}, OOD generalization remains an open problem both theoretically and in practice \cite{dherinWhyNeural2022}.

A critical obstacle for OOD generalization is the presence of spurious correlations\footnote{Unless noted otherwise, throughout the paper any references to robustness or OOD generalization will be in the context of spurious correlations.} (SCs) in training data, which can intuitively be described as the relationships between the input features and output label in the training set that do not transfer to the test set. A canonical example is highlighted by \cite{beeryRecognitionTerra2018}, where models are likely to misclassify a cow on sand as a camel, due to learning the misleading statistical association between camels and desert backgrounds in the training data. \cite{sagawaInvestigationWhy2020} show empirically and theoretically how overparameterization can cause pathological learning dynamics, amplifying the emphasis on SCs and degrading OOD performance. Similarly, \cite{lacerdaEmpiricalAnalysis2023,duShortcutLearning2023a} show that AI systems and large language models (LLMs) can be susceptible to large performance drops if  training data contains easily exploitable patterns not carrying over to new environments.

The challenge of OOD generalization becomes even more pronounced when coupled with the need for model compressibility, given the increased demands of resource-efficiency on modern ML systems. It remains unclear whether these two objectives --robustness and efficiency-- are inherently in conflict or can actually complement each other \cite{duRobustnessChallenges2023, diffenderferWinningHand2021}. 
We posit that understanding this interplay is critical for the design of next generation ML models. 


To date, explaining the counterintuitive observation that overparameterization often improves rather than harms generalization, especially under SCs, remains to be illuminated, as does understanding when and why models become compressible~\cite{dinhSharpMinima2017, aroraStrongerGeneralization2018a, nakkiranDeepDouble2020, barsbeyHeavyTails2021, birdalIntrinsicDimension2021, rajAlgorithmicStability2023}. One key factor in this puzzle is the learning rate (LR), or step size, used during gradient-based training. Various studies have highlighted the impact of large LRs on generalization \cite{liExplainingRegularization2019, lewkowyczLargeLearning2020a,mohtashamiSpecialProperties2023}, model compressibility \cite{barsbeyHeavyTails2021}, and representation sparsity \cite{andriushchenkoSGDLarge2023}.  \changedc{Recent research confirms that LR remains a crucial hyperparameter in the training of multimodal foundational models \cite{udandaraoPractitionersGuide2024}}.

In light of these, in this paper we hypothesize that \emph{in deep neural networks, learning rate plays a pivotal role in achieving robustness to SCs without sacrificing efficiency}.
We then confirm this hypothesis via extensive analyses, making the following key contributions:
\begin{enumerate}[noitemsep,leftmargin=*,topsep=0mm] 
\item \changedc{We establish that \textbf{large LRs simultaneously and consistently promote both compressibility and robustness} specifically to SCs across a wide range of architectures, datasets, and training schemes.}
\item We identify that these effects are accompanied by \textbf{improved core feature utilization, class separation}, and \textbf{compressibility} in the learned representations. 
\item We show that large LRs produce \textbf{a unique combination of the aforementioned desirable properties}, in comparison to other major hyperparameters and regularizers.
\item We provide strong evidence that the robustness against SCs that large LRs confer, \textbf{contribute} to their previously documented success in \textbf{standard generalization tasks}.
\item \changedc{Our investigation into the mechanisms reveal the importance of \textbf{confident mispredictions of bias-conflicting samples} under high LRs.}
\end{enumerate}
In \cref{fig:teaser} we share an overview of our results.

\section{Related Work}
\paragraph{Robustness to Spurious Correlations}
Overreliance on simple, easily exploitable features that have limited bearing on a test set have been pointed out by \cite{geirhosShortcutLearning2020} and \cite{shahPitfallsSimplicity2020}. \cite{sagawaInvestigationWhy2020} examine the role of overparametrization in producing models that rely on spurious features, and
\cite{nagarajanUnderstandingFailure2022} point out how two features of the data distribution (that they name geometrical and statistical skews) might lead to a max-margin classifier ending up utilizing spurious features. \cite{pezeshkiGradientStarvation2021} point out the importance of features learned in early training, where easy-to-learn, spurious features might not be replaced by better-generalizing features. Various methods have been previously proposed to alleviate this problem. 
Methods that assume access to spurious feature labels/annotations exploit this information in different ways to improve worst group or unbiased test set performance \cite{sagawaDistributionallyRobust2020a, idrissiSimpleData2022}. In the absence of group annotations, alternative methods rely on assumptions about the nature of the spurious features and the inductive biases of the learning algorithms \cite{liuJustTrain2021, puliDonBlame2023, tiwariOvercomingSimplicity2023}.

\paragraph{Inductive Bias of Large Learning Rates, Model Compressibility}
\cite{liExplainingRegularization2019} provide one of the earliest set of findings regarding the inductive bias of LRs in standard machine learning tasks, and examine how large vs. small LRs lead to qualitatively different features to be learned by the neural network.  \cite{jastrzebskiBreakEvenPoint2020} point out how LRs in early training prevents the iterates from being locked into narrow valleys in the loss landscape, where the curvature in certain directions are high, to the detriment of the conditioning of the gradient covariance matrix. \cite{ lewkowyczLargeLearning2020a, mohtashamiSpecialProperties2023} point out the importance of large LRs in early training
\cite{keskarLargeBatchTraining2017}.  
\cite{rosenfeldOutliersOpposing2024} demonstrates the crucial role of spurious / opposing signals in early training, and how progressive sharpening \cite{cohenGradientDescent2021, liAnalyzingSharpness2022} of the loss landscape in the directions that pertain to the representation of these features lead to the eventual down-weighting of such non-robust features. 
\cite{diffenderferWinningHand2021} find that lottery-ticket style pruning methods present the most advantageous trade-off between performance, robustness, and compressibility. 
To date, no study has systematically explored how LR influences robustness to SCs or how it interacts with compressibility. See our suppl. material for an extended literature review.
\vspace{-2mm}\section{Setup}\vspace{-1mm}
We consider a classification setting, where the task of a model (\emph{\eg,}, a neural network) $f$ is to predict the discrete label $y \in \mathcal{Y}$, given a $d$-dimensional input $\vc{x}\in\mathbb{R}^d$. In order to assess the quality of a neural network represented by its weights $\vc{w}$, we consider a loss function $\ell : \mathcal{Y} \times \mathcal{Y} \mapsto \R_{\geq0}$, such that $\ell({y}, f_{\vc{w}}(\vc{x}))$ measures the error incurred by predicting the label of $\vc{x}$ as $\argmax_{j} f_{\vc{w}}(\vc{x})[j]$, when the true label is $y$. We define a \emph{data distribution} $\mu_Z$ over $\Zcal$, and a \emph{training dataset} with $n$ elements, \emph{i.e.},, $S = \{\vc{z}_1,\dots,\vc{z}_n\}$, where each $\vc{z}_i := (\vc{x}_i, y_i) \stackrel{\mathclap{\mbox{\tiny{i.i.d.}}}}{\sim}  \mu_Z$. 
We then denote the \emph{population} and \emph{empirical risks} as $\Rcal(\vc{w})\coloneqq\E_{\vc{x},y}\left[\ell(y,f_{\vc{w}}(\vc{x}))\right]$ and $\widehat{\Rcal}(\vc{w})\coloneqq \frac{1}{n}\sum_{i=1}^n \ell(y_i,f_{\vc{w}}(\vc{x}_i))$. Unless otherwise noted, we utilize the (minibatch) stochastic gradient descent (SGD) algorithm for empirical risk minimization: $
\vc{w}^{t+1} = \vc{w}^t - \eta \nabla_{\vc{w}^t} \frac{1}{b}\sum_{i\in\Omega^t} \ell(y_i,f_{\vc{w}}(\vc{x}_i)),
$
where $\Omega^t$ is a randomly sampled fixed-size subset of the training set, $b := |\Omega^t|$  is batch size, and $\eta$ is learning rate (LR; aka step size).



\vspace{-1.5mm}
\subsection{Spurious Correlations}
OOD generalization describes the case whenever we have access to  $S = \{\vc{z}_1,\dots,\vc{z}_n\}$, $\vc{z}_i := (\vc{x}_i, y_i) \stackrel{\mathclap{\mbox{\tiny{i.i.d.}}}}{\sim}  \mu^{\mathrm{train}}_Z$, yet we are interested in evaluating risk under a different distribution $\Rcal(\vc{w})\coloneqq\E_{\vc{x},y \sim \mu^{\mathrm{test}}_Z}\left[\ell(y,f_{\vc{w}}(\vc{x}))\right]$, where $\mu^{\mathrm{train}}_Z \neq \mu^{\mathrm{test}}_Z$. This difference between training and test distributions is called \textit{distribution shift}. Some examples of distribution shift include \textit{subpopulation shift}, where $\mu^{\mathrm{train}}_Z(y) \neq \mu^{\mathrm{test}}_Z(y)$, and \textit{mechanism shift}, where $\mu^{\mathrm{train}}_Z(\vc{x}|y) \neq \mu^{\mathrm{test}}_Z(\vc{x}|y)$. A subtype of mechanism shift is of special importance in the literature and of particular importance for this paper: A model trained on $S$ is said to potentially be vulnerable to \textit{spurious correlation}s\footnote{As is common in the literature, here we use the term spurious \textit{correlations} to also include non-linear relationships induced by a confounder.} in the training dataset when it is possible to define a decomposition of $\vc{x} = (\vc{x}^c, \vc{x}^s)$, such that $\mu^{\mathrm{train}}_Z(\vc{x}^c|y) = \mu^{\mathrm{test}}_Z(\vc{x}^c|y)$, and $\mu^{\mathrm{train}}_Z(\vc{x}^s|y) \neq \mu^{\mathrm{test}}_Z(\vc{x}^s|y)$.
In such cases, $\vc{x}^c$ are frequently called \textit{core feature}s (aka invariant features), and $\vc{x}^s$ are frequently called \textit{spurious feature}s.

In previous research that addressed this problem, the assumed data distributions can vary considerably \cite{wilesFineGrainedAnalysis2021, namLearningFailure2020}. 
In this study we focus on one of the most canonical cases, previously called ``perception tasks'' \cite{puliDonBlame2023} or ``easy-to-learn'' tasks \cite{nagarajanUnderstandingFailure2022}, where the core features are perfectly informative with respect to the label in the training set, and spurious features imperfectly so. In such tasks, the former are construed to be more difficult to learn. We assume a separate generative model for spurious features. We define a bias label $b \in \mathcal{Y}$, as opposed to the class label $y \in \mathcal{Y}$ for core features, admitting the decomposition $\mu_Z(\vc{x}^c, \vc{x}^s, y, b) = \mu_Z(\vc{x}^c|y)\mu_Z(\vc{x}^s|b)\mu_Z(y, b)$, and while $\mu_Z^{\mathrm{test}}(y, b) = \mu_Z^{\mathrm{test}}(y) \mu_Z^{\mathrm{test}}(b)$, we have $\mu_Z^{\mathrm{train}}(y, b) \neq \mu_Z^{\mathrm{train}}(y) \mu_Z^{\mathrm{train}}(b)$. More specifically, we further assume $\mu_Z^{\mathrm{train}}(y=a, b=a) \gg \mu_Z^{\mathrm{train}}(y=a) \mu_Z^{\mathrm{train}}(b=a)$, where the mutual information between $y$ and $b$ in the training set presents a challenge for the learner. The value $\rho^{\mathrm{train}}:=1-\mu^{\mathrm{train}}_Z(y=b)$ determines the rate of \textit{bias-conflicting} examples in the training dataset, which a learner can exploit to avoid utilizing spurious features.  Although there does not exist a canonical definition of easy vs. difficult-to-learn features in the literature \cite{qiu2024complexity}, they can be construed as the difficulty of estimating $\mu^{\mathrm{train}}_Z(y|\vc{x}^c)$ vs. $\mu^{\mathrm{train}}_Z(b|\vc{x}^s)$. 

\begin{figure}[t]
    \centering
    \begin{minipage}[b]{0.235\textwidth}
        \centering
        \includegraphics[width=\textwidth]{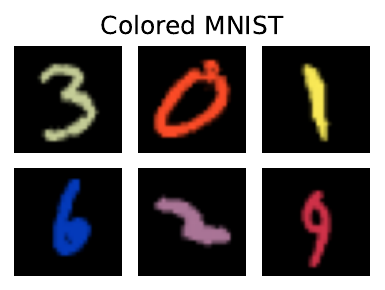}
        \label{fig:dataset_cmnist}
        \vspace{-1.5em}
    \end{minipage}
    \hfil
    \begin{minipage}[b]{0.235\textwidth}
        \centering
        \includegraphics[width=\textwidth]{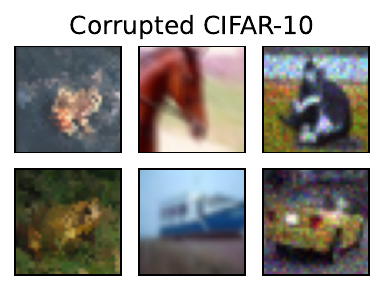}
        \label{fig:dataset_corcifar}
        \vspace{-1.5em}
    \end{minipage}
    \begin{minipage}[b]{0.235\textwidth}
        \centering
        \includegraphics[width=\textwidth]{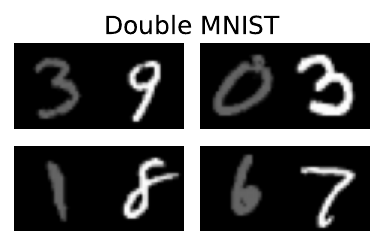}
        \label{fig:dataset_dmnist}
        \vspace{-1.5em}
    \end{minipage}
    \hfil
    \begin{minipage}[b]{0.235\textwidth}
        \centering
        \includegraphics[width=\textwidth]{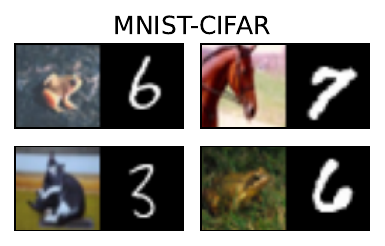}
        \label{fig:dataset_mnistcifar}
        \vspace{-1.5em}
    \end{minipage}
    \vspace{-4mm}
    \caption{Example images from semi-synthetic SC datasets.\vspace{-6mm}} 
    \label{fig:datasets}
\end{figure}
\subsection{Compressibility}
A frequently used metric for characterizing compressibility, especially for activations, is \textit{sparsity}. Given a $d$-dimensional vector $\zrv\in\R^d$ it can be defined as $\mathrm{Sparsity}(\zrv) = 1 - \|\zrv\|_0/d.$ Although this is an intuitive and useful metric, more nuanced notions of compressibility are desirable, e.g. when considering activation functions like sigma that does not output $0$ values. This is because a (deterministic or probabilistic) vector might be ``summarizable'' with a small subset of its elements regardless of the number of entries in the vector that \textit{exactly} equal $0$. Therefore, we use $(q,\kappa)$-Compressibility inspired by \cite{gribonval2012compressible}: $(q,\kappa)\mathrm{-Compressibility}(\zrv) = 1-\frac{\inf_{\|\mathbf{y}\|_0 \leq \lceil\kappa d\rceil}\|\zrv - \mathbf{y}\|_q}{\|\zrv\|_q}.$
      Intuitively, if a vector's $(2,0.1)$-Compressibility is high, this means that this vector can be approximated with little error (in an $\ell_2$ sense) by using $0.1$ of its $d$ elements.
\begin{figure*}[t]
    \centering
    \begin{subfigure}{0.20\linewidth}
      \centering
\includegraphics[width=\linewidth]{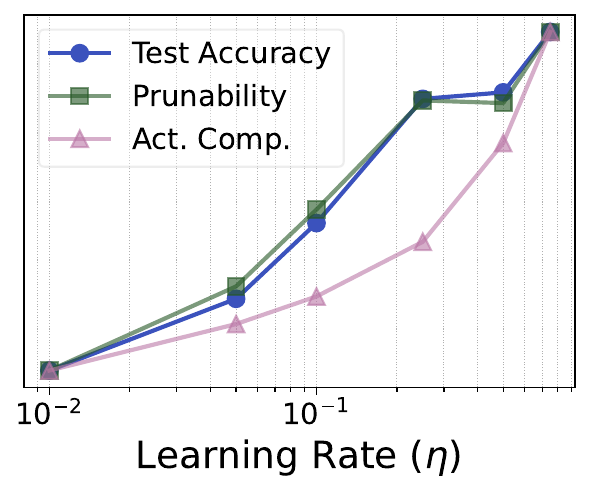}
    \end{subfigure}
    \centering
    \begin{subfigure}{0.20\linewidth}
      \centering
\includegraphics[width=\linewidth]{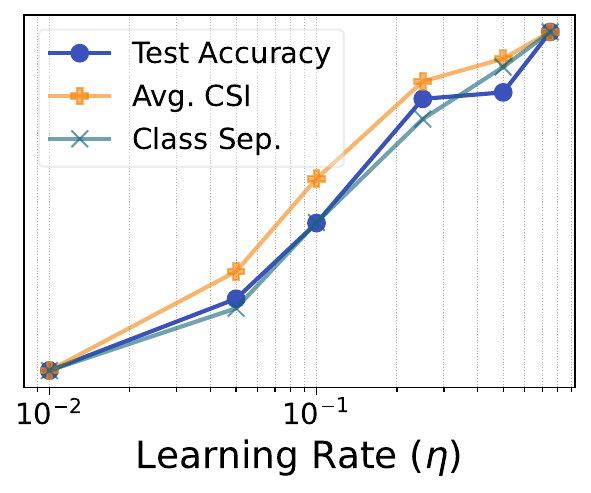}
    \end{subfigure}
    \begin{subfigure}{0.58\linewidth}
          \centering
    \includegraphics[width=\linewidth]{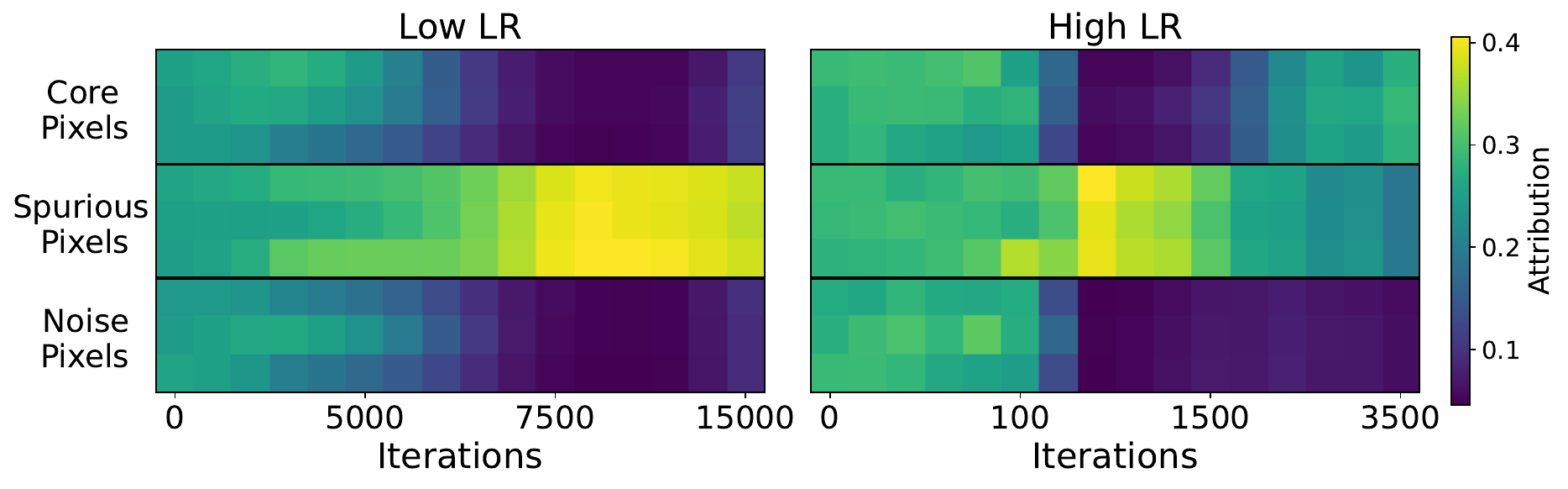}
    \end{subfigure}
    \vspace{-3mm}
    \caption{(Left) Effects of learning rate on OOD performance (unbiased test acc.), network prunability, and representation properties with the parity dataset. See suppl. material for min. and max. values. (Right) Prediction attributions (i.e. feature importance) for core, spurious, and noise pixels throughout training for low and high LR models.\vspace{-5mm}}
    \label{fig:synth-data}
\end{figure*}
 
 While sparsity or $(q,\kappa)$-Compressibility have merit as compressibility metrics, given our interest in compressibility \textit{in relation to} OOD generalization performance, quantifying network/parameter compressibility while taking the downstream effects of compression on performance into account is crucial. For this purpose we define $\kappa$-Prunability; given a predictor $f$ and its (unbiased) test accuracy $\mathrm{Acc}_{\mu^{\mathrm{test}}}(f)$ it is defined as $
    \kappa\mathrm{P}(f) = {\mathrm{Acc}_{\mu^{\mathrm{test}}}(f^{(\kappa)})}\,/\,{\mathrm{Acc}_{\mu^{\mathrm{test}}}(f)}$,
 where $f^{(\kappa)}$ corresponds to $f$ with $\kappa \in [0,1]$ of its parameters pruned (set to $0$). 
 \changedc{This measures the retention of (unbiased) test accuracy after pruning a fraction $\kappa$ of parameters. Here, we utilize structured (neuron/filter) pruning given its computationally desirable properties \cite{liPruningFilters2017b}. 
 Instead of committing to a particular $\kappa$, we will use $\sum_{\kappa \in \mathbf{k}}\kappa\mathrm{P}(f)$, $\mathbf{k} = (0.1, 0.2, \dots, 0.9)$. }
 Given our chosen metric, we will use the terms network prunability and network compressibility interchangeably throughout the rest of the text. \changedc{See suppl. material for qualitatively identical results with alternative metrics.}

\subsection{Metrics for Representation Analysis}
We now define metrics we use when analyzing learned representations beyond compressibility. Unless otherwise noted, all analyses on representations target post-activation values in the penultimate neural network layer, often referred to as \textit{learned representations} in the literature.

\paragraph{Class Separation}
We use different metrics to quantify representations' class-sensitivity. The first metric we utilize is class separation $R^2$, where we adopt the definition of
\cite{kornblithWhyBetter2021}, who investigate this notion in relation to model generalization performance as well as representations' transferability under different training losses:  
$R^2 = 1 - {\bar{d}_{\mathrm{within}}}/{\bar{d}_{\mathrm{total}}}$, 
where $\bar{d}_{\mathrm{within}} = \sum_{k=1}^{K} \sum_{m=1}^{N_k} \sum_{n=1}^{N_k} \frac{1 - \mathrm{sim}(x_{k,m}, x_{k,n})}{K N_k^2}$ and $
\bar{d}_{\mathrm{total}} = \sum_{k=1}^{K} \sum_{j=1}^{K} \sum_{m=1}^{N_j} \sum_{n=1}^{N_k} \frac{1 - \mathrm{sim}(x_{j,m}, x_{k,n})}{K^2 N_j N_k}$,
with $K$ denoting number of classes, $N_j$ number of samples with label $j$, and $\mathrm{sim}$ a similarity metric such as cosine similarity.

\paragraph{Core vs. Spurious Feature Utilization}
Though important for a classification task, class separation does not quantify sensitivity of specific neurons to classes. For this, we use \emph{class-selectivity index} ($\CSI$) \cite{leavittSelectivityConsidered2021, ranadiveSpecialRole2023a}: $\CSI = \rho_{\mathrm{max}}\frac{\pi_\mathrm{max} - \pi_\mathrm{-max}}{\pi_\mathrm{max} + \pi_\mathrm{-max} + \epsilon}.$
As in prior works, we take $\pi_\mathrm{max}$ to be largest class-conditional activation mean for a given neuron, where the means are computed over a sample of inputs, and $\pi_\mathrm{-max}$ is the mean of the remaining classes. In contrast to previous use of this metric, given the prevalence of activation sparsity in our experiments, we scale this fraction with $\rho_{\mathrm{max}}$, which corresponds to the ratio of instances belonging to the said class for which the neuron is activated. This factors in the ``coverage'' of a particular neuron for the instances belonging to a class, and prevents computing high class-selectivity for a neuron that fires rarely for the members of the said class. When computed over an unbiased test set, this metric serves to characterize a neuron's \textit{sensitivity to core features}. This is especially useful when the core features and spurious features overlap and attribution methods cannot be reliably used to investigate a specific neuron's responsiveness to core features. We average over neurons' class sensitivities (in learned representations) to characterize a network's core feature utilization.
Note that if accessible, spurious feature labels can be used to compute the counterpart bias-selectivity index (BSI).


\paragraph{Input Image Attributions}
To quantize the influence of particular input pixels or features over models' predictions, we utilize the commonly used interpretability method Integrated Gradients (IG) \cite{sundararajanAxiomaticAttribution2017b} to compute pixel-wise attributions over 10 random seeds. See suppl. material for details of IG, identical results under other interpretability methods, and \changedc{how it provides convergent results with the aforementioned CSI regarding core/spurious feature utilization}.

\vspace{-3mm}
\section{Datasets, Models, and Training Procedure}
\label{sec:datasets-models}
\vspace{-1mm}
\subsection{Datasets}
\vspace{-1mm}
We investigate the effect of LR on the model behavior on four classes of datasets: 1- Synthetic SC, 2- Semi-synthetic SC, 3- Naturalistic SC, and 4- Naturalistic classification. We describe each class of datasets below. As described in our setup, all SC datasets will involve a simple (spurious) feature that is predictive of the true label in the training set but not in the test set, and a more complex (core) feature that is predictive of the label in both training and test. 

\begin{figure*}[t]
  \centering
\begin{subfigure}{\textwidth}
  \centering
  \includegraphics[width=\textwidth]{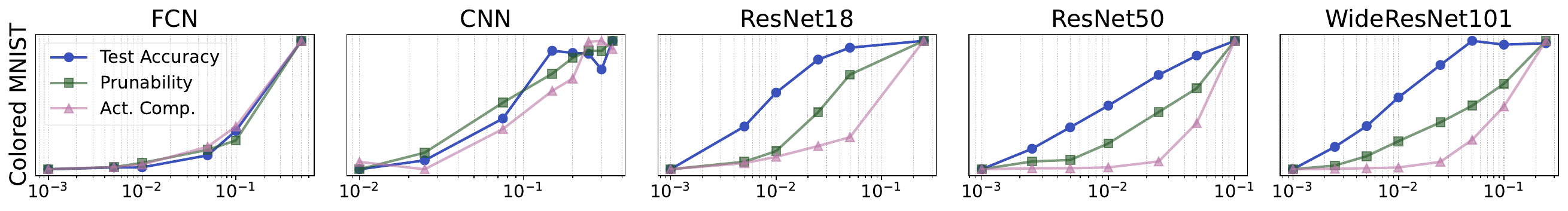}
\vspace{-6mm}
\end{subfigure}
\begin{subfigure}{\textwidth}
  \centering
\includegraphics[width=\textwidth]{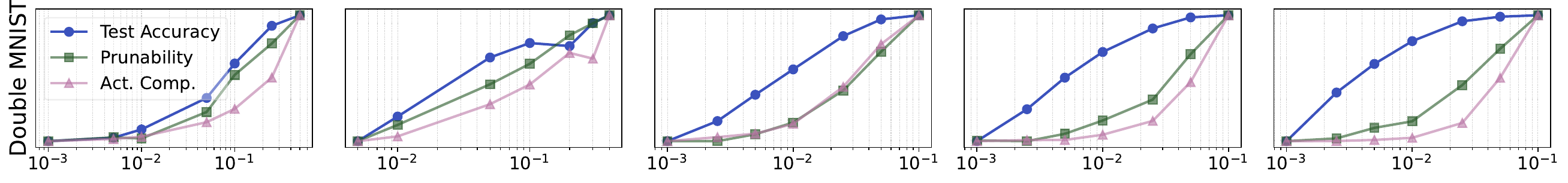}
\vspace{-6mm}
\end{subfigure}
\begin{subfigure}{\textwidth}
  \centering 
  \includegraphics[width=\textwidth]{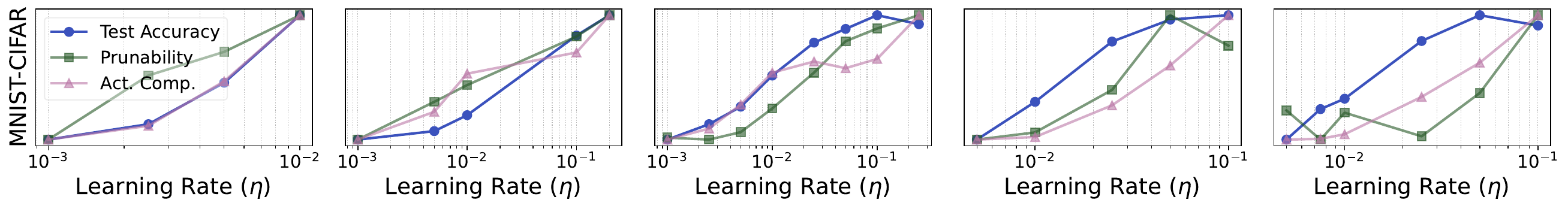}
\end{subfigure}
\vspace{-6mm}
\caption{Effects of learning rate on OOD performance (unbiased test acc.), network prunability, and representation (activation) compressibility in semi-synthetic SC data. $y$-axes are normalized within each figure for each variable, see suppl. material for min. and max. values.\vspace{-4mm}}
  \label{fig:main_results_comp}
\end{figure*}
\paragraph{Synthetic SC data}
We utilize two synthetic datasets from the literature to investigate the effects of LR on robustness to SCs and compressibility. These are the \textit{parity dataset} proposed in \cite{qiu2024complexity} and the \textit{moon-star dataset} proposed in \cite{geirhosShortcutLearning2020}. The advantage of utilizing synthetic datasets is the clear definition of \emph{simple} vs. \emph{complex} features they enable, as well as total control they afford over bias-conflicting sample ratio ($\rho$). In the parity dataset, the core and spurious features are binary vectors of size $C$ and $S$, with their parity bits (i.e. whether the vectors include odd number of 1's) corresponding to the true label $y$ vs. spurious label $b$. Setting $C > S$ leads the spurious feature to be simpler than the complex feature. The moon-star dataset is a binary classification task where the classifier is expected to distinguish moon shaped objects from star shaped objects, with the spurious feature being the quadrant of the image on which the object is located. For both datasets we set bias-conflicting sample ratios as $\rho^{\tr}=0.1$ and $\rho^{\te}=0.5$. 

\paragraph{Semi-synthetic SC data} These are arguably the most commonly used dataset types in research on SCs \cite{ sagawaInvestigationWhy2020,namLearningFailure2020, kirichenkoLastLayer2022, puliDonBlame2023,qiu2024complexity}. The reason for their popularity is the attractive combination they provide in the form of having relatively realistic inputs with a decent control over dataset properties such as bias-conflicting sample ratios and the complexity of the core vs. spurious features. We present the bulk of our results on four such datasets: Colored MNIST, Corrupted CIFAR-10, MNIST-CIFAR, and Double MNIST (see \cref{fig:datasets} for examples from each). 
The former two were proposed by \cite{namLearningFailure2020}, and have been frequently used in the literature. MNIST-CIFAR dataset is the extension of the domino dataset proposed by \cite{shahPitfallsSimplicity2020} to all 10 classes of MNIST and CIFAR-10,  while Double MNIST has been designed by the authors for this paper. Each dataset includes some combination and/or modification of the well-known image datasets MNIST \cite{lecunnMNIST1998} and CIFAR-10 \cite{krizhevskyImageNetClassification2017}. 
The core and spurious features for these datasets can be specified as digit shape vs. digit color for  Colored MNIST, object vs. corruption type for Corrupted CIFAR-10, left digit vs. (brighter) right digit for Double MNIST, and CIFAR-10 vs. MNIST targets for MNIST-CIFAR.  We set $\rho^{\tr}=0.025$ for Colored MNIST and Double MNIST datasets, and $\rho^{\tr}=0.1$ for the remaining two, given the higher baseline difficulty thereof.

\begin{figure}[t]
  \centering
\begin{subfigure}{\columnwidth}
  \centering
  \includegraphics[width=0.99\columnwidth]{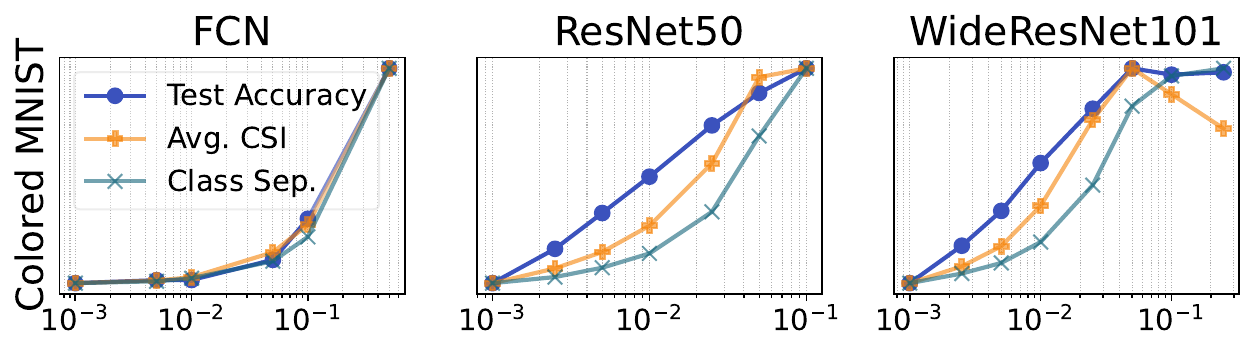}
\vspace{-1mm}
\end{subfigure}

\begin{subfigure}{\columnwidth}
  \centering
\includegraphics[width=\columnwidth]{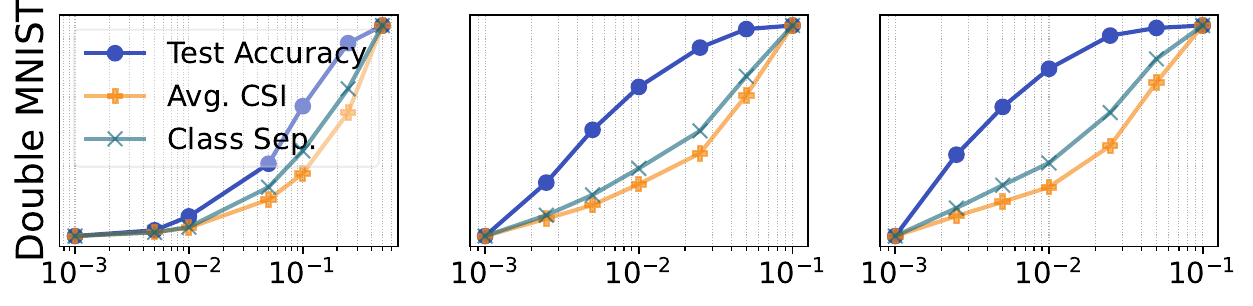}
\vspace{-6mm}
\end{subfigure}

\begin{subfigure}{\columnwidth}
  \centering
  \includegraphics[width=\columnwidth]{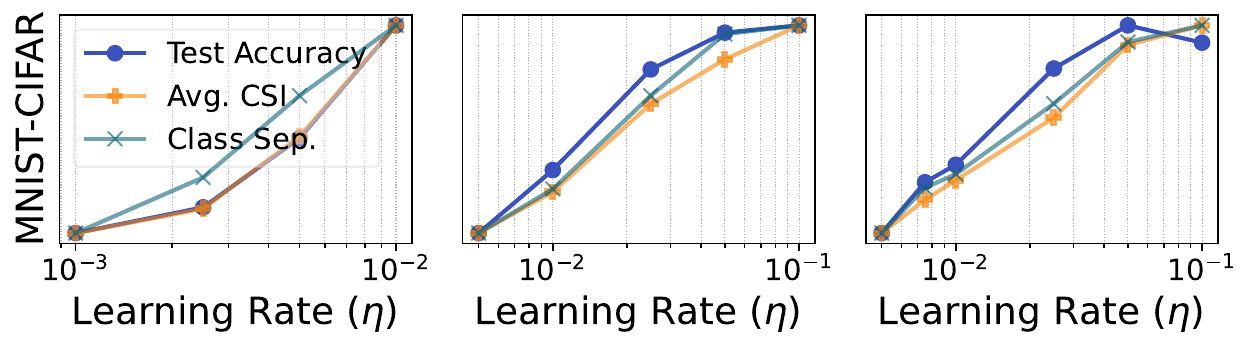}
\end{subfigure}
\vspace{-6mm}
\caption{Effects of learning rate  on OOD performance (unbiased test acc.) and representation properties in semi-synthetic SC datasets. $y$-axes are normalized within each figure for each variable, see suppl. material for min. and max. values. \vspace{-4mm}}
  \label{fig:main_results_repr}
\end{figure}
\begin{figure*}[t]
\begin{subfigure}{\textwidth}
  \centering 
\includegraphics[width=\textwidth]{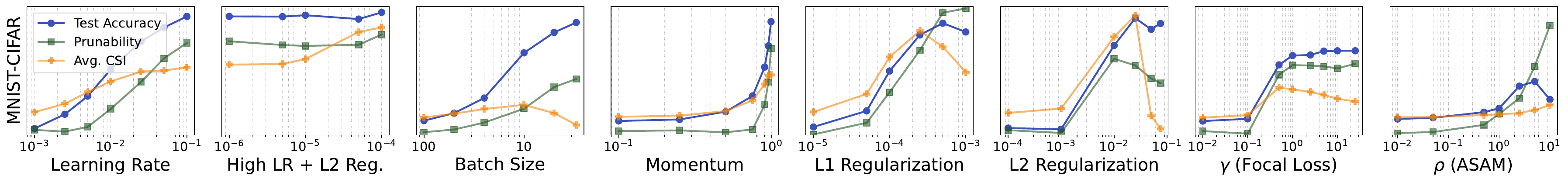}
\end{subfigure}
\vspace{-6mm}
\caption{Comparing hyperparameters, regularization methods, and losses in terms of OOD robustness, compressibility, and core feature utilization. $y$-axes are comparable within variables across rows, see suppl. material for min. and max. values.\vspace{-4mm} } 
  \label{fig:main_results_reg}
\end{figure*}
\paragraph{Naturalistic SC data} We also investigate the effect of LR on two naturalistic image classification datasets CelebA \cite{liuDeepLearning2015}  and Waterbirds \cite{wahWaterbirds2011}, both of which have been frequently used in research on SCs \cite{sagawaDistributionallyRobust2020a}. In the CelebA dataset, spurious and core features are the hair color and gender of the person in the images respectively. In the Waterbirds dataset, these correspond to the background of the pictured bird and their natural habitat (water vs. land). 
\changedc{In keeping with literature \cite{yeSpuriousCorrelations2024}, we examine worst-group test accuracy in our experiments with CelebA and Waterbirds \cite{sagawaDistributionallyRobust2020a}, while for the rest we use unbiased test set accuracy \cite{namLearningFailure2020}.}

 \begin{figure}[t]
    \centering
    \begin{subfigure}{\linewidth}
      \centering
      \includegraphics[width=\linewidth]{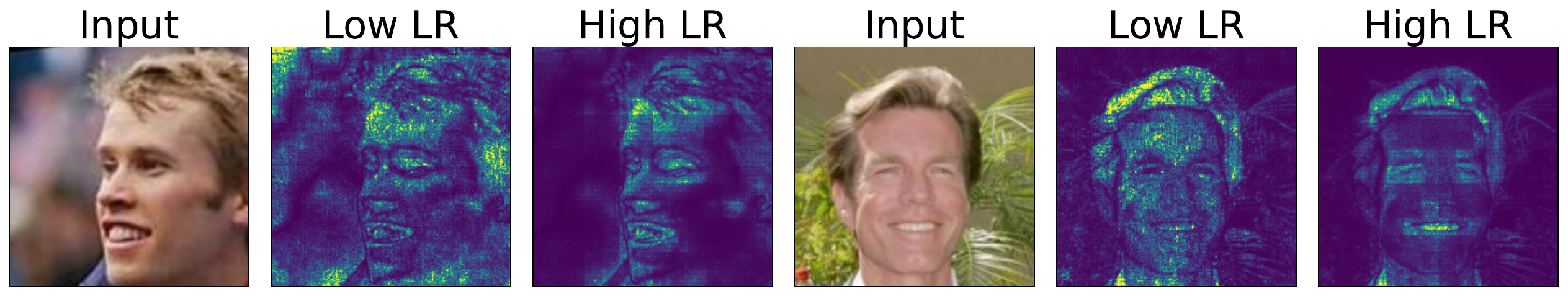}
    \end{subfigure}
    \begin{subfigure}{0.49\linewidth}\vspace{1mm}
      \centering
      \includegraphics[width=\linewidth]{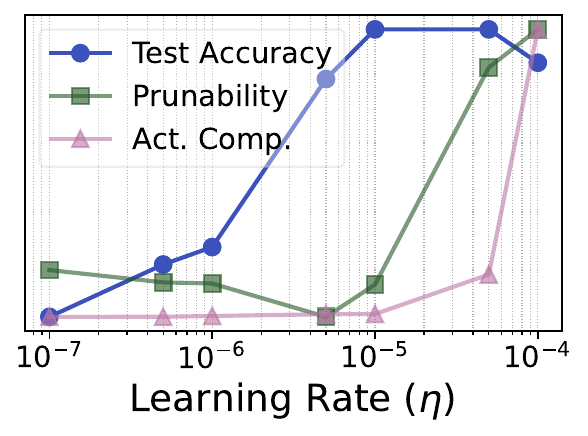}
    \end{subfigure}
    \hfill
    \begin{subfigure}{0.49\linewidth}
      \centering
    \includegraphics[width=\linewidth]{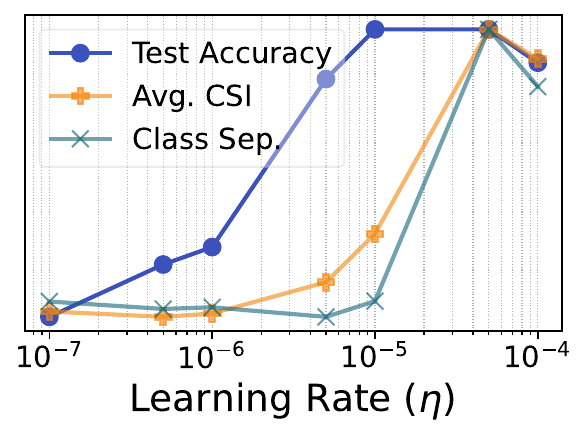}
    \end{subfigure}
    \vspace{-1mm}
    \caption{Effects of LR with a Swin Transformer and CelebA dataset on (top) model attributions and (bottom) network statistics. \vspace{-8mm}}
    \label{fig:transformer}
\end{figure}

\paragraph{Standard classification data} As we extend our inquiry from SC datasets to naturalistic classification tasks, we investigate the effects of LR on model behavior using CIFAR-10, CIFAR-100, and ImageNet-1k datasets \cite{krizhevskyImageNetClassification2017,dengImageNetLargescale2009}. The former two include $32 \times 32$ images of 10 and 100 classes of objects or animals, respectively. We use the traditional train-test splits in our experiments, consisting of 50000 and 10000 samples. ImageNet-1k consists of 1,331,167 color images scaled to $256 \times 256$ belonging to 1000 classes. We use the traditional train-validation split with 1,281,167 and 50000 samples each.

\vspace{-1mm}\subsection{Architectures, Training, \& Implementation}\vspace{-1mm}
\label{sec:arch_training_impl}
To investigate the diversity of contexts in which LR has the aforementioned effects, we utilize a variety of architectures.
These include fully connected networks (FCN) with ReLU activation, convolutional neural networks (CNN; a slightly simplified variant of VGG11 \cite{simonyanVeryDeep2015}), ResNet18 and ResNet50 \cite{heDeepResidual2016}, and Wide ResNet-101-2 \cite{zagoruykoWideResidual2017}. We further include a vision transformer model, namely the Swin Transformer by \cite{liuSwinTransformer2021}.
Unless otherwise stated, all models are trained with a constant LR until 100\% accuracy with no explicit L1/L2 regularization. Batch sizes are set to 16 for CelebA, Waterbirds, and ImageNet experiments, and 100 for the remaining experiments. An exception to the training scheme above is the training of the Swin Transformer on CelebA and Waterbirds datasets, where we utilize an AdamW optimizer combined with an ImageNet-1K pretraining initialization, to investigate the effects of LRs under more realistic training schemes. Our suppl. material provides  additional details for our experiments and implementation, \changedc{while showing that using different convergence criteria, optimizers (Adam  \cite{kingmaAdamMethod2015}, PSGD \cite{liPreconditionedStochastic2018}), or LR schedulers (Cosine annealing \cite{loshchilovSGDRStochastic2017}, WSD \cite{huMiniCPMUnveiling2024}) leads to qualitatively identical, and sometimes stronger results}. Unless otherwise noted, the results presented are the average of experiments run with at least 3 different random seeds. \changedc{Since high LRs can lead to divergence and/or drastic performance loss at the extremes, we utilize LRs up to the point of divergence or drastic performance loss ($>25\%$ in test accuracy) among any one of the seeds.} Lastly, since our results involve the examination of a number of variables simultaneously, in figures $y$-axes are normalized for each variable to highlight the effects of LR, and value ranges for all variables are separately provided in the supp. material for readability. \changedc{See our public repository\footnote{\url{https://github.com/mbarsbey/spurious_comp}} for additional implementation details.}

\vspace{-2mm}\section{Results}\vspace{-2mm}

\label{sec:results}



\paragraphQ{How does LRs affect robustness to SC and compressibility}
We first investigate the effects of LR on compressibility and robustness to SCs, using the parity dataset in \cref{fig:synth-data}, with a fully connected network (FCN) with 3 hidden layers of width 200. The results show a clear effect of LR on robustness to SC and compressibility: Robustness and compressibility increases as a function of LR. The same figure also shows how the attributions to core, spurious, and noise pixels/features evolve for low and high LR models. Moving on to semi-synthetic SC datasets. \cref{fig:main_results_comp} shows that across datasets and architectures, larger LRs lead to increased robustness to SCs, strongly supporting our central hypothesis.
This increase in robustness is also accompanied by increased network and activation compressibility. Recall that the notion of \kprune we employ here is computed under an unbiased test set, emphasizing the high LRs effect on achieving these aims jointly. See suppl. material for additional details and similar results with moon-star and Corrupted CIFAR-10 datasets.
   
\begin{figure*}[t]
    \centering
    \begin{subfigure}{0.32\textwidth}
  \centering
\includegraphics[width=\linewidth]{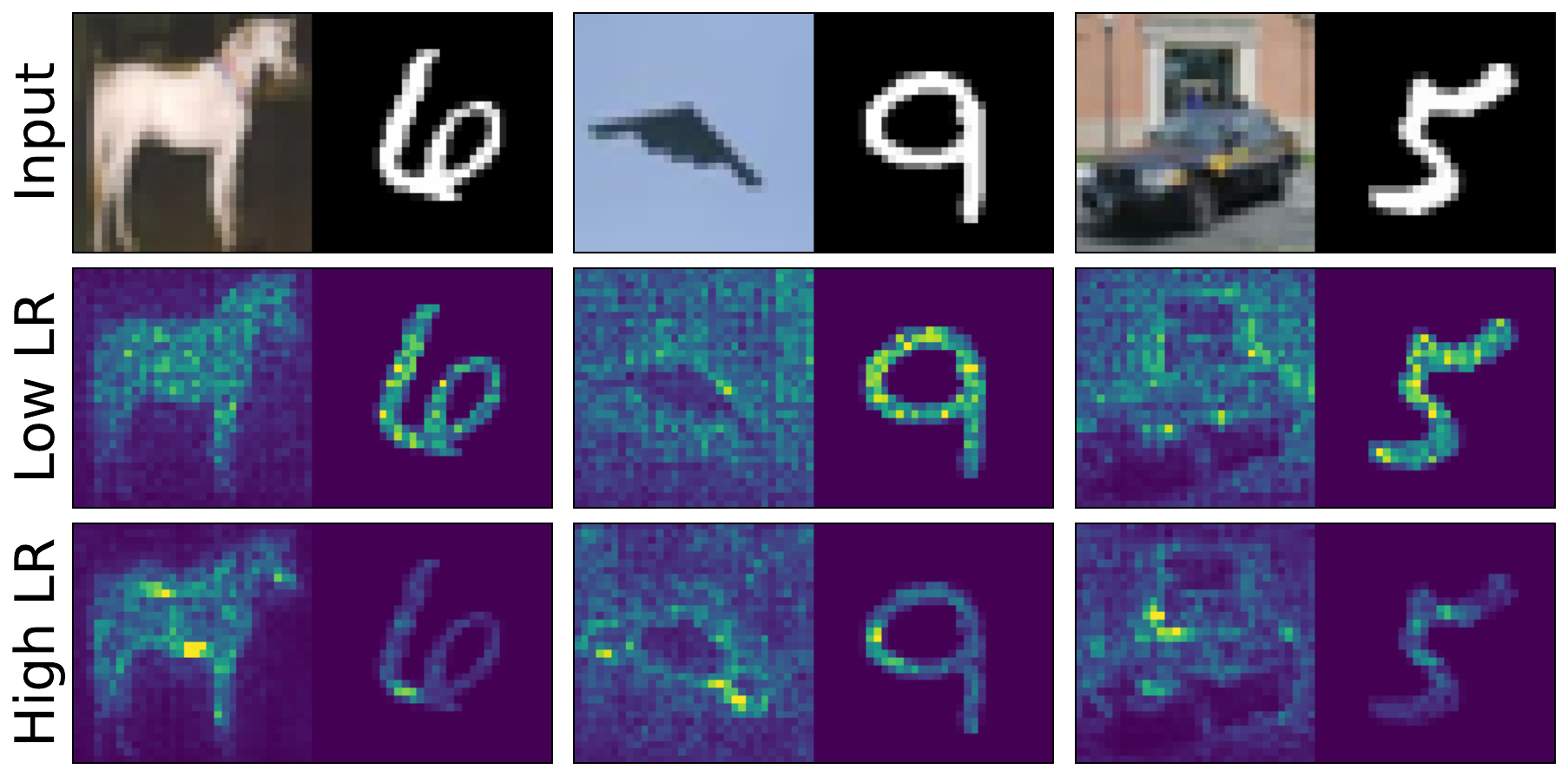}
\caption{MNIST-CIFAR}
\label{fig:attr_mnistcifar}
\end{subfigure}
        \hfil
    \begin{subfigure}{0.32\textwidth}
  \centering
\includegraphics[width=\linewidth]{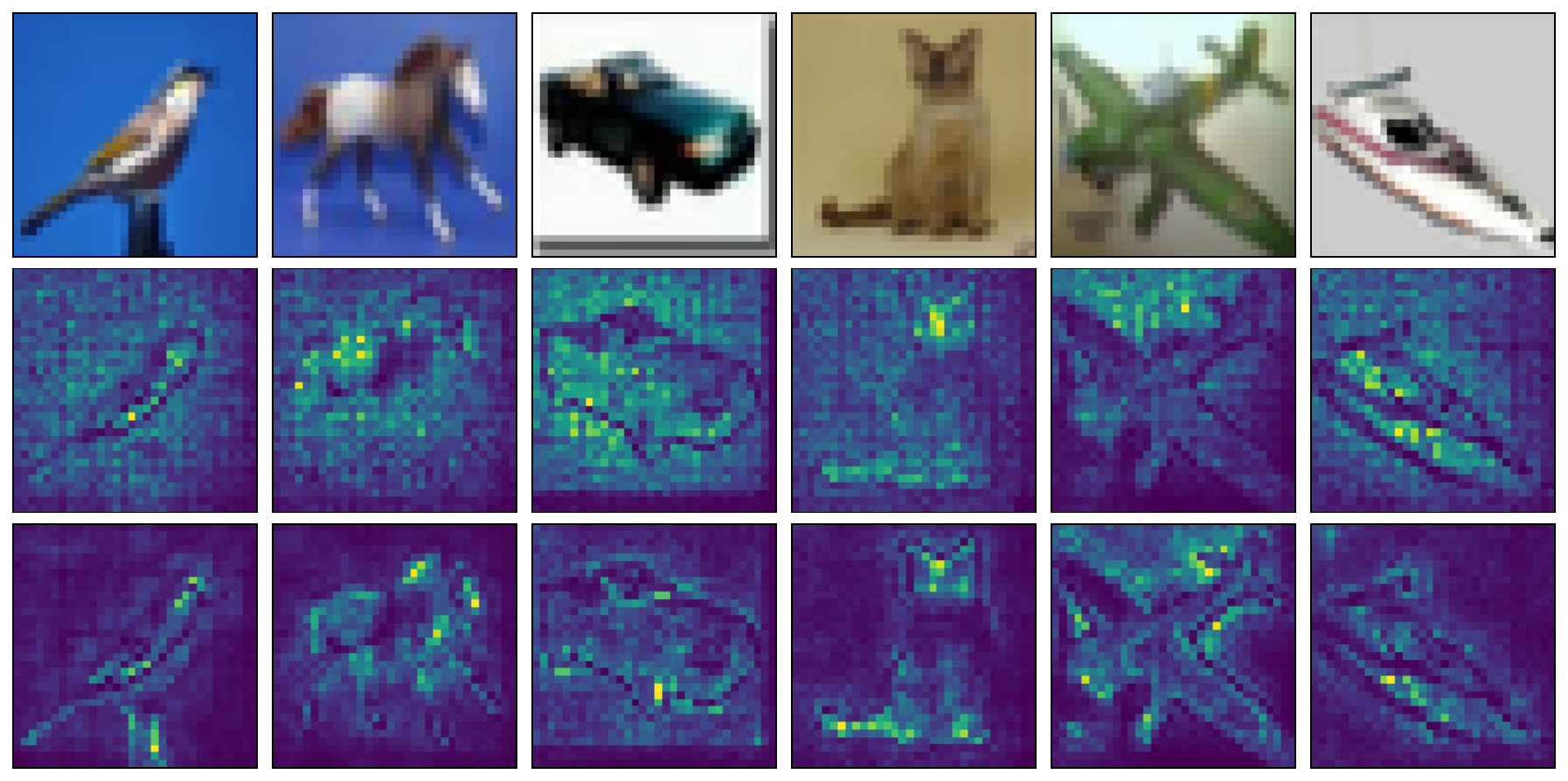}
\caption{CIFAR-10}
\label{fig:attr_cifar10}
\end{subfigure}   
        \hfil 
\begin{subfigure}{0.32\textwidth}
  \centering
\includegraphics[width=\linewidth]{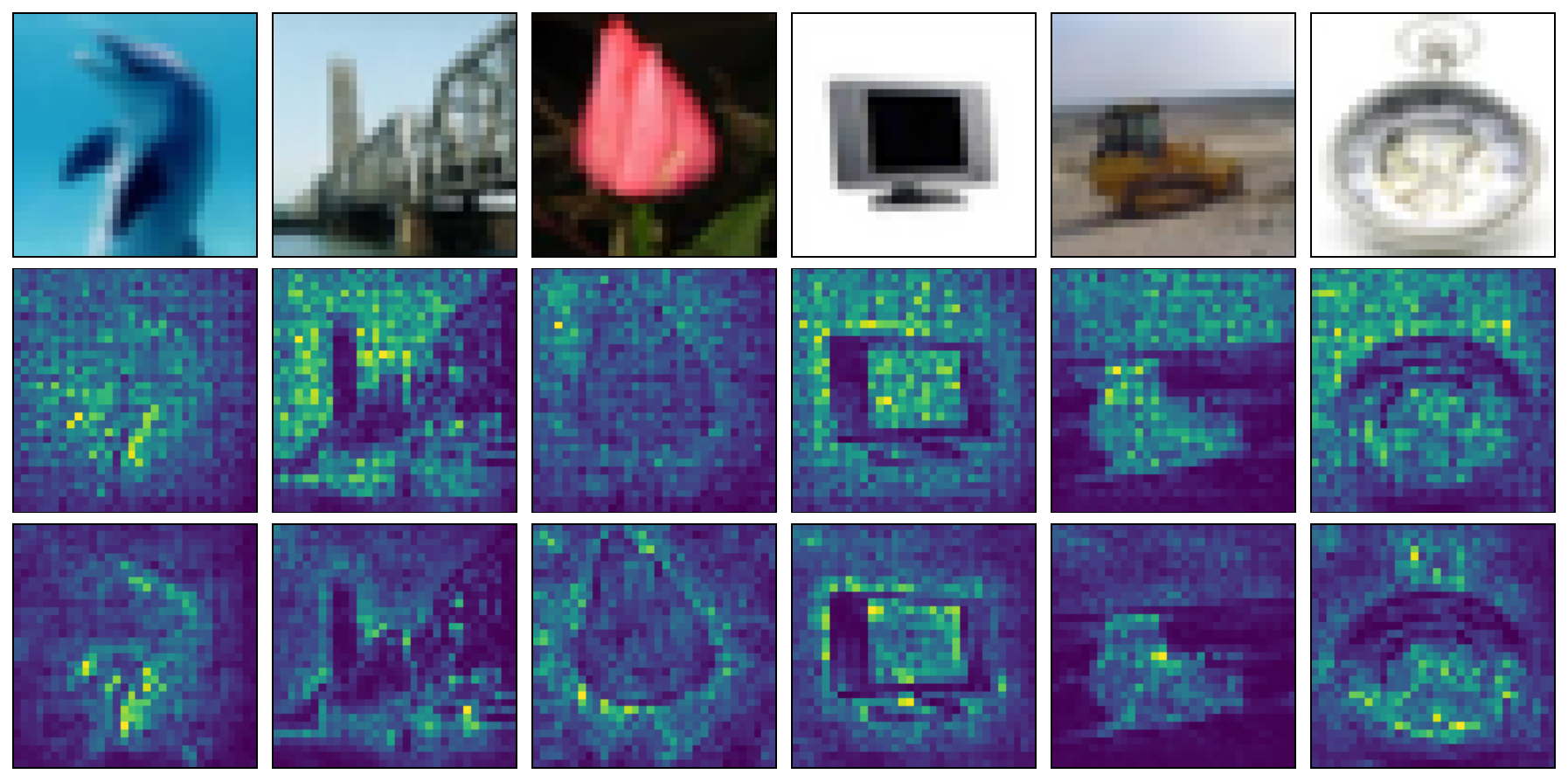}
\caption{CIFAR-100}
\label{fig:attr_cifar100}
\end{subfigure}
\vspace{-3mm}
    \caption{Attributions of a trained ResNet18 model on different datasets. For all datasets, top row includes original images; middle and bottom rows includes attributions of a model trained under low vs. high learning rate respectively. \vspace{-2mm}}  
    \label{fig:attr_demonst}
\end{figure*}

\paragraphQ{How does LR affect learned representations}
Our results regarding representation properties induced by high LRs are presented in \cref{fig:main_results_repr}. The results demonstrate that the increase in unbiased test performance and compressibility is accompanied by an increase in core feature utilization and class separation in the learned representations. Given recent results demonstrating learned representations' insensitivity to explicit SC interventions~\cite{kirichenkoLastLayer2022}, it is important to highlight that the positive effects of high LRs are reflected in the learned representations themselves.

\paragraphQ{Do the observed effects of LR generalize to larger models}
To make sure our results are not limited to small models and (semi-)synthetic data, we investigate the effect of LRs using a larger, transformer architecture and naturalistic data. As described in~\cref{sec:arch_training_impl}, we train a Swin Transformer on the CelebA dataset, and present its effects on worst-group performance, compressibility, and representation properties in \cref{fig:transformer}. The results show that high LRs co-achieve robustness and compressibility in this scenario as well, further supporting our motivating hypothesis. See suppl. material for similar results on Waterbirds dataset.

\begin{figure}[t]
\vspace{-4mm}
        \centering
        \includegraphics[width=\columnwidth]{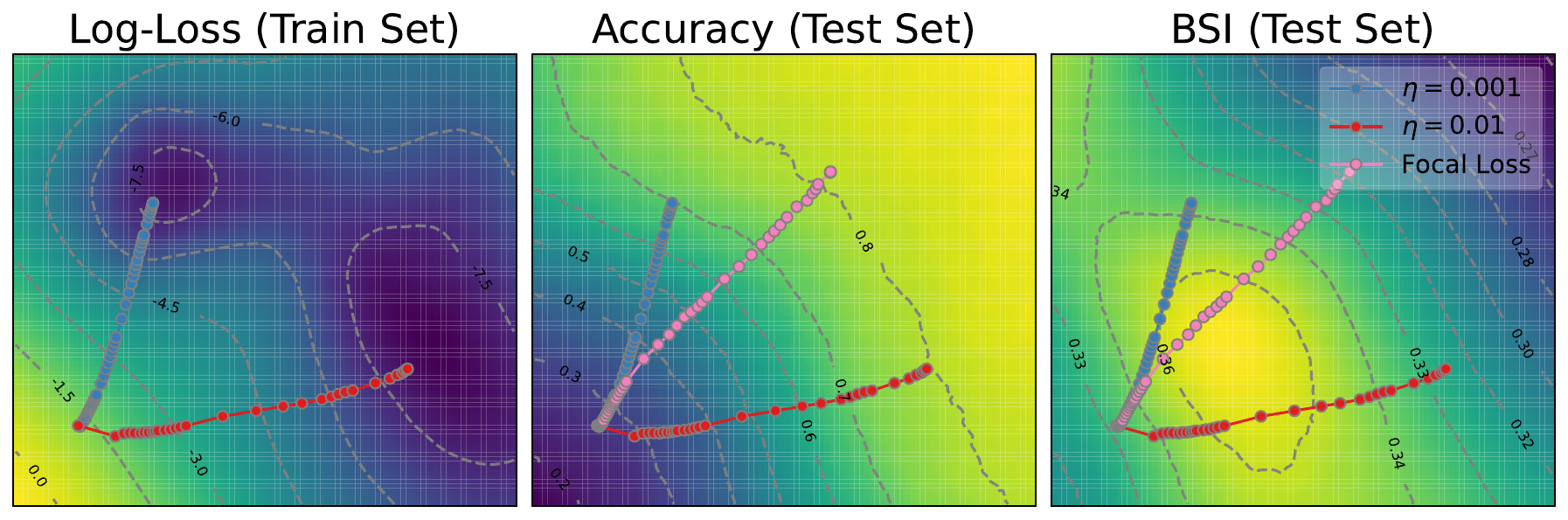}
        \label{fig:resnet18_train_test_landscape}
        \vspace{-6mm}
    \caption{Training loss, unbiased test acc., and spurious feature utilization landscapes for models with different LRs and FL.\vspace{-3mm}} 
    \label{fig:loss_landscape}
\end{figure}

\paragraphQ{How does LR behave when compared to and combined with other hyperparameters}
We then move on to investigate how large LRs compare to other fundamental hyperparameters in a machine learning pipeline.
The comparison set includes batch size, momentum, L1 and L2 regularization of the parameters, an alternative loss function designed for better exploitation of rare difficult examples (focal loss; FL \cite{linFocalLoss2018}), and an optimization procedure for finding well-generalizing (wide) minima in the training loss landscape (adaptive sharpness aware minimization; ASAM \cite{kwonASAMAdaptive2021, foretSharpnessAwareMinimization2021}). We repeat our experiments by varying the each hyperparameter in question (In the case of FL and ASAM these correspond to their central hyperparameters respectively, $\gamma$ and $\rho$). \cref{fig:main_results_reg} demonstrates our results using a ResNet18 model: High LRs consistently facilitate a combination of OOD generalization, network compressibility, and core feature utilization, either on par with or surpassing other interventions. We also observe that other interventions (e.g. L1/L2 regularization) can be combined with high LRs to achieve even better trade-offs. Importantly, in \cref{fig:loss_landscape} we highlight how different methods create robustness through different dynamics in the loss and feature utilization landscape, as discussed further in suppl. material, where we also share further details and results with other models and datasets, as well as more background on FL and SAM.


\begin{figure}[t]
\vspace{-1mm}
    \centering
        \begin{subfigure}{0.49\linewidth}
  \centering
    \includegraphics[width=\linewidth]{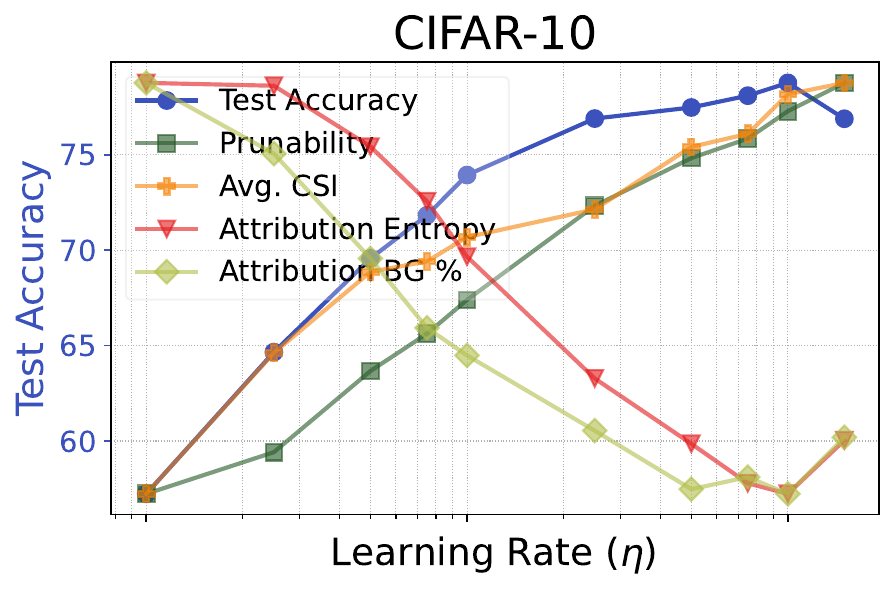}
\end{subfigure}
        \begin{subfigure}{0.49\linewidth}
  \centering
            \hfil
    \includegraphics[width=\linewidth]{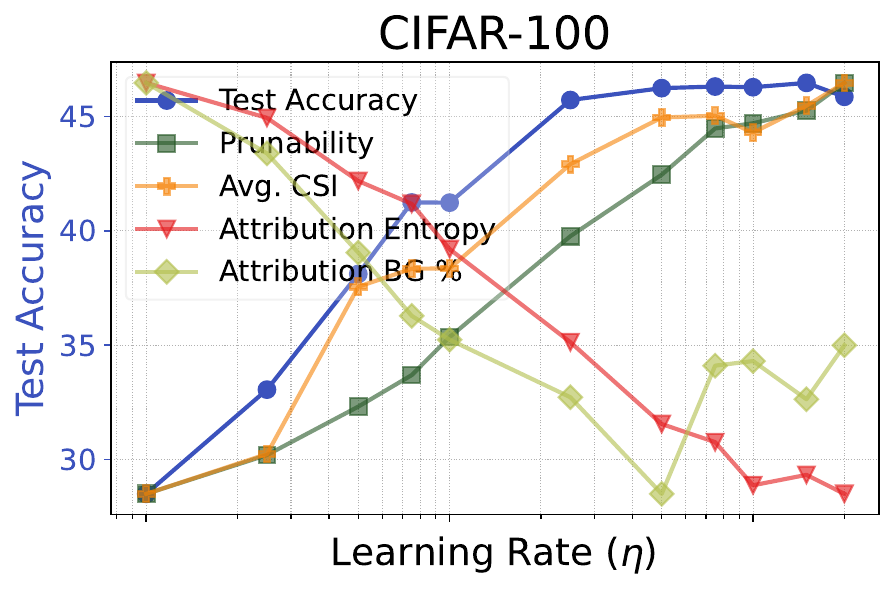}
\end{subfigure}
\vspace{-3mm}
    \caption{Higher LRs yield favorable generalization, compressibility, and core feature utilization for CIFAR datasets.\vspace{-4mm}}
    \label{fig:attr_quant}
\end{figure}

\paragraphQ{Does LR have similar effects on generalization in standard classification tasks}
We next investigate whether the robustness against SCs afforded by high LRs could help explain their success in generalization under standard classification tasks \cite{liExplainingRegularization2019, martinTraditionalHeavyTailed2019a, barsbeyHeavyTails2021}. A positive answer to this question would imply that in realistic machine learning settings, even a seemingly IID generalization task can involve OOD generalization subtasks due to implicit and/or rare SCs in the training distribution (see e.g. \cite{rosenfeldOutliersOpposing2024}).
We start our analyses with CIFAR-10 and CIFAR-100, and then move on to ImageNet-1k.

\noindent\textbf{To qualitatively investigate} whether the generalization advantage of large LRs in naturalistic classification tasks pertain to SCs, under a ResNet18 model, we examine input attributions for samples most likely to be predicted correctly by high LR models compared to low LRs (see suppl. material for details), presented in \cref{fig:attr_demonst}. Our results show that high LR models are likelier to focus on core vs. spurious features not only in the semi-synthetic MNIST-CIFAR dataset, but also in the CIFAR-10 and CIFAR-100 datasets as well. In both these datasets, low LR models focus much more on backgrounds and/or the color/texture of the foreground object, whereas the high LR models are likelier to focus on the object contours. The results strongly suggest that high LRs might be obtaining increased generalization through robustness to hidden SCs in the training data. We thus proceed to test this hypothesis quantitatively. 

\noindent\textbf{Quantitatively measuring } SCs in naturalistic datasets remains an open problem~\cite{yeSpuriousCorrelations2024}.
Nevertheless, here we develop two complementary metrics to assess the effects of LR on spurious feature utilization. The first, \textit{attribution entropy} relies on computing the entropy of a normalized input attribution map  since exploiting background information or object textures/colors would lead to more dispersed attribution maps as opposed focusing on contours of an object. We complement this relatively simple metric with a more targeted one in \textit{background attribution percentage}: Utilizing 
the DeepLabV3 segmentation algorithm to extract background maps \cite{chenRethinkingAtrous2017}, we compute the percentage of attributions that fall within vs. outside these image background segments. We present the results in~\cref{fig:attr_quant}. The results not only replicate the benevolent effects of large LRs seen in previous datasets, they also show that higher LR models are less likely to use background/texture information.

\noindent\textbf{ImageNet-1k.} We lastly examine the effects of large LRs on models trained on ImageNet-1k dataset. The results presented in \cref{fig:imagenet} clearly show that the previously observed effects of LR extend to larger datasets, furthering the implications of our findings to realistic scenarios.
\vspace{-0.35em}
\section{On the mechanism of large LRs}
\label{sec:analysis}
\begin{figure}[t]
    \centering
        \begin{subfigure}{0.48\linewidth}
  \centering
    \includegraphics[width=\linewidth]{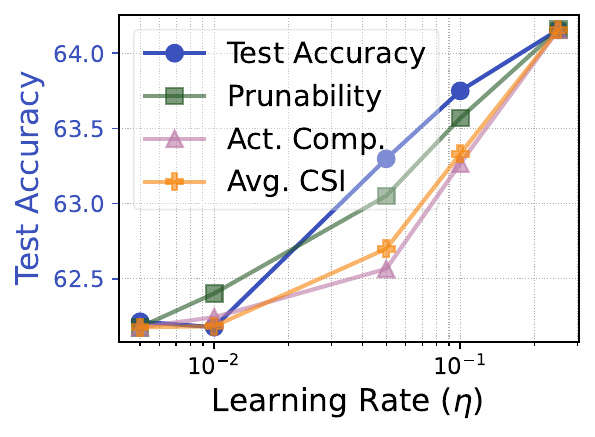}
    
\end{subfigure}
        \begin{subfigure}{0.50\linewidth}
  \centering
    \includegraphics[width=\linewidth]{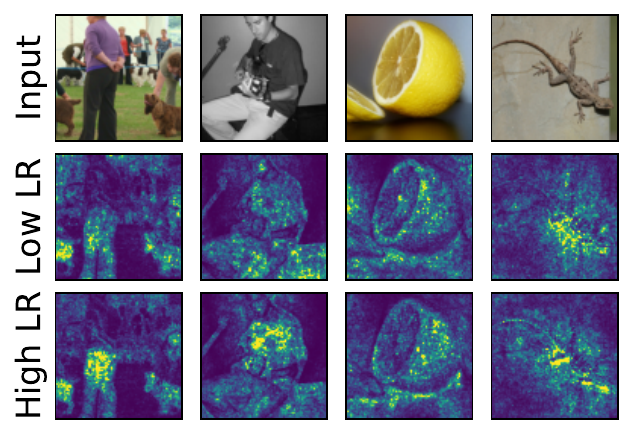}
\end{subfigure}
 \vspace{-2mm}
    \caption{ Effects of LR with ImageNet-1k dataset and ResNet18 model on model statistics and attributions.\vspace{-3mm}}
    \label{fig:imagenet}
\end{figure}
 \begin{figure}[t]
	\centering
    	\begin{subfigure}{0.325\linewidth}
		\centering
        \includegraphics[width=\linewidth]{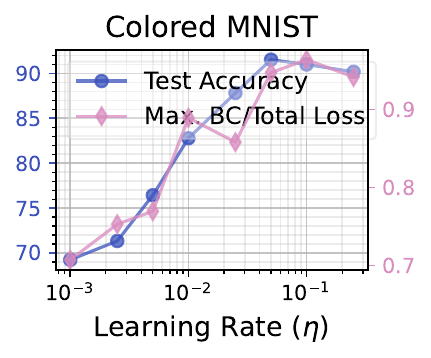}
	\end{subfigure}
    \hspace{-2mm}
    \begin{subfigure}{0.325\linewidth}
		\centering
        \includegraphics[width=\linewidth]{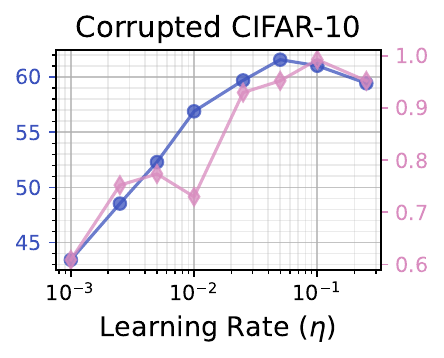}
	\end{subfigure}
    \hspace{-2mm}
    	\begin{subfigure}{0.325\linewidth}
		\centering
        \includegraphics[width=\linewidth]{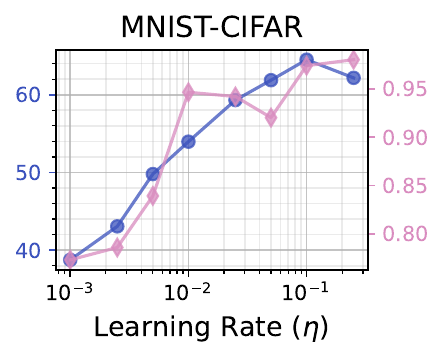}
	\end{subfigure}
	\vspace{-3mm}
	\caption{Unbiased test accuracy strongly correlates with maximum bias-conflicting loss ratio during training. \vspace{-5mm}}
	\label{fig:analysis_max_bcloss}
\end{figure}
While plausible mechanisms through which large LRs create compressibility have been proposed \cite{gurbuzbalabanHeavyTailPhenomenon2021b}, those through which large LRs can create robustness against SCs are much less explored, which we now investigate.
Previous work indicates that simple spurious features are learned earlier than more complex core features \cite{pezeshkiGradientStarvation2021, qiu2024complexity}, regardless of LR \cite[Theorem 4.1]{yangIdentifyingSpurious2024}: this produces a phase in learning where bias-aligned (BA) samples are predicted correctly, while bias-conflicting (BC) samples are misclassified. We find that at this stage, another effect of large LRs, namely fast \textit{norm growth} of model parameters and logits, become important\footnote{While norm growth under large LRs is well-established by previous research \cite{salimansWeightNormalization2016, merrillEffectsParameter2023, lyleNormalizationEffective2024}, its effects are underexplored in robust generalization.}. This is because with cross-entropy (CE) loss, up-scaled logits under high LRs lead to \textit{confident mispredictions of BC samples}. Due to the nonlinearty of CE, the mini-batch loss (and thus the gradient) is then increasingly dominated by mispredicted BC samples, corresponding to an implicit \textit{reweighting of the dataset in favor of BC samples}. We now formalize this statement. 


\vspace{-2mm}

\begin{prop}
	\label{thm:punish_spur}
    Let $f_{\w}$ be a predictor in a binary classification task, trained under the cross-entropy loss $\ell$. Let $\Omega=\{\z_1 \dots \z_{|\Omega|}\}$ be a training mini-batch such that $(\x_i, y_i, b_i)  \stackrel{\mathclap{\mbox{\tiny{i.i.d.}}}}{\sim} \mu_Z^{\mathrm{train}}$. Let $\Omega_{\mathrm{bc}}:=\{\z_i \in \Omega| y_i\neq b_i \}$ and $\Omega_{\mathrm{ba}}:=\{\z_i \in \Omega| y_i=b_i \}$ represent the bias-conflicting and bias-aligned subsets of $\Omega$. If $f_{\w}$ predicts according to the spurious decision rule, \ie $b = \argmax_{j} f_{\w}(\x)[j]$, then
    for some $\alpha > 0$, as the logit-scaling factor $k \to \infty$    \vspace{-2mm}
    \begin{align}
    	\label{eq:loss_scaling}
    	\frac{\sum_{\x, y \in \Omega_{\mathrm{bc}}} \ell(y, k f_{\w}(\x))}{\sum_{\x, y \in \Omega_{\mathrm{ba}}} \ell(y, k f_{\w}(\x))} = \mathcal{O}(ke^{\alpha k}).
    \end{align}
    \vspace{-5mm}
\end{prop}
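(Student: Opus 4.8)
The plan is to reduce the binary cross-entropy loss to a one-dimensional expression in the logit margin, use the spurious decision rule to fix the sign of that margin on each subset, and then compare the resulting asymptotics. Writing the two logits of a sample as $f_{\w}(\x) = (f_{\w}(\x)[0], f_{\w}(\x)[1])$ and letting $m(\x) := |f_{\w}(\x)[0] - f_{\w}(\x)[1]| \ge 0$ denote the unsigned logit margin, the softmax cross-entropy of a scaled logit vector collapses to $\ell(y, k f_{\w}(\x)) = \log\!\bigl(1 + e^{\pm k\, m(\x)}\bigr)$, where the sign is $-$ when $y$ is the predicted (argmax) class and $+$ when it is not. First I would record this identity and then read off the sign on each subset from the hypothesis $b = \argmax_{j} f_{\w}(\x)[j]$: on $\Omega_{\mathrm{ba}}$ we have $y = b$, so the true label \emph{is} the predicted class and the sign is $-$; on $\Omega_{\mathrm{bc}}$ we have $y \neq b$, so the true label is \emph{not} the predicted class and the sign is $+$. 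Throughout I assume the argmax is attained uniquely, so $m(\x) > 0$ for every sample, and that $\Omega_{\mathrm{ba}} \neq \emptyset$ so the denominator does not vanish.

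Next I would bound the two sums separately. For the numerator (bias-conflicting), the inequality $\log(1 + e^{x}) \le x + \log 2$ valid for $x \ge 0$ gives $\sum_{\Omega_{\mathrm{bc}}} \log(1 + e^{k\,m(\x)}) \le k \sum_{\Omega_{\mathrm{bc}}} m(\x) + |\Omega_{\mathrm{bc}}| \log 2 = \mathcal{O}(k)$, so the misclassified losses grow at most linearly in $k$. For the denominator (bias-aligned), every term $\log(1 + e^{-k\,m(\x)})$ decays exponentially, and the sum is dominated by the sample with the smallest margin. Setting $\alpha := \min_{\x \in \Omega_{\mathrm{ba}}} m(\x) > 0$ and using $\log(1 + u) \ge u/2$ for $u \in [0,1]$, I would lower-bound the denominator by $\tfrac{1}{2} e^{-k \alpha}$ for all sufficiently large $k$.

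Combining the two one-sided bounds then yields
\begin{equation*}
\frac{\sum_{\Omega_{\mathrm{bc}}} \ell(y, k f_{\w}(\x))}{\sum_{\Omega_{\mathrm{ba}}} \ell(y, k f_{\w}(\x))} \le \frac{k \sum_{\Omega_{\mathrm{bc}}} m(\x) + |\Omega_{\mathrm{bc}}| \log 2}{\tfrac{1}{2}\, e^{-k\alpha}} = \mathcal{O}\!\bigl(k\, e^{\alpha k}\bigr),
\end{equation*}
which is exactly the claimed rate, with the constant identified concretely as $\alpha = \min_{\x \in \Omega_{\mathrm{ba}}} m(\x)$, the minimal bias-aligned logit margin.

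The calculations are routine; the conceptual content sits entirely in the sign bookkeeping of the first step. The one thing to get right is the translation from the spurious decision rule into the signs $\pm$, and in particular the observation that \emph{every} bias-aligned sample is correctly classified (hence contributes a vanishing loss) while \emph{every} bias-conflicting sample is misclassified (hence contributes a loss that diverges) — this is precisely what forces the denominator to shrink and the numerator to grow. The only genuine assumptions to flag are the non-degeneracy of the argmax, i.e. no logit ties, which guarantees $\alpha > 0$, and the existence of at least one bias-aligned sample; both are benign for the cross-entropy dynamics being modeled. Finally, since the statement is an upper bound ($\mathcal{O}$), I need only the one-sided estimates above and no matching lower bound on the numerator or upper bound on the denominator.
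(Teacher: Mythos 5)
Your proof is correct and follows essentially the same route as the paper's: both reduce the binary cross-entropy to a function of the logit margin, show that bias-conflicting losses grow as $\mathcal{O}(k)$ while bias-aligned losses decay like $e^{-\alpha k}$, and then take the ratio. The only difference is one of rigor rather than substance — you handle heterogeneous per-sample margins with explicit one-sided inequalities (taking $\alpha$ as the minimal bias-aligned margin), whereas the paper assumes shared margins $\beta$ and $\alpha$ across the minibatch ``without loss of generality'' and argues via asymptotic approximations.
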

The proof is deferred to the suppl. material, \changedc{which also presents another proposition, }showing how this can result in an increase in the gradient norms to the subnetworks that utilize spurious vs. core features. 
These imply that large LRs generate strong gradients that discourage reliance on spurious features, whereas small LRs lack this pressure, allowing models to retain spurious subnetworks while memorizing BC samples. 
Through experiments with ResNet18, \cref{fig:analysis_max_bcloss} demonstrate that unbiased test accuracy \emph{very strongly} correlates with the maximum loss ratio (\ie BC loss/total minibatch loss) encountered during training across datasets, supporting our proposed mechanism (see \changedc{suppl. material } for similar results with additional datasets and models). \cref{fig:analysis_resnet18} visualizes training dynamics under a large vs. small LR ($\eta=0.001$ vs $0.1$): early focus on spurious features are weaned off under large LRs, as BC samples dominate the loss due to confident mispredictions, both in Colored MNIST dataset \textit{and} CIFAR-10 datasets. 
 \begin{figure}[t]
	\centering
	\begin{subfigure}{\linewidth}\vspace{1mm}
		\centering \includegraphics[width=\linewidth]{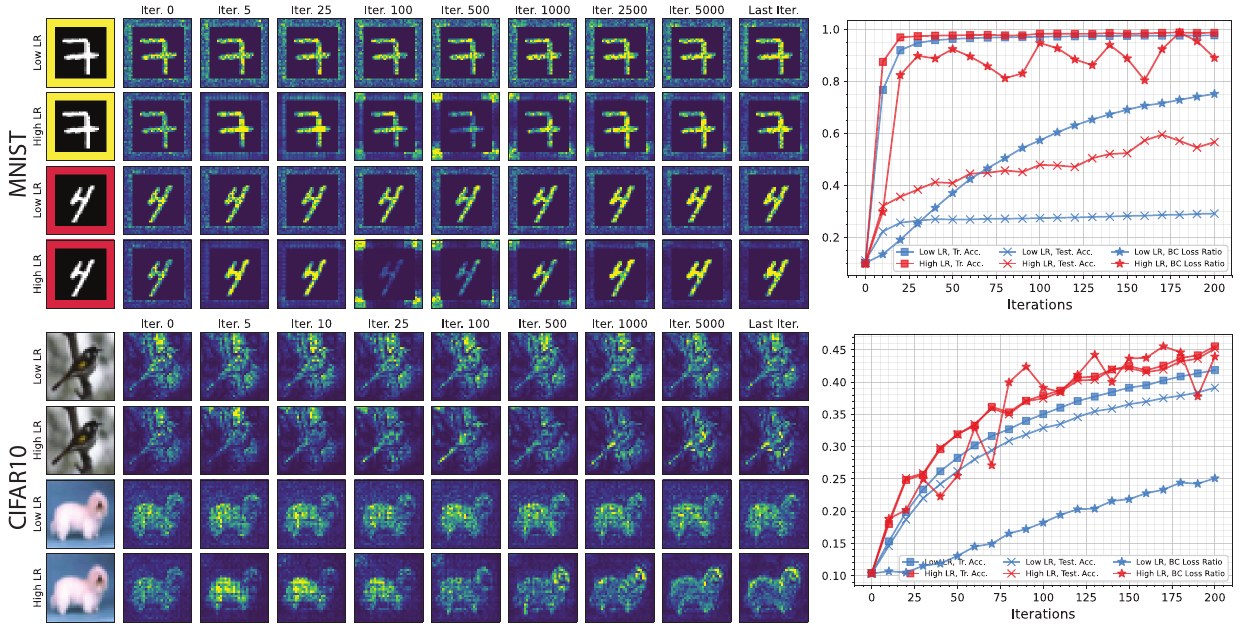}
	\end{subfigure}
	\vspace{-5mm}
	\caption{Large LRs lead to losses from bias-conflicting samples to dominate the gradient, resulting in increased utilization of core features. (Left) Attribution maps (right) model statistics for Colored MNIST and CIFAR-10 through training. \vspace{-4mm}}
	\label{fig:analysis_resnet18}
\end{figure}

We can interpret these large, disruptive updates to the network under the outsized influence of bias-conflicting examples as a recurring \textit{lottery ticket} type scenario \cite{frankleLotteryTicket2019}, where spurious subnetworks are effectively reset by large gradient updates whenever they rely too much on spurious feature and lead to confident mispredictions, similar to the
``catapult mechanism'', proposed by~\cite{lewkowyczLargeLearning2020a}. 
\vspace{-0.5em}
\section{Conclusion}  
\vspace{-1mm}
Robustness to SCs and compressibility are crucial requirements for advancing safe, trustworthy, and resource-efficient deep learning. In the absence of strong theoretical guarantees, our investigation demonstrates the efficacy of LR as a significant inductive bias for addressing these concerns simultaneously. We show that high LRs jointly obtain robustness to spurious correlations, network compressibility, and favorable representation properties.

\paragraph{Limitations and future work}  
Our work is far from an end-to-end, convergent explanation involving optimization dynamics, parameter loss landscapes, and representations. Future work should also study how multiple/hierarchical SCs interact with training dynamics, and design explicit training procedures for more robust, efficient models. 
 

\section*{Acknowledgments} 
\changedc{M. Barsbey was supported  by the EPSRC Project GNOMON
(EP/X011364/1). L. Prieto was supported by the UKRI Centre for Doctoral Training in Safe and Trusted AI
[EP/S0233356/1].} S. Zafeiriou and part of the research was funded by the EPSRC Fellowship DEFORM (EP/S010203/1) and EPSRC Project GNOMON
(EP/X011364/1). T. Birdal acknowledges support from the
Engineering and Physical Sciences Research Council [grant EP/X011364/1]. T. Birdal was supported by a UKRI Future Leaders Fellowship [grant MR/Y018818/1] as well as a Royal Society
Research Grant [RG/R1/241402]. 
{
    \small
    \bibliographystyle{ieeenat_fullname}
    \bibliography{ICCV/ICCV_references.bib}
}

 \clearpage
\setcounter{page}{1}
\setcounter{prop}{0}
\setcounter{section}{0}
\renewcommand\thesection{\Alph{section}}
\maketitlesupplementary

In this document, we present in \cref{apx:extended_related} an extended review of the preceding research to contextualize our paper's motivation and findings. Building on this, \cref{apx:extended_discussion} we highlight how our findings extend and improve upon previous literature, and point toward fruitful future research directions. After providing additional details regarding our experiment settings in \cref{apx:experiment_settings}, we provide additional results and statistics regarding the main paper's findings in \cref{apx:additional_results}. Lastly, \cref{apx:attribution}
reviews our attribution visualization methodology, and provides extensive additional visual evidence for our claims.

\setcounter{page}{1}

\renewcommand{\thesection}{\Alph{section}}
\setcounter{section}{0}
\section{Extended Related Work}
\label{apx:extended_related}
A large amount of recent work on spurious correlations (SCs) have focused on the ``default'' tendency of neural networks, trained under gradient-based empirical risk minimization (ERM), to exploit simple features in the training datasets at the expense of more complex yet robust/invariant ones. These include \cite{beeryRecognitionTerra2018}, who highlight the overreliance of vision models on background information; \cite{shahPitfallsSimplicity2020}, who emphasize the ``simplicity bias'' of neural networks in preferring simple features over more complex and informative ones, and \cite{geirhosShortcutLearning2020}, who emphasize the tendency of neural networks in engaging ``shortcut learning'' in various modalities of application. Ensuing research proposed several explanations for this phenomenon. For example, \cite{sagawaInvestigationWhy2020, nagarajanUnderstandingFailure2022, puliDonBlame2023} highlight the inductive bias of a maximum margin classifier as the primary reason for the exploitation of spurious features. Alternatively, \cite{pezeshkiGradientStarvation2021, nagarajanUnderstandingFailure2022} emphasize the dynamics of gradient-based learning in creating this effect, where early adoption of simple (and spurious) features harms the later learning of more complex and more informative features.

Building on diagnoses, such as those mentioned above, for the cause of this unintended learning of spurious features, other research propose interventions to mitigate this problem. For example, \cite{puliDonBlame2023} propose new losses that optimize for a \textit{uniform} margin solution rather than a max-margin solution. On the other hand, \cite{namLearningFailure2020, liuJustTrain2021} propose two-stage methods that reweight the dataset by deemphasizing samples that are learned earlier, and \cite{tiwariOvercomingSimplicity2023} discourage neural network to produce representations predictive of the label early in the neural network. Other methods assume access to spurious feature labels at training time, and exploit these in various ways to improve robustness \cite{sagawaDistributionallyRobust2020a, idrissiSimpleData2022}. While to our knowledge no previous research \textit{systematically} investigates the effect of LR on generalization under SCs (and in relation to compressibility), some previous research hint at the outsized impact of LR on such behavior.  \cite{liExplainingRegularization2019} examine the effect of large LRs on feature learning and generalization, without explicitly addressing the implications in an OOD generalization context. While \cite{pezeshkiMultiscaleFeature2022} speculate about the potential effects of LR tuning on OOD generalization, \cite{idrissiSimpleData2022} empirically find that LR is most likely to affect robustness to SCs, and \cite{puliDonBlame2023} speculate that large LRs might lead to improved performance due to inability of models trained thereunder to approximate a max-margin solution.

While previous research showed a positive relationship between compressibility and generalization through theoretical and empirical findings \cite{aroraStrongerGeneralization2018a, suzukiCompressionBased2020, suzukiSpectralPruning2020, barsbeyHeavyTails2021, birdalIntrinsicDimension2021}, it is much less clear how well this applies to OOD generalization. Indeed, existing research provides at best an ambivalent picture regarding the simultaneous achieveability of generalization, robustness, and compressibility \cite{wangAdversarialRobustness2018, dinhSparsityMeets2020, diffenderferWinningHand2021, duRobustnessChallenges2023}. Various studies have highlighted the impact of large LRs on generalization \cite{liExplainingRegularization2019, lewkowyczLargeLearning2020a,mohtashamiSpecialProperties2023}, model compressibility \cite{barsbeyHeavyTails2021}, and representation sparsity \cite{andriushchenkoSGDLarge2023}; making it a prime candidate for further investigation regarding its ability to facilitate compressibility and robustness in tandem.  \cite{jastrzebskiBreakEvenPoint2020} point out how large LRs in early training prevent the iterates from being stuck in narrow valleys in the loss landscape, where the curvature in certain directions is high. \cite{ lewkowyczLargeLearning2020a, mohtashamiSpecialProperties2023} point out the importance of large LRs in early training, where basin-jumping behavior leads to better generalizing and/or flatter minima \cite{keskarLargeBatchTraining2017}. While \cite{liExplainingRegularization2019, andriushchenkoSGDLarge2023, zhuCatapultsSGD2024} focus on the effect of large LRs on feature learning, 
\cite{rosenfeldOutliersOpposing2024} demonstrate the crucial role of spurious / opposing signals in early training, and how progressive sharpening \cite{cohenGradientDescent2021, liAnalyzingSharpness2022} of the loss landscape in the directions that pertain to the representation of these features lead to the eventual down-weighting of such non-robust features. \cite{rosenfeldOutliersOpposing2024} further observe that this is due to discrete nature of practical steepest ascent methods (GD, SGD), as it is not observed in gradient flow regime, suggesting learning rate as a prime candidate for modulating this behavior.

\begin{figure*}[t]
    \centering
\begin{subfigure}{0.48\textwidth}
  \centering
\includegraphics[width=\linewidth]{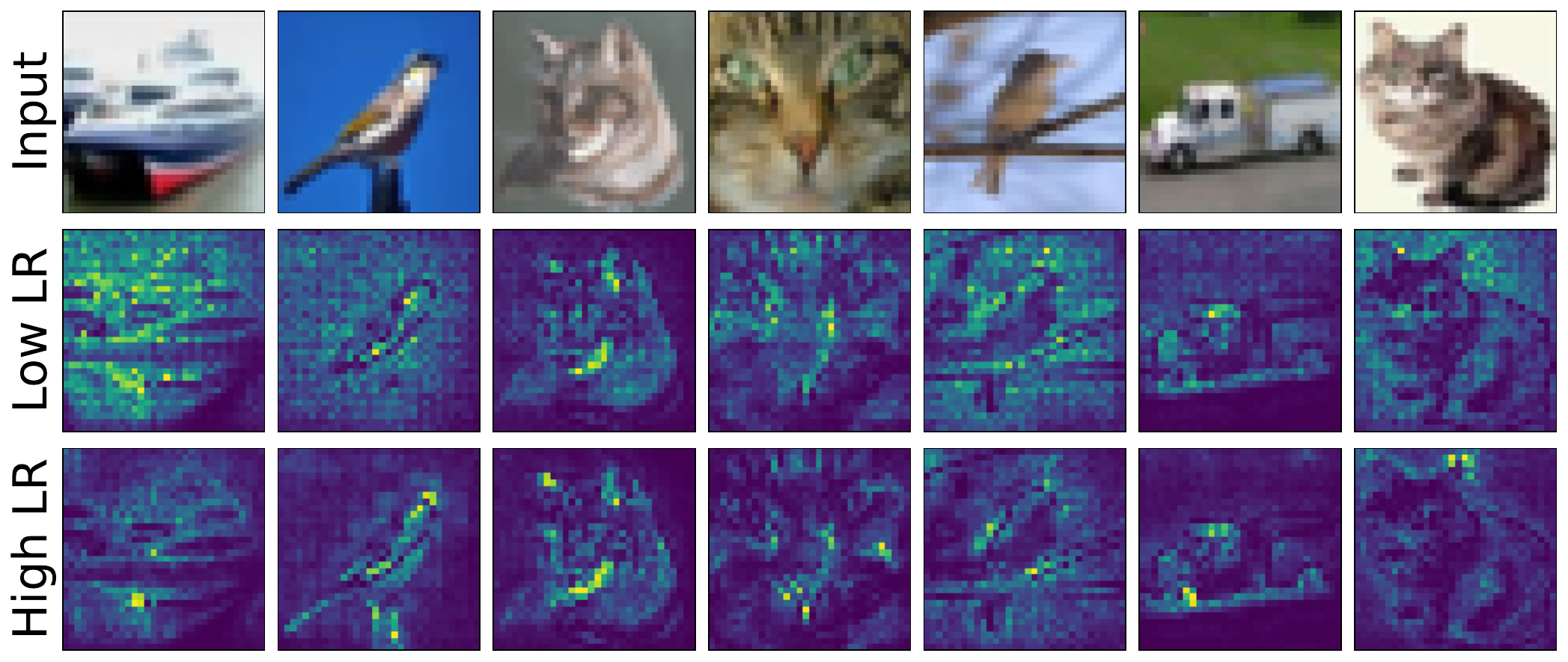}
\end{subfigure}
\hfil
\begin{subfigure}{0.48\textwidth}
  \centering
\includegraphics[width=\linewidth]{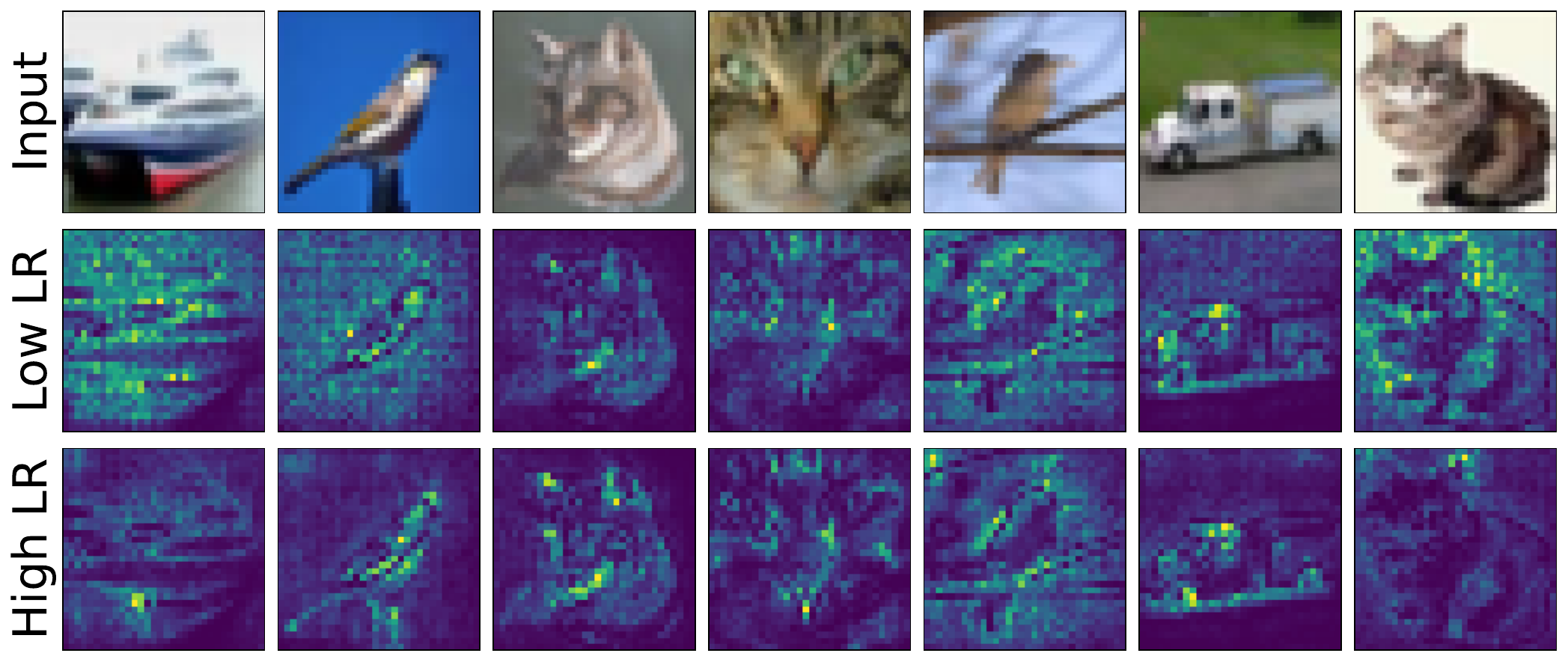}
\end{subfigure}
\caption{Visualizing attributions on a CIFAR-10 dataset with ResNet18 models using Integrated Gradients (left) and DeepLift (right).\vspace{-2mm}}  
    \label{fig:ig_vs_deeplift}
\vspace{3mm}
\end{figure*}

\begin{figure}[t]
    \centering
    \begin{subfigure}{0.45\linewidth}
      \centering
\includegraphics[width=\linewidth]{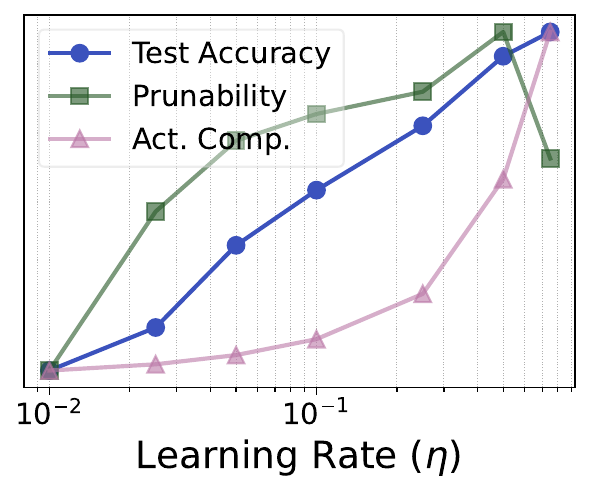}
    \end{subfigure}
    \centering
    \begin{subfigure}{0.45\linewidth}
      \centering
\includegraphics[width=\linewidth]{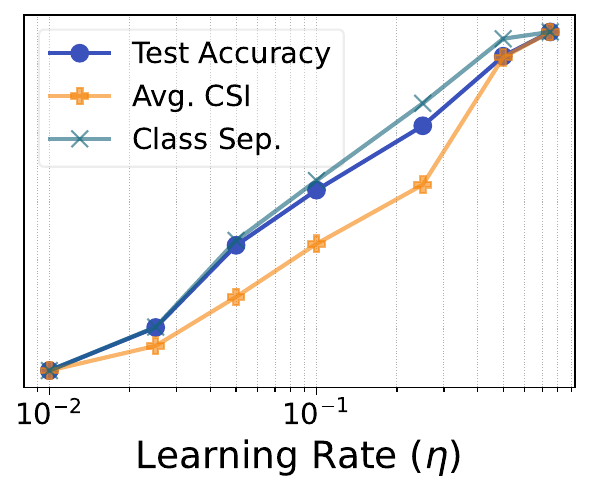}
    \end{subfigure}
    \vspace{-3mm}
    \caption{(Left) Effects of learning rate on OOD performance (unbiased test acc.), network prunability, and representation properties with the moon-star dataset.\vspace{-5mm}}
    \label{fig:moonstar_results}
\end{figure}

\section{Extended Discussion of Our Contributions}
\label{apx:extended_discussion}
In this paper, we demonstrate through systematic experiments the unique role large learning rates (LRs) in simultaneously achieving robustness and resource-efficiency. More concretely, we demonstrate that:

\begin{itemize}
\item Large LRs simultaneously facilitate robustness to SCs and compressibility in a variety of datasets, model architectures, and training schemes.
\item Increase in robustness and compressibility is accompanied by increased core (aka stable, invariant) feature utilization and class separation in learned representations.
\item Large LRs are unique in consistently achieving these properties across datasets compared to other interventions, and can be combined with explicit regularization for even better performance.
\item Large LRs have a similar effect in naturalistic classification tasks by addressing hidden/rare spurious correlations in the dataset.
\item Confident mispredictions of bias-conflicting samples play an important role in conferring robustness to models trained under large LRs.
\end{itemize}

We now discuss further implications of our results in light of recent findings in the literature.

\begin{figure*}[t]
  \centering
\begin{subfigure}{\textwidth}
  \centering
  \includegraphics[width=\textwidth]{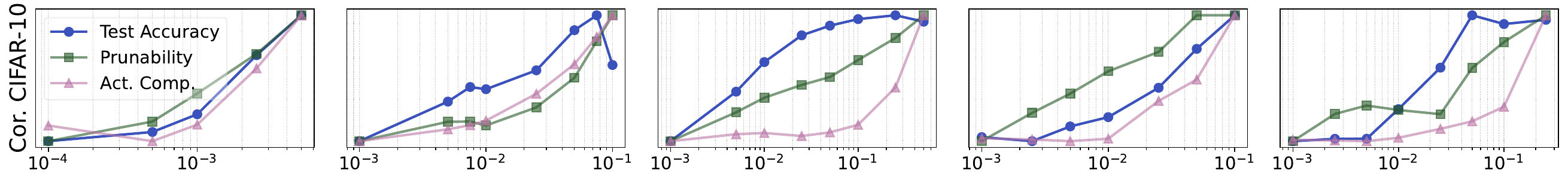}
\end{subfigure}
\vspace{-6mm}
\caption{Effects of LR on OOD performance (unbiased test acc.), network prunability, and representation (activation) compressibility in Corrupted CIFAR-10 dataset. $x$-axes correspond to learning rate ($\eta$), $y$-axes are normalized within each figure for each variable.\vspace{-4mm}}
  \label{fig:corcifar_results}
\end{figure*}

\begin{figure*}[t]
  \centering
\begin{subfigure}{\textwidth}
  \centering
\includegraphics[width=\textwidth]{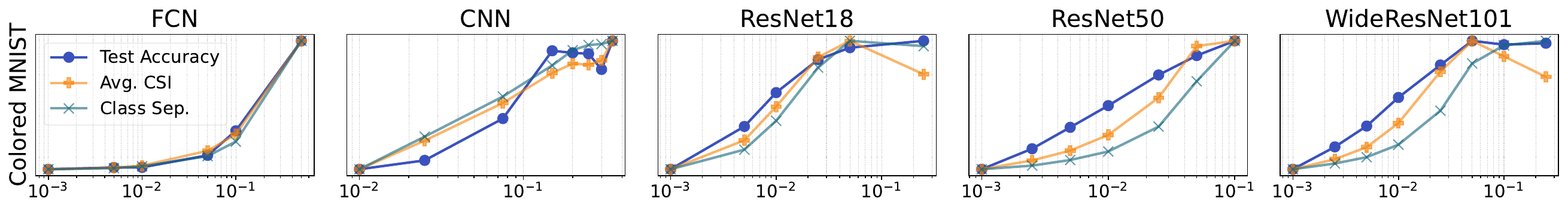}
\end{subfigure}

\begin{subfigure}{\textwidth}
  \centering
\includegraphics[width=\textwidth]{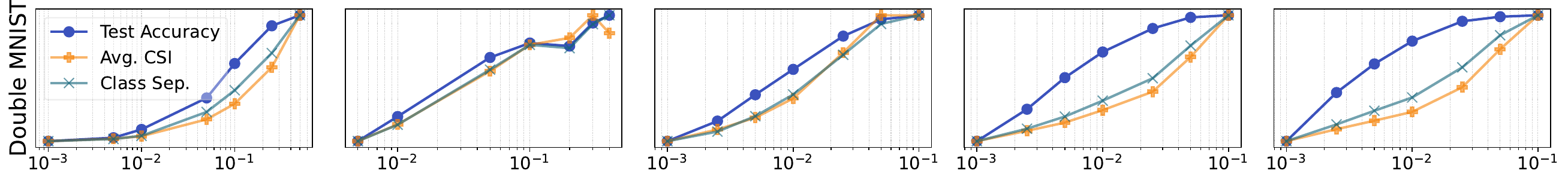}
\end{subfigure}

\begin{subfigure}{\textwidth}
  \centering
\includegraphics[width=\textwidth]{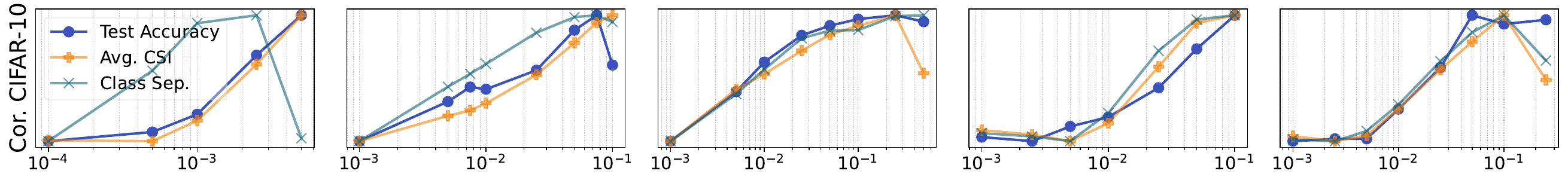}
\end{subfigure}

\begin{subfigure}{\textwidth}
  \centering
\includegraphics[width=\textwidth]{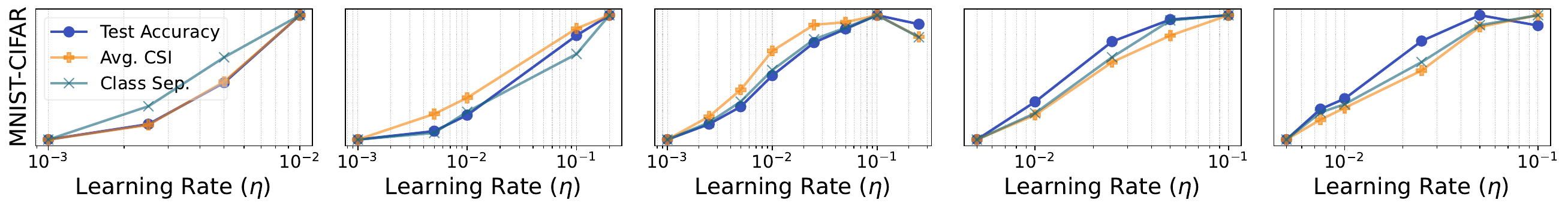}
\end{subfigure}
\vspace{-6mm}
\caption{Effects of learning rate on  representation (activation) statistics for semi-synthetic datasets. $y$-axes are normalized within each figure for each variable.\vspace{-4mm}}
  \label{fig:repr5_results}
\end{figure*}

\paragraph{Inductive biases of SGD} Our findings call into question the assumptions regarding the inductive biases of ``default'' SGD. We find that LR selection can change the unbiased test set accuracy by \textit{up to $\sim$50\%} (See Table S1). This has two major implications: (i) Any method that relies on assumptions regarding default behavior of neural networks (e.g. \cite{namLearningFailure2020, liuJustTrain2021}) should consider the fact that the said defaults can vary considerably based on training hyperparameters, including but not limited to LR. (ii) Any proposed intervention for improving robustness to SCs should consider utilizing large LRs as a strong baseline against the proposed method. 

\paragraph{Overparametrization and robustness} We observe a strong interaction between overparametrization and LR in robustness to SCs. Our findings show that (Table S1) the range of test accuracies that can be obtained by tuning LRs increase as the models get more expressive. For example, in Colored MNIST dataset, while the difference between the highest vs. lowest performing models is $\sim$4\% for a fully connected network (FCN), this increases to $\sim$23\% for ResNet18 and $\sim$31\% for ResNet50. This implies that while overparametrization indeed seems to play an important role in robustness to SCs as suggested by some previous work \cite{sagawaInvestigationWhy2020} (cf. \cite{puliDonBlame2023}), this needs to be considered in the context of central training hyperparameters, as they can modulate this vulnerability to a great degree.

\paragraph{Mechanism of LR's Effects} 
As noted in the main paper, interventions developed to mitigate the effect of spurious correlations constitute two groups based on whether they assume access to spurious feature labels/annotations. Those that assume this, exploit this information to improve worst group or unbiased test set performance \cite{sagawaDistributionallyRobust2020a, idrissiSimpleData2022}. In the absence of group annotations, other methods rely on assumptions about the nature of the spurious features, data distribution, and the inductive biases of the learning algorithms \cite{liuJustTrain2021, puliDonBlame2023, tiwariOvercomingSimplicity2023}. We highlight that the proposed mechanism in this paper, where large LRs cause BC sample losses to dominate acts as an \emph{implicit} re-weighting, which makes it akin to two-stage methods like Learning from Failure \cite{namLearningFailure2020}. This further highlights the importance of establishing the effects of core hyperparameters on robustness to spurious correlations - not only for a deeper understanding of the inductive biases of gradient-based learning under overparametrization, but also as strong baselines to compare against newly developed methods.

As noted above, \cite{puliDonBlame2023} argue that max-margin classification inevitably leads to exploitation of spurious features, and LRs might protect against SCs by creating models closer to a uniform margin solution. To support their conjecture, they present evidence that shows average losses incurred from bias-aligned (BA) vs. bias-conflicting (BC) samples are closer in large LR models compared to small LR models. However, we find that their findings do not generalize across different data distributions. Computing avg. loss for BA samples / avg. loss for BC samples, we find that in both Colored MNIST and Double MNIST datasets, low LR models produce average losses that are closer in ratio ($3.9 \times 10^-3$ vs. $3.4 \times 10^-3$ in Colored MNIST and $9.5 \times 10^-3$ vs. $7.4 \times 10^-6$ in Double MNIST). This highlights the importance of testing such claims across diverse settings, and emphasizes the need for novel and systematic explanations for the effect of large LRs on robustness to SCs. 

\paragraph{Compressibility and generalization} \cite{andriushchenkoSGDLarge2023} argue that large LRs create models with sparse representations. Our findings support their claim across diverse settings. However, we also observe that LRs' effects on unbiased test accuracy and network compressibility (i.e. prunability) precede that of activation sparsity (i.e. there are LRs that are large enough to increase test accuracy and prunability but not large enough to increase representation sparsity). This strongly implies that the representation sparsity is a downstream effect of large LRs, rather than being a mediator of generalization and network compressibility. On the other hand, our findings include initial evidence for wide minima \cite{keskarLargeBatchTraining2017} found by large LRs to be associated with increased core feature utilization. Examining the interaction of parameter and representation space properties produced by LRs (see e.g. \cite{rosenfeldOutliersOpposing2024}) is a promising future direction for understanding the inductive bias of large LRs and SGD in general.

\section{Further Details on Experiment Settings}
\label{apx:experiment_settings}
We use Python programming language for all experiments included in this paper. For experiments with semi-synthetic, realistic SC, and naturalistic datasets we use the versions of ResNet18, ResNet50, Wide ResNet101-2, Swin Transformer (tiny) as included in the Python package \texttt{torchvision}, as well as a FCN with two hidden layers of width 1024, ReLU as the activation activation function, and with no bias. We also use a CNN with a similar architecture to VGG11 \cite{simonyanVeryDeep2015}, with a single linear layer following the convolutional layers instead of three, and this version includes no bias terms. Due to the worse default performance of FCNs in the more difficult semi-synthetic datasets MNIST-CIFAR and Corrupted CIFAR-10, we increase bias-conflicting ratio $\rho$ to 0.25, and increase network width to 2048 for these datasets. The synthetic dataset experiments have been conducted with an FCN of two hidden layers and 200 width. 

For Colored MNIST and Corrupted CIFAR-10 we use the train/test splits from the original papers \cite{liuJustTrain2021}. Double MNIST and MNIST-CIFAR are created using the canonical splits of these datasets. The training/test splits for these datasets are 60000/10000, 50000/10000, 60000/10000, and 50000/10000, respectively. While we use the original splits for CelebA and Waterbirds datasets, we use a 10000/10000 split for the synthetic parity dataset. The learning rate ranges for the experiments are provided in \cref{tbl:lr_result_ranges_main} and \cref{tbl:lr_result_ranges_reg}, while all experiments included a batch size of 100, except for experiments with Swin Transformer, where we utilize a batch size of 16. For computing activation statistics, we obtain the post-activation values for the penultimate layer, and compute the compressibility values for 1000 randomly sampled input from the test set, and present the average of these values. 

The experiments in this paper were run on 4 NVIDIA A100-PCIE-40GB GPUs for 400 total hours of computation. We will make our source code public upon publication to allow for the replication of our results.

\section{Additional Results and Statistics}
\label{apx:additional_results}
\subsection{Proof of Proposition 1}
\begin{proof}
	Let $y$ and $y'$ for the correct and incorrect classes for a sample, \ie $b=y'$ for bias-conflicting samples, and $b=y$ otherwise. For the mispredicted (bias-conflicting) examples, let $f_{\w}[y'] - f_{\w}[y] = \beta > 0$. This implies the following softmax ($\pi$) output for $y$
	\begin{align}
		\pi_y  = \frac{e^{k f_{\w}[y]}}{e^{k f_{\w}[y']} + e^{k f_{\w}[y]}} = \frac{1}{1+e^{k\beta}}.
	\end{align}
	Notice that $\pi_y \approx e^{-k\beta}$, as $k \to \infty$. Then we can say $\ell(y, f_{\w}(\x)) = -\log \pi_y \approx k\beta.$
	
	For the correctly predicted samples, let $f_{\w}[y] - f_{\w}[y'] = \alpha > 0$. Similarly to above, note that
	\begin{align}
		\pi_y = 1 - \pi_{y'} &= 1-\frac{e^{k f_{\w}[y']}}{e^{k f_{\w}[y']} + e^{k f_{\w}[y]}} \\
              &= 1-\frac{1}{1 + e^{k (f_{\w}[y]-f_{\w}[y'])}} \\
              &= 1-\frac{1}{1 + e^{k\alpha}} \\
              &\approx 1- e^{-\alpha k}
	\end{align}
	as $k\to\infty$. As $k\to \infty$, $-\log(\pi_{y}) \approx -\log(1-e^{-k\alpha}) \approx e^{-k\alpha}$.    
	Assume (without loss of generality) that the margins $\beta$ and $\alpha$ are shared by all bias-conflicting and bias-aligned samples in the minibatch. Then, as $k\to\infty$,
	\begin{align}
		\frac{\sum_{\x, y \in \Omega_{\mathrm{bc}}} \ell(y, k f_{\w}(\x))}{\sum_{\x, y \in \Omega_{\mathrm{ba}}} \ell(y, k f_{\w}(\x))} \approx \frac{|\Omega_{\mathrm{bc}}|\cdot k\beta}{|\Omega_{\mathrm{ba}}|\cdot e^{-\alpha k}} = \mathcal{O}(ke^{\alpha k}),
	\end{align}
	proving \cref{eq:loss_scaling}.
\end{proof}
\subsection{Additional Theoretical Results}
To investigate the effects of mispredicted bias-conflicting samples on the gradients of subnetworks that rely on core vs. spurious features, we first define \textit{bias-decomposable network}s. 
\begin{dfn}
$f_\w$ is called a \textit{bias-decomposable network} if $f_\w(\x) = f_{\mathbf{c}}(\x) + f_{\mathbf{s}}(\x) + f_{\mathbf{r}}(\x)$. Here, $f_{\mathbf{c}}$ is the core feature subnetwork, $f_{\mathbf{c}}(\x)[y] - f_{\mathbf{c}}(\x)[b] \geq 0$ with equality iff $y=b$. $f_{\mathbf{c}}(\x)$ is assumed to have converged to a stable decision making rule, i.e. $\nabla_{\mathbf{c}}(f_{\mathbf{c}}(\x)[y] - f_{\mathbf{c}}(\x)[b]) = 0$. In contrast, $f_{\mathbf{s}}$ is the spurious feature subnetwork, $f_{\mathbf{s}}(\x)[y] - f_{\mathbf{s}}(\x)[b] \leq 0$ with equality iff $y=b$. $f_{\mathbf{r}}$ is the remainder subnetwork that does not conform to the behavior described for $f_{\mathbf{c}}, f_{\mathbf{s}}$.
\end{dfn}

Before discussing the motivation for this idealization, in the following proposition, we investigate how the gradient norms for core and spurious subnetworks scale based on this definition and the results in the main paper.
\setcounter{prop}{1}
\begin{prop}
    Assume $f_\w$ is bias-decomposable network. If $f_{\w}$ predicts according to the spurious decision rule, \ie $b = \argmax_{j} f_{\w}(\x)[j]$, then
    for some $\alpha > 0$, as the logit-scaling factor $k \to \infty$: 
    \begin{align}
    	\label{eq:grad_scaling}
    	\frac{\sum_{\x, y \in \Omega} \|\nabla_{\mathbf{s}}\ell(y, k f_{\w}(\x))\|}{\sum_{\x, y \in \Omega} \|\nabla_{\mathbf{c}}\ell(y, k f_{\w}(\x))\| } = \mathcal{O}(e^{\alpha k}),
    \end{align}
    for some $\alpha > 0$, where $\|\cdot\|$ stands for Frobenius norm.
\end{prop}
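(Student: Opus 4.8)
The plan is to reduce both sums to the softmax probabilities through the chain rule and then import the two asymptotic regimes already computed in the proof of Proposition 1. Writing $y'$ for the class distinct from the true label $y$ in the binary task, the softmax--cross-entropy derivative is $\partial \ell/\partial(k f_\w(\x)[j]) = \pi_j - \mathbf{1}\{j=y\}$, and because the subnetworks carry disjoint parameters we have $\nabla_{\mathbf{s}} f_\w(\x)[j] = \nabla_{\mathbf{s}} f_{\mathbf{s}}(\x)[j]$ and likewise for $\mathbf{c}$. Using $\pi_y + \pi_{y'} = 1$ to collapse the two-class sum, I would first derive
\begin{equation}
\nabla_{\mathbf{s}}\,\ell(y, k f_\w(\x)) = -k\,\pi_{y'}\,\nabla_{\mathbf{s}}\bigl(f_{\mathbf{s}}(\x)[y]-f_{\mathbf{s}}(\x)[y']\bigr),
\end{equation}
together with the identical expression for $\mathbf{c}$. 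This pushes all of the $k$-dependence into the scalar prefactor $k\pi_{y'}$ and leaves subnetwork-intrinsic gradient vectors that I treat as bounded.

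Next I would partition $\Omega$ into $\Omega_{\mathrm{ba}}$ and $\Omega_{\mathrm{bc}}$ and invoke the spurious decision rule $b=\argmax_j f_\w(\x)[j]$. On bias-conflicting samples the model predicts the wrong class, so $b=y'$ and $\pi_{y'}=(1+e^{-k\beta})^{-1}\to 1$; on bias-aligned samples $b=y$ and $\pi_{y'}=(1+e^{k\alpha})^{-1}\approx e^{-\alpha k}\to 0$ --- precisely the confident-misprediction and confident-correct regimes of Proposition 1. The decisive point is that the convergence hypothesis $\nabla_{\mathbf{c}}(f_{\mathbf{c}}(\x)[y]-f_{\mathbf{c}}(\x)[b])=0$ places $b$ in the second slot: on $\Omega_{\mathrm{bc}}$ one has $b=y'$, so the core factor $\nabla_{\mathbf{c}}(f_{\mathbf{c}}(\x)[y]-f_{\mathbf{c}}(\x)[y'])$ vanishes identically and these samples drop out of the denominator, whereas on $\Omega_{\mathrm{ba}}$ we have $b=y\neq y'$ and the hypothesis imposes no cancellation, so the core gradient persists.

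Assembling the estimates then yields the exponent directly. The numerator is governed by the bias-conflicting terms, each of size $k\pi_{y'}=\Theta(k)$, so $\sum_{\Omega}\|\nabla_{\mathbf{s}}\ell\|=\Theta(k)$; the denominator collects only bias-aligned terms, each of size $k\pi_{y'}=\Theta(k e^{-\alpha k})$, so $\sum_{\Omega}\|\nabla_{\mathbf{c}}\ell\|=\Theta(k e^{-\alpha k})$. Their quotient is $\Theta(e^{\alpha k})=\mathcal{O}(e^{\alpha k})$, with $\alpha$ the bias-aligned margin, as claimed. As in Proposition 1 I would assume a shared margin within each group for readability, which affects only constants.

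The main obstacle is the bookkeeping that produces the exponential factor: one must see that the convergence assumption selectively annihilates the core gradient \emph{exactly} on the bias-conflicting samples (since there $b=y'$), so that the denominator is controlled by the small bias-aligned probability $e^{-\alpha k}$ while the numerator stays of order $k$ through the order-one bias-conflicting probability. A secondary issue is well-posedness of the ratio, which requires $\Omega_{\mathrm{ba}}\neq\emptyset$ with a nonvanishing core-margin gradient so that the denominator is bounded away from zero; I would record this as the mild nondegeneracy condition under which the claim holds.
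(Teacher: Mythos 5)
Your proposal is correct and follows essentially the same route as the paper's proof: expand the softmax cross-entropy gradient, invoke the convergence assumption $\nabla_{\mathbf{c}}(f_{\mathbf{c}}(\x)[y]-f_{\mathbf{c}}(\x)[b])=0$ to annihilate the core gradient exactly on bias-conflicting samples (where $b=y'$), and compare the $\Theta(k)$ spurious numerator against the $\Theta(k e^{-\alpha k})$ core denominator coming from the confidently correct bias-aligned samples. If anything, your version is slightly tighter than the paper's, since your explicit two-class collapse $\nabla_{\mathbf{s}}\ell = -k\,\pi_{y'}\,\nabla_{\mathbf{s}}\bigl(f_{\mathbf{s}}(\x)[y]-f_{\mathbf{s}}(\x)[y']\bigr)$ and the recorded nondegeneracy condition ($\Omega_{\mathrm{ba}}\neq\emptyset$ with nonvanishing core-margin gradients) supply the lower bound on the denominator that the paper's ratio-of-$\mathcal{O}$'s step leaves implicit.
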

\begin{proof}
The softmax probability $\pi_j$ for a class $j$ is given by:
\begin{align}
\pi_j = \frac{e^{z_j}}{\sum_i e^{z_i}} \quad \text{where } z_i = k f_w(\mathbf{x})[i]
\end{align}
With some abuse of notation, the gradient of the loss $\ell$ can be expressed as:
\begin{align}
    \nabla_c \ell &= k \sum_{j} (\pi_j - \delta_{jy}) \nabla_c f_c(\mathbf{x})[j] \\
    \nabla_s \ell &= k \sum_{j} (\pi_j - \delta_{jy}) \nabla_s f_s(\mathbf{x})[j]
\end{align}

where $\delta_{jy}$ is the Kronecker delta, defined as:

\begin{align}
\delta_{jy} = \begin{cases} 1 & \text{if } j=y \\ 0 & \text{if } j \neq y \end{cases}
\end{align}

For bias-conflicting samples, i.e. $\mathbf{x} \in \Omega_{bc}$, as $k \to \infty$, the softmax probability $\pi_b \to 1$ and $\pi_j \to 0$ for $j \neq b$. The gradient for the core feature subnetwork converges to:

\begin{align}
\nabla_c \ell &\approx k \left( \nabla_c f_c[b] - \nabla_c f_c[y] \right) \label{eq:diff_grads}\\  &\approx 0,
\end{align}
with the latter due to Definition 1. Note that for the spurious feature subnetwork \cref{eq:diff_grads} implies that the gradient norm scales linearly with $k$:
\begin{align}
\lVert \nabla_s \ell_{bc} \rVert = \mathcal{O}(k)
\end{align}

For the correctly classified bias-aligned samples with margin $\alpha$, the gradient norms for both subnetworks are scaled by this vanishing factor:
\begin{align}
    \lVert \nabla_c \ell_{ba} \rVert &= \mathcal{O}(k e^{-k\alpha}) \\
    \lVert \nabla_s \ell_{ba} \rVert &= \mathcal{O}(k e^{-k\alpha})
\end{align}

As we sum the norms over the minibatch $\Omega = \Omega_{bc} \cup \Omega_{ba}$, the total spurious gradient norm (numerator) is:
\begin{align}
\sum_{\Omega} \lVert \nabla_s \ell \rVert = \sum_{\Omega_{bc}} \mathcal{O}(k) + \sum_{\Omega_{ba}} \mathcal{O}(k e^{-k\alpha}) = \mathcal{O}(k)
\end{align}
The total core gradient norm (denominator) is:
\begin{align}
\sum_{\Omega} \lVert \nabla_c \ell \rVert = \sum_{\Omega_{bc}} 0 + \sum_{\Omega_{ba}} \mathcal{O}(k e^{-k\alpha}) = \mathcal{O}(k e^{-k\alpha})
\end{align}
The final ratio is the ratio of their asymptotic behaviors:
\begin{align}
\frac{\sum_{\Omega} \lVert \nabla_s \ell \rVert}{\sum_{\Omega} \lVert \nabla_c \ell \rVert} = \frac{\mathcal{O}(k)}{\mathcal{O}(k e^{-k\alpha})} = \mathcal{O}(e^{\alpha k}) 
\end{align}
\end{proof}

Note that although the decomposition in question is an idealization, it follows in line of previous work that demonstrate such modularity \cite{ kirichenkoLastLayer2022}, and similar decompositions have been utilized in related research previously \cite{qiu2024complexity}. However, note that our assumption regarding the stability of the core vs. spurious subnetworks differ from that of \cite{qiu2024complexity}, who assume a Bayes-optimal, stable spurious subnetwork. Our assumption is motivated by our empirical observations. In \cref{fig:rebuttal_figs_mech} (top), we examine two ResNet18 models trained on Colored MNIST dataset, and investigate their learned representations, based on the pooled outputs of the last layer convolution filters. We compute the CSI - BSI values for each filter/neuron at different points in the training. Assume we categorize neurons for which CSI $>$ BSI as ``class-dominant'', and ``bias-dominant'' otherwise. We then ask, ``At iteration $t$, what percentage of the neurons that were class dominant \textit{remained} class-dominant by the end of training, and what percent of the neurons that were bias-dominant \textit{remained} so?'' As the results show, for high learning rate ($\eta=0.1$ vs. $\eta=0.001$), after iteration $\sim 10$ the overwhelming majority (not infrequently 100\%) of the neurons who were class-dominant remained so. This is not true for the bias-dominant neurons. Note that this is not explained solely by the final ratio of class-dominant neurons: 55.4\% vs 85.9\% for the two models respectively, which falls short of explaining the behavior observed. To provide a more microscopic examination of this, in \cref{fig:rebuttal_figs_mech} (bottom) we examine how neuron-specific gradients impact spurious feature utilization (computed through feature attribution) within that neuron, from a high LR ($\eta=0.1$) experiment within our FCN + Colored MNIST setting. The updates clearly show that the increasing spurious feature utilization is ``reset'' by a large gradient update, driven presumably by mispredictions from bias-conflicting samples. We consider further examination of these phenomena as an important future research direction.

\begin{figure}[ht]
\centering
\begin{subfigure}{0.45\linewidth}
  \includegraphics[width=\linewidth]{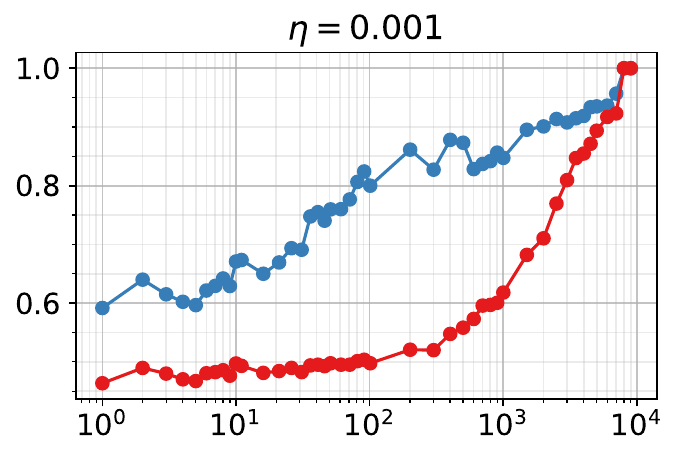} 
\end{subfigure}
\begin{subfigure}{0.45\linewidth}
  \includegraphics[width=\linewidth]{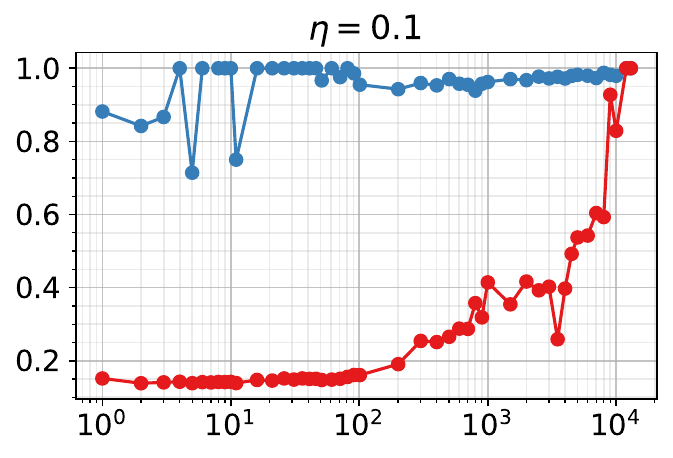} 
\end{subfigure}
\begin{subfigure}{\linewidth}
  \includegraphics[width=\linewidth]{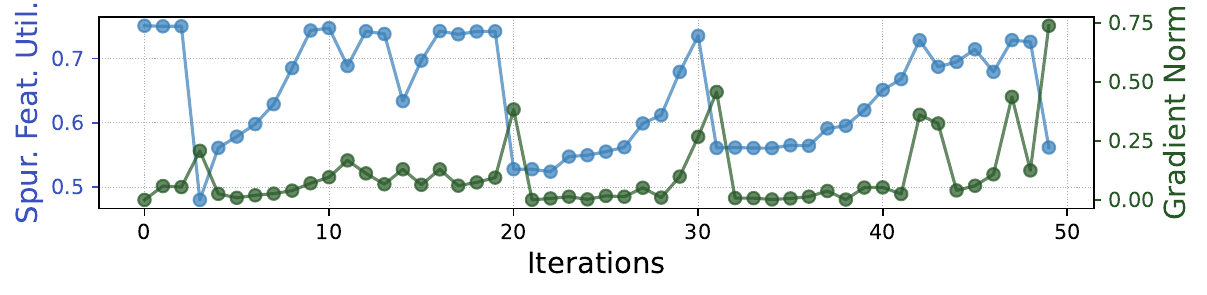} \vspace{-2mm}
\end{subfigure}\vspace{-6mm}
\caption{(Top) Ratio of class-dominant vs. bias-dominant neurons that survived as such by the end of training. (Bottom) Examining the impact of large gradient updates on feature attributions.}\vspace{-2mm}
\label{fig:rebuttal_figs_mech}
\end{figure}

\subsection{Value Ranges for Figures}
Given that our figures depict multiple variables at the same time, and the results are normalized according to experiments to illuminate the patterns that LR and other interventions create, we present the min. and max. values the independent and dependent variables take in \cref{tbl:lr_result_ranges_main} and \cref{tbl:lr_result_ranges_reg}.

\begin{table*}[htbp]
\centering
\caption{Minimum and maximum values for each dataset-model combination included in our main experiments.}
\vspace{-2mm}
\resizebox{\textwidth}{!}{%
\begin{tabular}{|l|l|cc|cc|cc|cc|cc|cc|}
\textbf{Dataset} & \textbf{Model} & \multicolumn{2}{c|}{\textbf{LR}} & \multicolumn{2}{c|}{\textbf{Test Acc.}} & \multicolumn{2}{c|}{\textbf{Prunability}} & \multicolumn{2}{c|}{\textbf{Act. Comp.}} & \multicolumn{2}{c|}{\textbf{Avg. CSI}} & \multicolumn{2}{c|}{\textbf{Class Sep.}} \\
\cline{3-14}
 &  & \textbf{Min.} & \textbf{Max.} & \textbf{Min.} & \textbf{Max.} & \textbf{Min.} & \textbf{Max.} & \textbf{Min.} & \textbf{Max.} & \textbf{Min.} & \textbf{Max.} & \textbf{Min.} & \textbf{Max.} \\
\toprule
MNIST-CIFAR & FCN & 0.001 & 0.01 & 35.247 & 35.369 & 0.930 & 0.940 & 0.191 & 0.215 & 0.108 & 0.136 & 0.13 & 0.143 \\
MNIST-CIFAR & CNN & 0.001 & 0.2 & 24.507 & 41.717 & 0.326 & 0.902 & 0.173 & 0.513 & 0.079 & 0.225 & 0.049 & 0.149 \\
MNIST-CIFAR & ResNet18 & 0.001 & 0.25 & 26.233 & 47.513 & 0.311 & 0.737 & 0.294 & 0.343 & 0.225 & 0.371 & 0.115 & 0.255 \\
MNIST-CIFAR & ResNet50 & 0.005 & 0.1 & 23.287 & 34.358 & 0.378 & 0.515 & 0.25 & 0.34 & 0.143 & 0.25 & 0.065 & 0.143 \\
MNIST-CIFAR & Wide ResNet101-2 & 0.005 & 0.1 & 24.855 & 33.537 & 0.418 & 0.522 & 0.252 & 0.34 & 0.166 & 0.247 & 0.082 & 0.149 \\
CelebA & Swin Transformer & 1e-07 & 0.0001 & 40.889 & 48.21 & 0.344 & 0.609 & 0.265 & 0.584 & 0.122 & 0.305 & 0.06 & 0.164 \\
Colored MNIST & FCN & 0.001 & 0.5 & 72.727 & 76.11 & 0.935 & 0.968 & 0.285 & 0.389 & 0.243 & 0.381 & 0.291 & 0.387 \\
Colored MNIST & CNN & 0.01 & 0.35 & 88.2 & 91.48 & 0.532 & 0.931 & 0.372 & 0.643 & 0.302 & 0.576 & 0.376 & 0.692 \\
Colored MNIST & ResNet18 & 0.001 & 0.25 & 68.343 & 91.543 & 0.252 & 0.771 & 0.381 & 0.527 & 0.38 & 0.63 & 0.212 & 0.635 \\
Colored MNIST & ResNet50 & 0.001 & 0.1 & 55.687 & 86.38 & 0.284 & 0.489 & 0.334 & 0.524 & 0.198 & 0.458 & 0.07 & 0.509 \\
Colored MNIST & ResNet50 (PSGD) & 1e-6 & 0.01 & 24.16 & 93.09 & 0.249 & 0.959 & 0.334 & 1.0 & 0.159 & 0.888 & 0.126 & 0.712 \\
Colored MNIST & Wide ResNet101-2 & 0.001 & 0.25 & 61.643 & 85.25 & 0.258 & 0.559 & 0.337 & 0.787 & 0.215 & 0.473 & 0.084 & 0.569 \\
Moon-Star & FCN & 0.01 & 0.75 & 74.315 & 81.156 & 0.954 & 0.971 & 0.375 & 0.457 & 0.257 & 0.34 & 0.2 & 0.326 \\
Cor. CIFAR-10 & FCN & 0.0001 & 0.005 & 46.92 & 51.93 & 0.574 & 0.847 & 0.202 & 0.252 & 0.131 & 0.183 & 0.169 & 0.182 \\
Cor. CIFAR-10 & CNN & 0.001 & 0.1 & 43.555 & 48.503 & 0.39 & 0.861 & 0.189 & 0.341 & 0.144 & 0.387 & 0.14 & 0.329 \\
Cor. CIFAR-10 & ResNet18 & 0.001 & 0.5 & 35.153 & 47.795 & 0.267 & 0.803 & 0.382 & 0.634 & 0.266 & 0.393 & 0.115 & 0.205 \\
Cor. CIFAR-10 & ResNet50 & 0.001 & 0.1 & 36.3 & 45.52 & 0.391 & 0.556 & 0.334 & 0.475 & 0.203 & 0.304 & 0.081 & 0.134 \\
Cor. CIFAR-10 & Wide ResNet101-2 & 0.001 & 0.25 & 37.827 & 47.153 & 0.399 & 0.559 & 0.339 & 0.653 & 0.215 & 0.348 & 0.09 & 0.189 \\
Double MNIST & FCN & 0.001 & 0.5 & 69.287 & 72.47 & 0.986 & 0.999 & 0.236 & 0.406 & 0.22 & 0.485 & 0.428 & 0.481 \\
Double MNIST & CNN & 0.005 & 0.4 & 83.987 & 94.57 & 0.383 & 0.916 & 0.235 & 0.772 & 0.187 & 0.476 & 0.204 & 0.55 \\
Double MNIST & ResNet18 & 0.001 & 0.1 & 85.44 & 95.577 & 0.285 & 0.706 & 0.306 & 0.365 & 0.467 & 0.602 & 0.485 & 0.776 \\
Double MNIST & ResNet50 & 0.001 & 0.1 & 42.045 & 95.733 & 0.228 & 0.498 & 0.16 & 0.256 & 0.152 & 0.392 & 0.172 & 0.68 \\
Double MNIST & Wide ResNet101-2 & 0.001 & 0.1 & 43.12 & 96.08 & 0.195 & 0.508 & 0.161 & 0.298 & 0.156 & 0.416 & 0.188 & 0.703 \\
Parity & FCN & 0.01 & 0.75 & 55.31 & 85.663 & 0.56 & 0.782 & 0.333 & 0.679 & 0.032 & 0.152 & 0.008 & 0.136 \\
\bottomrule
\end{tabular}
}
\label{tbl:lr_result_ranges_main}
\end{table*}

\begin{table*}[htbp]
\centering
\caption{Minimum and maximum values for each dataset-model combination included in our regularization experiments.}
\resizebox{\textwidth}{!}{%
\begin{tabular}{|l|l|cc|cc|cc|cc|}
\textbf{Dataset} & \textbf{Model} & \multicolumn{2}{c|}{\textbf{HP}} & \multicolumn{2}{c|}{\textbf{Test Acc.}} & \multicolumn{2}{c|}{\textbf{Prunability}} & \multicolumn{2}{c|}{\textbf{Avg. CSI}} \\
\cline{3-10}
 &  & \textbf{Min.} & \textbf{Max.} & \textbf{Min.} & \textbf{Max.} & \textbf{Min.} & \textbf{Max.} & \textbf{Min.} & \textbf{Max.} \\
\toprule
Colored MNIST & Learning Rate & 0.001 & 0.15 & 72.297 & 92.203 & 0.262 & 0.8 & 0.441 & 0.644 \\
Colored MNIST & High LR + L2 Reg. & 1e-05 & 0.001 & 92.183 & 94.25 & 0.79 & 0.87 & 0.651 & 0.848 \\
Colored MNIST & Batch Size & 0.01 & 0.3333 & 68.363 & 90.303 & 0.253 & 0.661 & 0.383 & 0.587 \\
Colored MNIST & Momentum & 0.1 & 0.99 & 68.38 & 91.197 & 0.256 & 0.748 & 0.387 & 0.633 \\
Colored MNIST & L1 Regularization & 1e-05 & 0.001 & 72.887 & 90.347 & 0.275 & 0.864 & 0.458 & 0.859 \\
Colored MNIST & L2 Regularization & 0.0001 & 0.05 & 71.727 & 89.687 & 0.263 & 0.76 & 0.432 & 0.873 \\
Colored MNIST & $\gamma$ (Focal Loss) & 0.001 & 25.0 & 68.203 & 91.8 & 0.254 & 0.645 & 0.377 & 0.626 \\
Colored MNIST & $\rho$ (ASAM) & 0.01 & 10.0 & 68.723 & 90.977 & 0.256 & 0.522 & 0.38 & 0.508 \\
MNIST-CIFAR & Learning Rate & 0.001 & 0.1 & 24.327 & 47.497 & 0.293 & 0.69 & 0.24 & 0.376 \\
MNIST-CIFAR & High LR + L2 Reg. & 1e-06 & 0.0001 & 46.947 & 48.327 & 0.677 & 0.727 & 0.384 & 0.497 \\
MNIST-CIFAR & Batch Size & 0.01 & 0.3333 & 25.95 & 46.24 & 0.29 & 0.529 & 0.201 & 0.262 \\
MNIST-CIFAR & Momentum & 0.1 & 0.99 & 25.887 & 46.397 & 0.29 & 0.666 & 0.226 & 0.354 \\
MNIST-CIFAR & L1 Regularization & 1e-05 & 0.001 & 24.65 & 46.097 & 0.28 & 0.845 & 0.24 & 0.486 \\
MNIST-CIFAR & L2 Regularization & 0.0001 & 0.075 & 24.18 & 47.13 & 0.287 & 0.619 & 0.189 & 0.533 \\
MNIST-CIFAR & $\gamma$ (Focal Loss) & 0.01 & 25.0 & 25.87 & 40.393 & 0.283 & 0.597 & 0.223 & 0.314 \\
MNIST-CIFAR & $\rho$ (ASAM) & 0.01 & 10.0 & 26.3 & 34.053 & 0.286 & 0.769 & 0.225 & 0.261 \\
Double MNIST & Learning Rate & 0.001 & 0.25 & 88.13 & 96.39 & 0.282 & 0.801 & 0.529 & 0.631 \\
Double MNIST & High LR + L2 Reg. & 1e-05 & 0.001 & 96.68 & 97.75 & 0.664 & 0.89 & 0.641 & 0.836 \\
Double MNIST & Batch Size & 0.01 & 0.3333 & 85.39 & 96.62 & 0.279 & 0.514 & 0.375 & 0.471 \\
Double MNIST & Momentum & 0.1 & 0.99 & 85.58 & 95.71 & 0.278 & 0.755 & 0.471 & 0.57 \\
Double MNIST & L1 Regularization & 1e-06 & 0.001 & 87.64 & 95.87 & 0.28 & 0.932 & 0.517 & 0.914 \\
Double MNIST & L2 Regularization & 0.0001 & 0.05 & 86.83 & 97.08 & 0.285 & 0.727 & 0.504 & 0.896 \\
Double MNIST & $\gamma$ (Focal Loss) & 0.001 & 25.0 & 85.01 & 95.89 & 0.275 & 0.661 & 0.456 & 0.58 \\
Double MNIST & $\rho$ (ASAM) & 0.01 & 10.0 & 85.71 & 95.82 & 0.277 & 0.626 & 0.467 & 0.488 \\
\bottomrule
\end{tabular}
}
\label{tbl:lr_result_ranges_reg}
\end{table*}

\begin{figure}[t]
    \centering
    \begin{subfigure}{0.45\linewidth}
      \centering
\includegraphics[width=\linewidth]{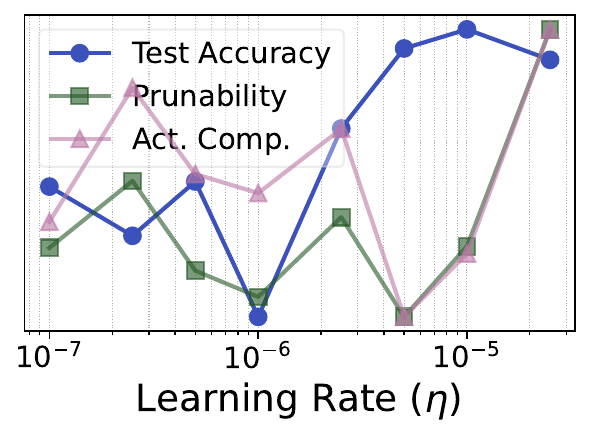}
    \end{subfigure}
    \centering
    \begin{subfigure}{0.45\linewidth}
      \centering
\includegraphics[width=\linewidth]{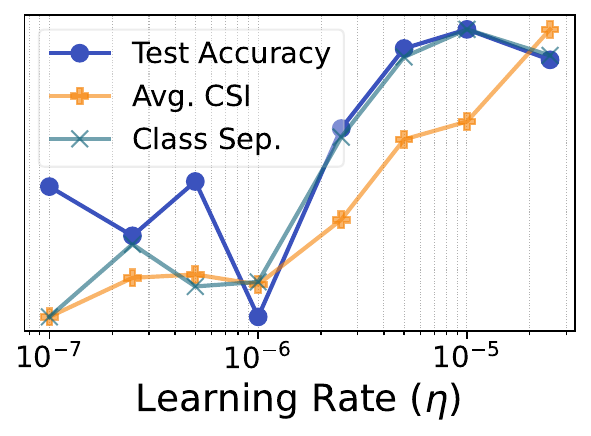}
    \end{subfigure}
    \vspace{-3mm}
    \caption{(Left) Effects of learning rate on OOD performance (unbiased test acc.), network prunability, and representation properties with the Waterbirds dataset.\vspace{-5mm}}
    \label{fig:waterbirds_results}
\end{figure}

\begin{figure*}[t]
  \centering
  \begin{subfigure}{\textwidth}
  \centering
  \includegraphics[width=\textwidth]{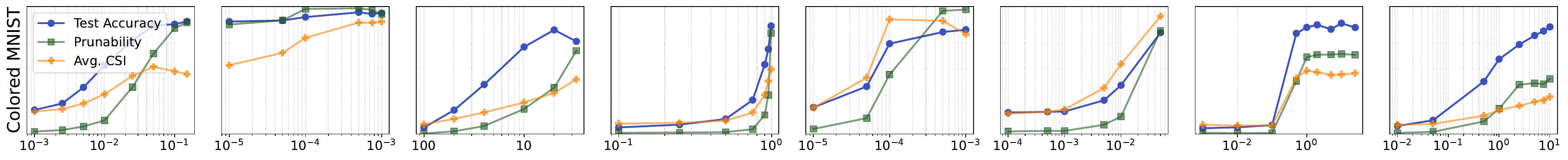}
\end{subfigure}
\begin{subfigure}{\textwidth}
  \centering
  \includegraphics[width=\textwidth]{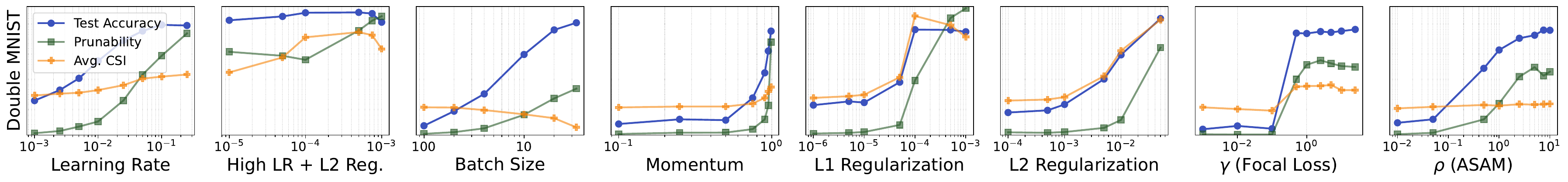}
\end{subfigure}
\begin{subfigure}{\textwidth}
  \centering
\includegraphics[width=\textwidth]{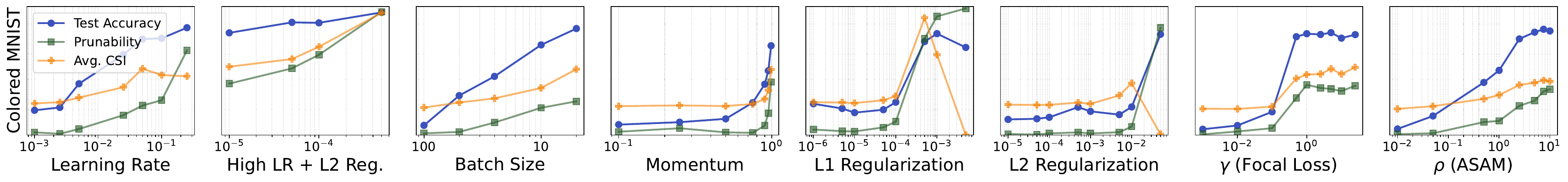}
\end{subfigure}
\vspace{-6mm}
\caption{Comparing various hyperparameters, regularization methods, and losses in terms of OOD robustness, compressibility, and core feature utilization in Double MNIST dataset with a ResNet18 model (top), and Colored MNIST dataset with a ResNet50 model (bottom). $y$-axes are normalized within each figure for each variable.\vspace{-3mm}}
  \label{fig:reg_results_suppl}
\end{figure*}

\begin{figure*}[t]
  \centering
  \begin{subfigure}[b]{\textwidth}
    \centering
    \includegraphics[width=\textwidth]{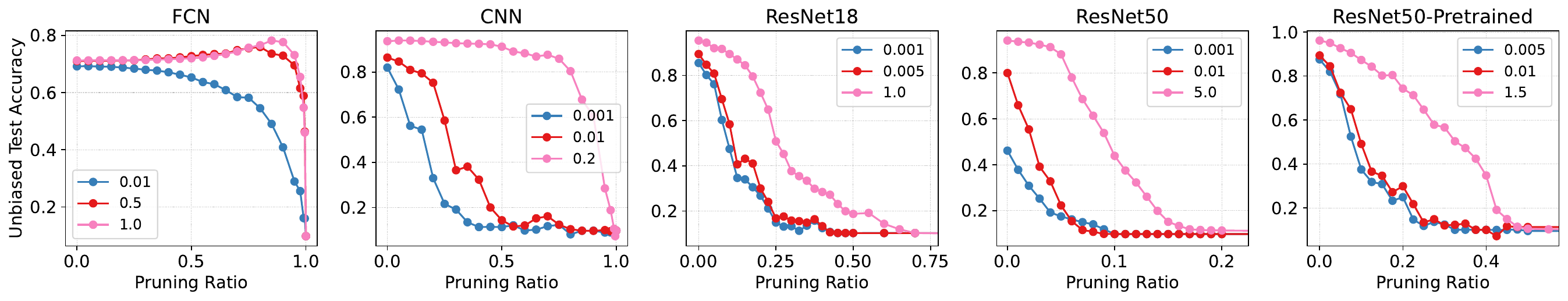}
  \end{subfigure}
  \begin{subfigure}[b]{\textwidth}
    \centering
    \includegraphics[width=\textwidth]{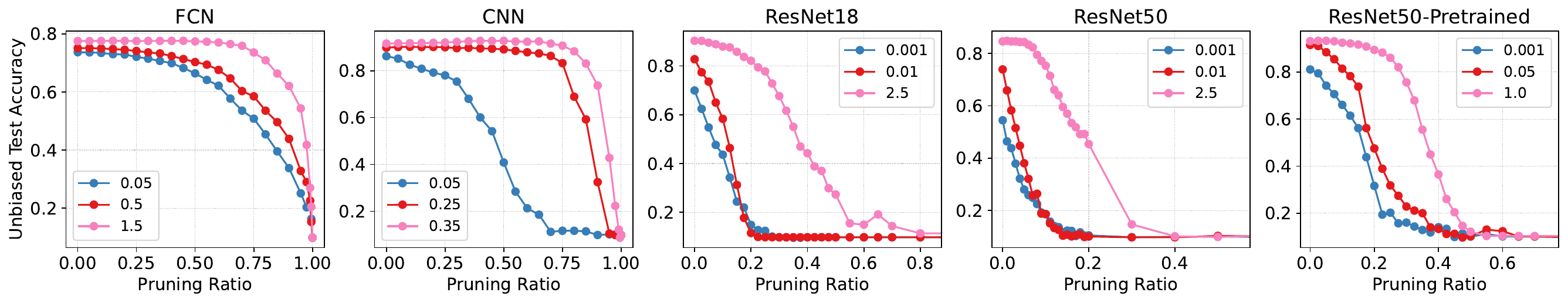}
  \end{subfigure}
  \vspace{-5mm}
  \caption{Effects of column pruning on models trained on Double MNIST (top) and Colored MNIST (bottom) datasets, under various learning rates. $x$-axes are modified for visualization.}
  \label{fig:column_pruning}
  \vspace{-2mm}
\end{figure*}
\begin{figure*}[h!]
  \centering
  \begin{subfigure}[b]{\textwidth}
    \centering
    \includegraphics[width=\textwidth]{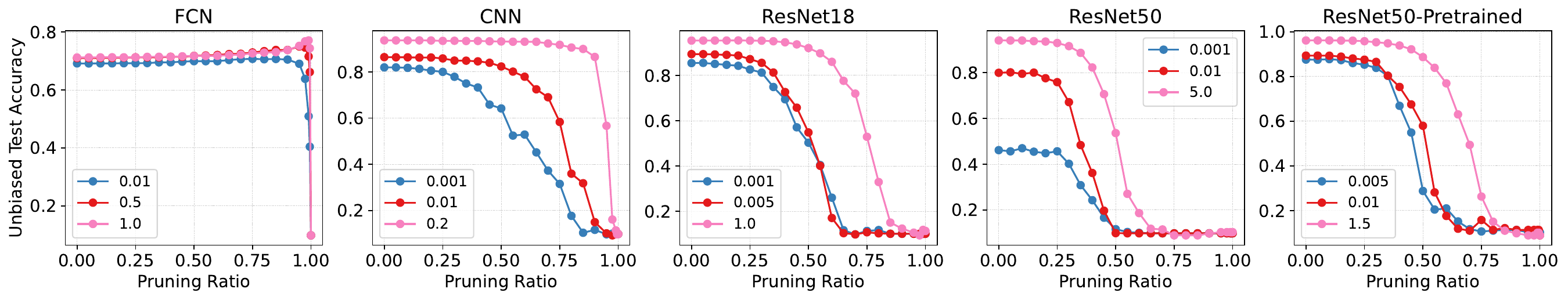}
  \end{subfigure}
  \begin{subfigure}[b]{\textwidth}
    \centering
    \includegraphics[width=\textwidth]{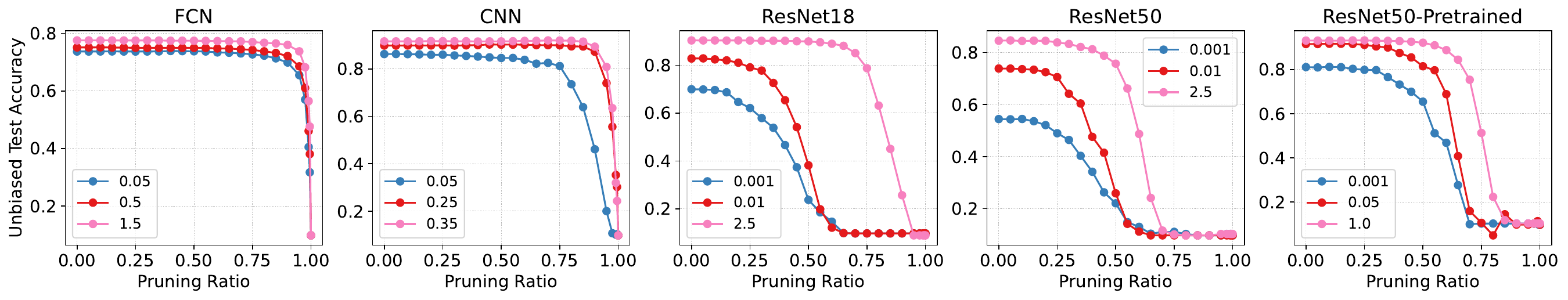}
  \end{subfigure}
  \vspace{-5mm}
  \caption{Effects of magnitude pruning on models trained on Double MNIST (top) and Colored MNIST (bottom), under various LRs.}
  \label{fig:magnitude_pruning}
  \vspace{-6mm}
\end{figure*}

\subsection{Additional Experiment Results}
Here we present additional experimental results that were omitted in the main paper due to space concerns. \cref{fig:moonstar_results} present our results with the synthetic moon-star dataset, \cref{fig:corcifar_results} presents our results with the Corrupted CIFAR-10 dataset, \cref{fig:waterbirds_results} presents our results with the Waterbirds dataset, and \cref{fig:reg_results_suppl} includes additional results for comparing the effects of various hyperparameters and regularization methods. Moreover, \cref{fig:column_pruning} and \cref{fig:magnitude_pruning} provide a more in-depth look at the performance of models in terms of unbiased test accuracy under various pruning ratios, using column and magnitude pruning respectively. 

To show that our results are not limited to training using SGD with a constant LR, we present qualitatively identical experiment results in \cref{fig:lr_annealing} using the Colored MNIST dataset and ResNet18 model, where the initial learning rate ($x$-axis) is multiplied by 0.1 after 1000th iteration. Similar results using Adam optimization algorithm are presented in  \cref{fig:lr_adam}. Additionally, using the same setting, in \cref{fig:lr_longtrain} we show that training models for longer according to an additional criterion (CE loss $< 1e-5$) produces qualitatively identical results as test accuracy changes very little beyond convergence for both low and high LR models. Finally, \cref{fig:cmnist_altcomp} demonstrates that alternative choices to characterize parameter and representation compressibility, such as \qkcomp, sparsity, and the recently proposed PQ-Index \cite{diaoPruningDeep2023} produce qualitatively identical results.

\begin{figure}[t]
    \centering
    \begin{subfigure}{0.45\linewidth}
      \centering
\includegraphics[width=\linewidth]{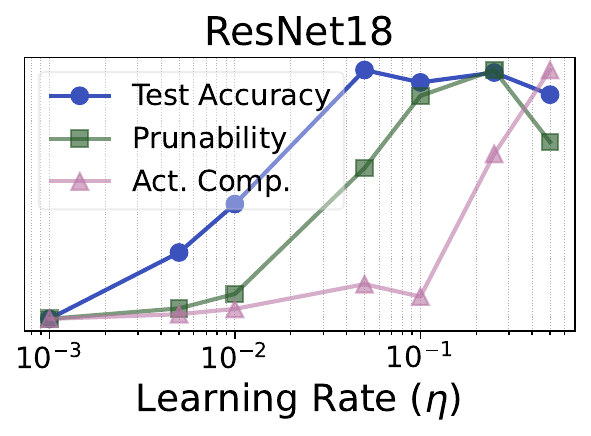}
    \end{subfigure}
    \centering
    \begin{subfigure}{0.45\linewidth}
      \centering
\includegraphics[width=\linewidth]{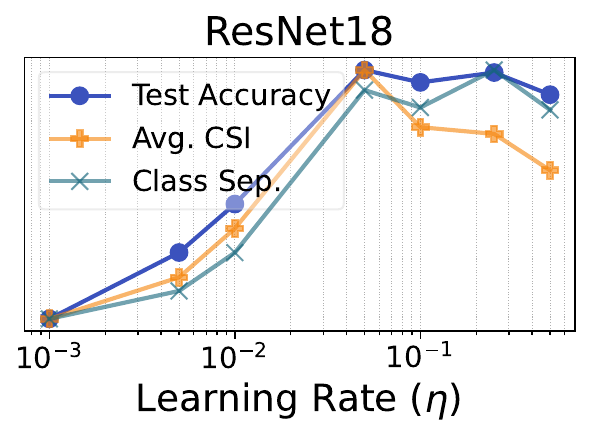}
    \end{subfigure}
    \caption{Effects of learning rate on OOD performance (unbiased test acc.), network prunability, and representation properties with a learning rate annealing setting, where the LR is multiplied by 0.1 after 1000th iteration.}
    \label{fig:lr_annealing}
    \vspace{-4mm}
\end{figure}

\begin{figure}[t]
    \centering
    \begin{subfigure}{0.45\linewidth}
      \centering
\includegraphics[width=\linewidth]{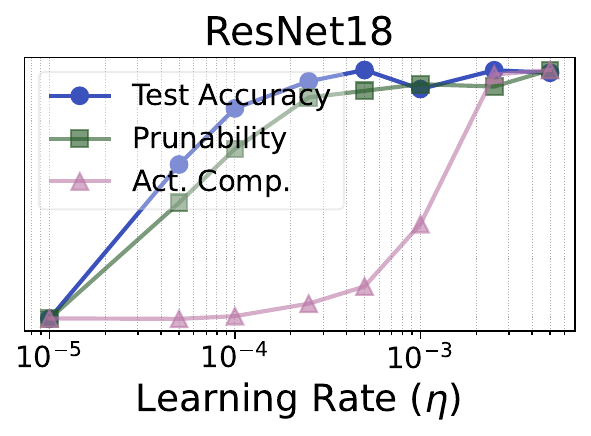}
    \end{subfigure}
    \centering
    \begin{subfigure}{0.45\linewidth}
      \centering
\includegraphics[width=\linewidth]{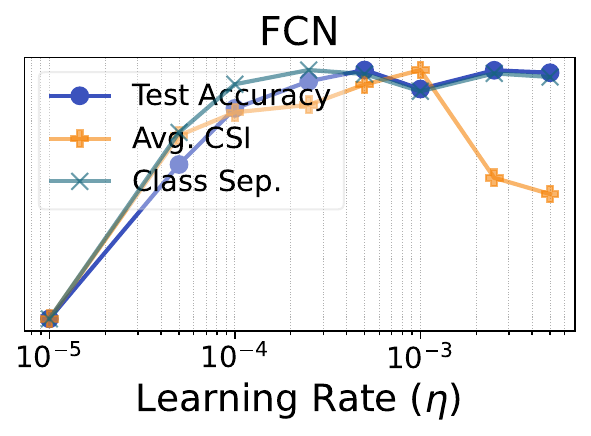}
    \end{subfigure}
    \caption{Effects of learning rate on OOD performance (unbiased test acc.), network prunability, and representation properties with an Adam optimizer, with $\beta_1=0.9, \beta_2=0.999$.}
    \label{fig:lr_adam}
    \vspace{-4mm}
\end{figure}

\begin{figure}[t]
    \centering
    \begin{subfigure}{0.45\linewidth}
      \centering
\includegraphics[width=\linewidth]{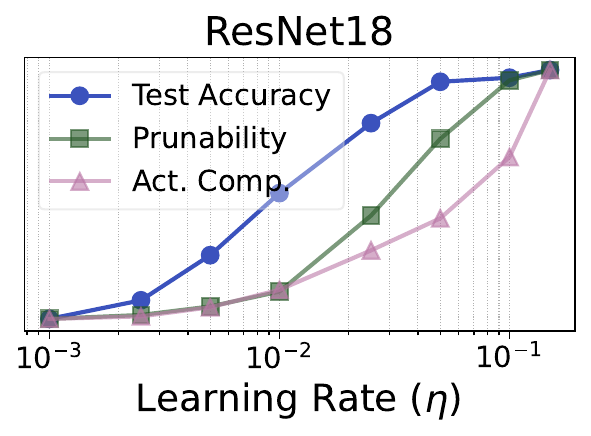}
    \end{subfigure}
    \centering
    \begin{subfigure}{0.45\linewidth}
      \centering
\includegraphics[width=\linewidth]{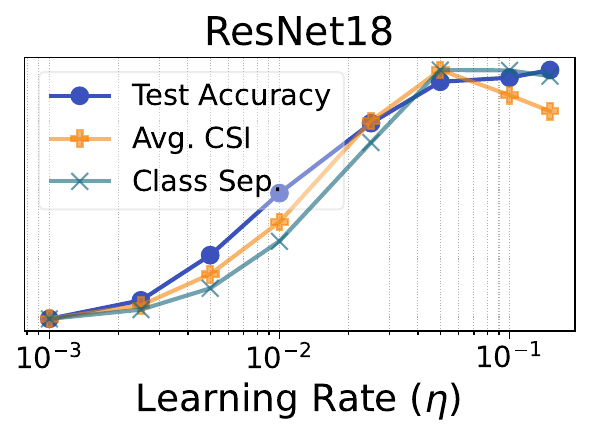}
    \end{subfigure}
    
    \begin{subfigure}{0.95\linewidth}
      \centering
\includegraphics[width=\linewidth]{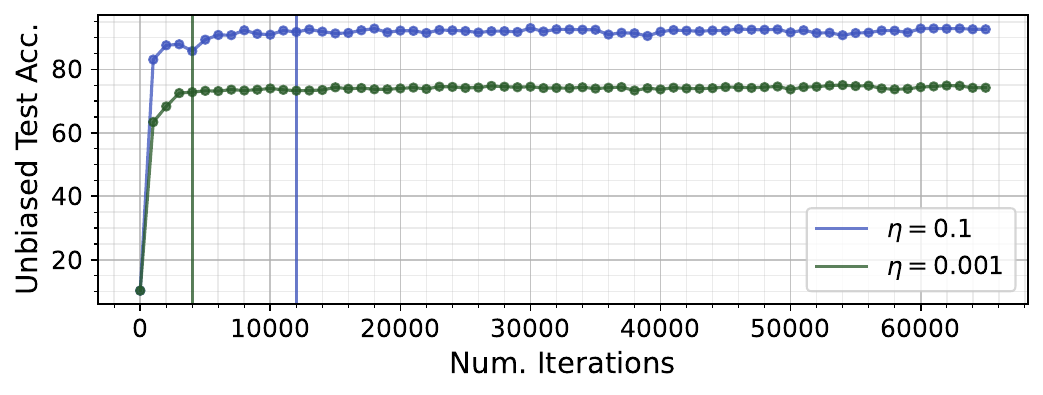}
    \end{subfigure}
    \caption{(Top) Effects of learning rate on OOD performance (unbiased test acc.), network prunability, and representation properties when trained for $100\%$ training accuracy and $<0.00001$ training loss. (Bottom) Test accuracy does not meaningfully change beyond convergence (vertical lines correspond to the point where 100\% was reached).}
    \label{fig:lr_longtrain}
    \vspace{-4mm}
\end{figure}

\begin{figure*}[t]
    \centering
    \begin{subfigure}{\linewidth}
      \centering
\includegraphics[width=\linewidth]{img/cmnist_main_prune.pdf}
    \end{subfigure}
    \centering
    \begin{subfigure}{\linewidth}
      \centering
\includegraphics[width=\linewidth]{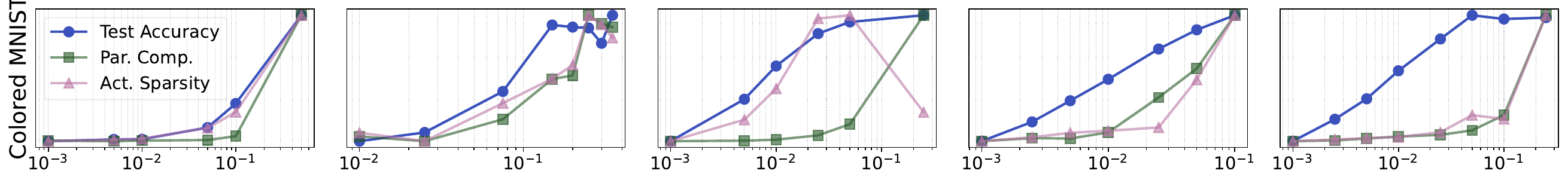}
    \end{subfigure}
    \begin{subfigure}{\linewidth}
      \centering
\includegraphics[width=\linewidth]{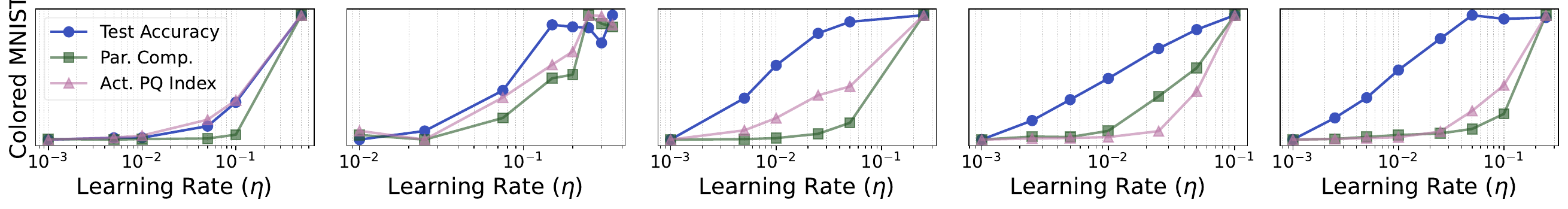}
    \end{subfigure}
    \caption{Utilizing alternative notions of parameter and representation compressibility such as prunability,  \qkcomp \ (with $q=2, \kappa=0.1$), sparsity, and the recently proposed PQ-Index (with $p=2, q=1$).}
    \label{fig:cmnist_altcomp}
    \vspace{-1mm}
\end{figure*}
\begin{figure*}[t]
\centering
\includegraphics[width=\linewidth]{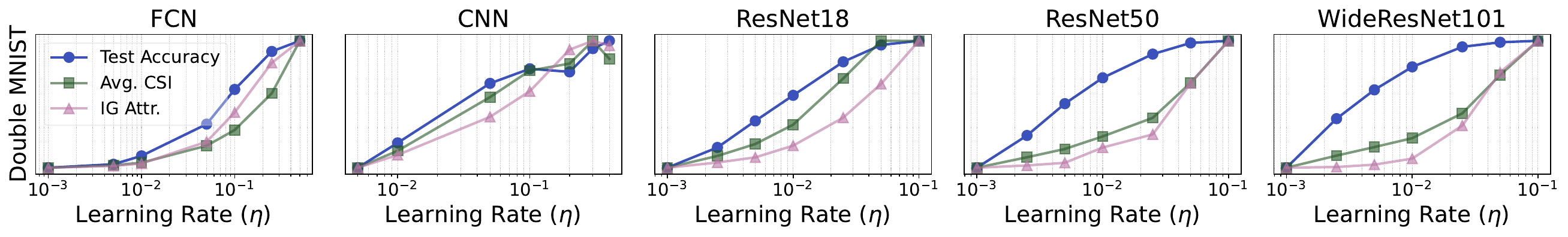}
\vspace{-6mm}
    \caption{Comparing CSI vs. input attribution to core features (\%), using Integrated Gradients.\vspace{-2mm}}
    \label{fig:cifar10_attr_vs_csi}
\vspace{-2mm}
\end{figure*}
\begin{figure}[t]
\centering
    \begin{subfigure}{0.49\linewidth}
\includegraphics[width=\linewidth]{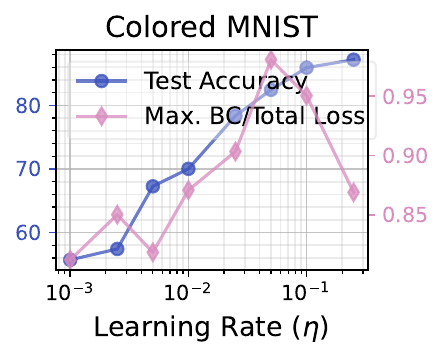}
\end{subfigure}
    \begin{subfigure}{0.45\linewidth}
\includegraphics[width=\linewidth]{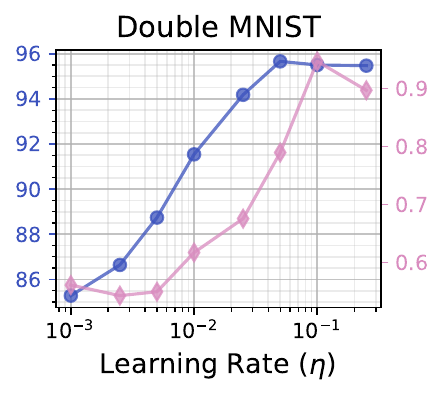}
\end{subfigure}
\vspace{-4mm}
    \caption{Examining the effect of LR on unbiased test accuracy and BC loss ratio in ResNet18 and Double MNIST dataset, as well as ResNet50 and Colored MNIST dataset.}
    \label{fig:lr_unbiased}
\vspace{-5mm}
\end{figure}

\paragraph{Optimizers \& LR schedules}
We confirm our findings on robustness to SCs and comp. extend to modern and standard training setups. \cref{fig:rebuttal_figs_optimizers} (left) shows that core benefits persist with ResNet50 (Colored MNIST) using the PSGD (Kron) optimizer and a WSD LR schedule. \cref{fig:rebuttal_figs_optimizers} (right) shows that our key findings also hold under a standard CIFAR-100 setup with AdamW, cosine annealing, weight decay, and validation set based model selection, addressing concerns about reliance on constant LR SGD.

\cref{fig:rebuttal_figs_lr_soup} (left) compares ResNet18 models trained on CIFAR-10 with a starting LR of $0.1$, which is decreased to $0.001$ by step $S$. We plot the eventual unbiased test performance of a model as function of $S$, with the performance of a model trained by a constant LR of $0.1$ vs. $0.001$ depicted as horizontal lines for reference. The results show that the effects of LR are almost completely integrated by $S = 1000$. \cref{fig:rebuttal_figs_lr_soup} (right) shows that creating a ``model soup'' \cite{wortsmanModelSoups2022} is another way of obtaining robustness vs. compressibility disentanglement.

\begin{figure}[ht]
\centering
\begin{subfigure}{0.45\linewidth}
  \includegraphics[width=\linewidth]{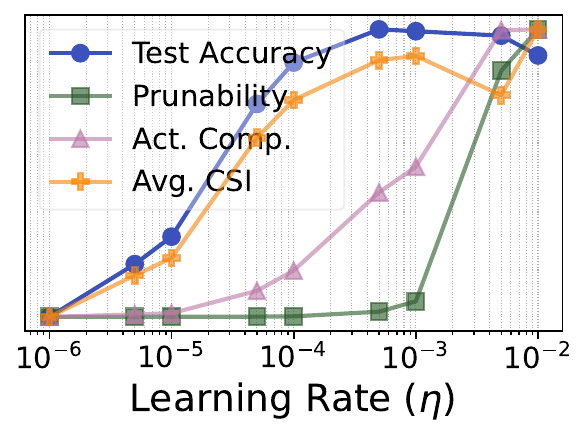} 
\end{subfigure}
\hfill %
\begin{subfigure}{0.45\linewidth}
  \includegraphics[width=\linewidth]{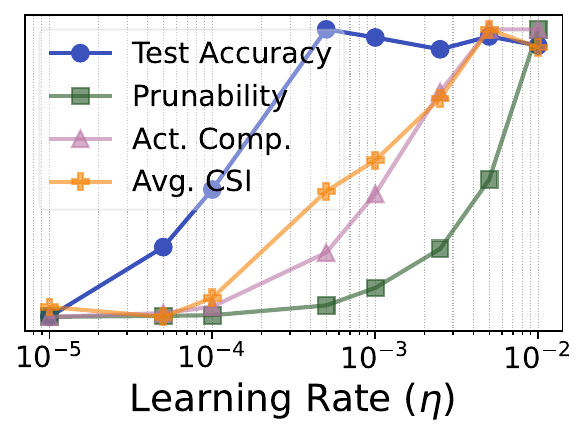} 
\end{subfigure}
\caption{Experiments with alternative optimizers, schedulers, and convergence criteria.}\vspace{-7mm}
\label{fig:rebuttal_figs_optimizers}
\end{figure}

\begin{figure}[ht]
\centering
\begin{subfigure}{0.45\linewidth}
  \includegraphics[width=\linewidth]{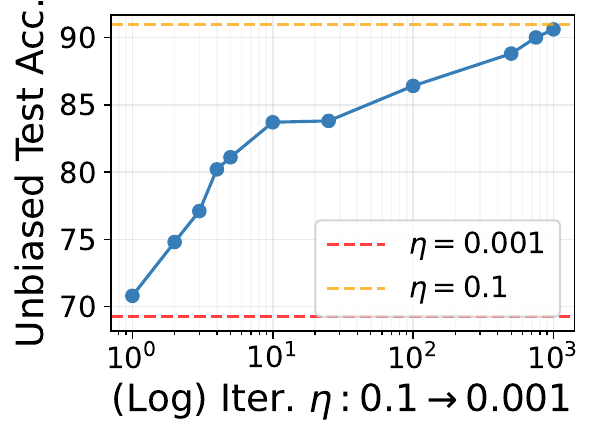} 
\end{subfigure}
\begin{subfigure}{0.45\linewidth}
  \includegraphics[width=\linewidth]{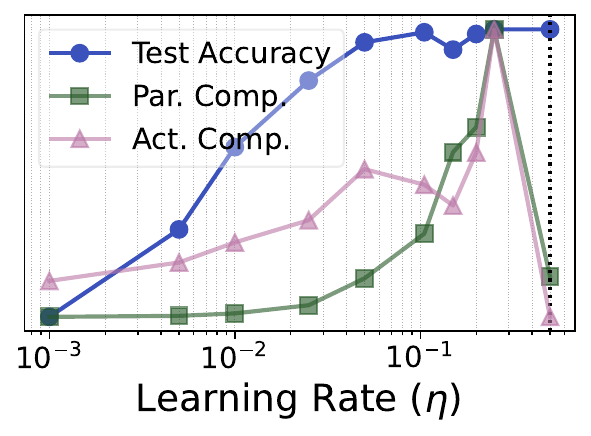} 
\end{subfigure}
\caption{(Left) Unbiased test set performance as a function of LR reduction. (Right) Disentangling compressibility and robustness through a ``model soup'' (the rightmost model).}\vspace{-7mm}
\label{fig:rebuttal_figs_lr_soup}
\end{figure}
\section{Additional Details and Results Regarding Neural Network Attribution}
\label{apx:attribution}

One of the most commonly used methods include Integrated Gradients (IG) \cite{sundararajanAxiomaticAttribution2017}. Given a predictor $f$, an input $\vc{x}$, and a baseline input $\vc{x}'$, IG for the $i$'th component of $\vc{x}$ is computed as follows:
$$
\mathrm{IG}_i(\vc{x}) := (x_i - x'_i) \times \int_{\alpha=0}^1 \frac{\partial f(\vc{x}' + \alpha (\vc{x} - \vc{x}'))}{\partial x_i} d\alpha.
$$
Intuitively, this corresponds to integrating the sensitivity of the output to changes in $x_i$ throughout linear interpolation from $x'$ to $x$. See \cite{sundararajanAxiomaticAttribution2017} for a justification of IG's methodology, and see \cite{mandlerReviewBenchmark2024} for strengths and weaknesses of various attribution methods. To investigate whether our results are an artifact of using IG as our attribution method, we visually compare the attributions computed by Integrated Gradients (IG) and another prominent attribution method, DeepLift \cite{shrikumarLearningImportant2019}, for CIFAR-10 samples under ResNet18 models in \cref{fig:ig_vs_deeplift}. The two methods produce identical results for the purposes of this paper. Both methods are implemented using the \texttt{captum} package for \texttt{PyTorch} framework \cite{kokhlikyan2020captum}.

We can utilize attribution methods for convergent validation of class-selectivity index as a measure of spurious feature utilization. Although in datasets such as Colored MNIST pixels for spurious and core features overlap, they are distinct in others such as Double MNIST. Thus, we can compute input attribution on Double MNIST and through normalization we can determine how much (\ie what percentage) of models' attribution is on the spurious vs. core feature. We can then see whether the patterns demonstrated by CSI parallel that computed through input attribution. \cref{fig:cifar10_attr_vs_csi} shows a comparison of the two metrics across five datasets and LRs for Double MNIST dataset. Remarkably, the two demonstrate qualitatively identical patterns, confirming CSI as a useful metric of core feature utilization.

\paragraph{Creation of attribution maps for CIFAR datasets} We train a ResNet18 model using a low vs. high LR with \textit{10 different seeds} on CIFAR-10 and CIFAR-100 datasets. Then, we extract those samples in the test set which have been correctly predicted by $>.75$ of the high LR models and $<.25$ of low LR ones. Then, we investigate the attribution maps of low vs. high LR models in~\cref{fig:attr_demonst}.

\vspace{-1mm}
\subsection{Additional Attribution Visualizations}
We provide additional visualization of attributions for our experiments in the main paper; for Colored MNIST (\cref{fig:cmnist_attr_extended}),  MNIST-CIFAR (\cref{fig:mnistcifar_attr_extended}), Double MNIST (\cref{fig:dmnist_attr_extended}), CelebA (\cref{fig:celeba_attr_extended}), CIFAR-10 (\cref{fig:cifar10_attr_extended}), and CIFAR-100 (\cref{fig:cifar100_attr_extended}) datasets. Notice that as in the main paper, low LR models are more likely to focus on spurious features compared to high LR models.
\begin{figure*}[t]
    \centering
    \begin{subfigure}{\textwidth}
  \centering
\includegraphics[width=\linewidth]{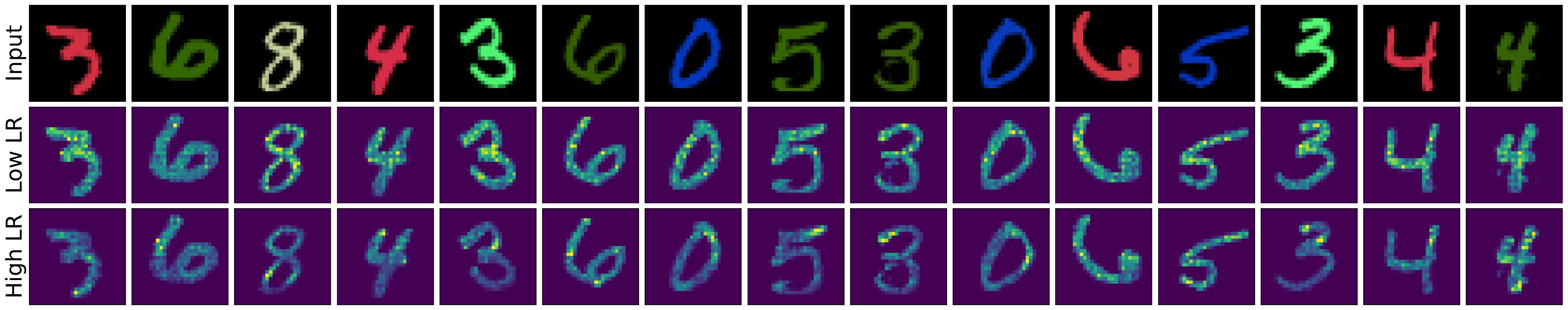}
\end{subfigure}   

\begin{subfigure}{\textwidth}
  \centering
\includegraphics[width=\linewidth]{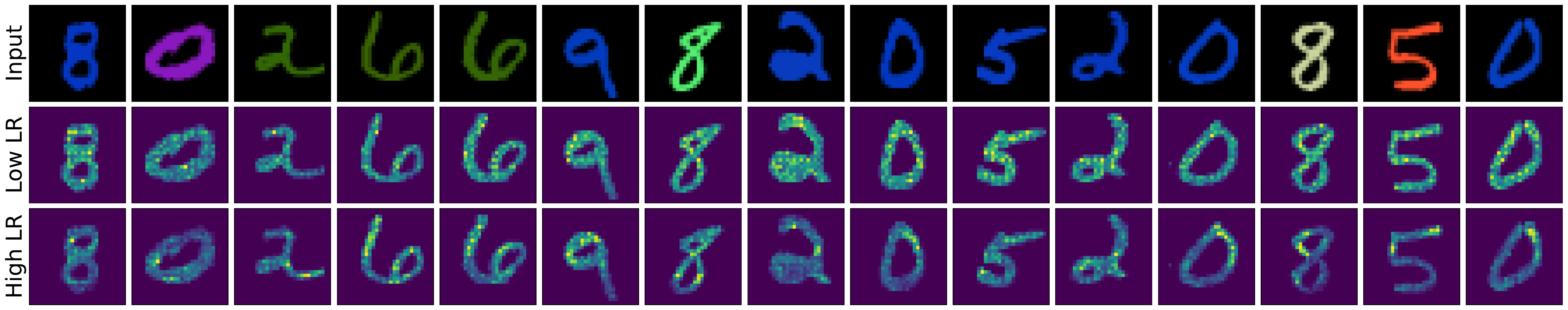}
\end{subfigure}

\begin{subfigure}{\textwidth}
  \centering
\includegraphics[width=\linewidth]{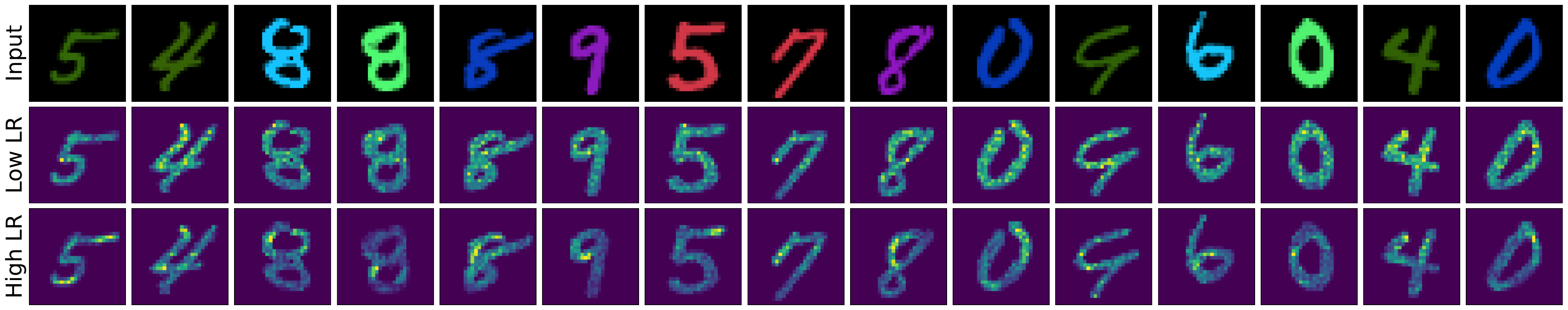}
\end{subfigure}
\vspace{-6mm}
    \caption{Attributions of trained ResNet18 models on Colored MNIST dataset.\vspace{-2mm}}  
    \label{fig:cmnist_attr_extended}
\vspace{3mm}
\end{figure*}
\begin{figure*}[t]
    \centering
    \begin{subfigure}{\textwidth}
  \centering
\includegraphics[width=\linewidth]{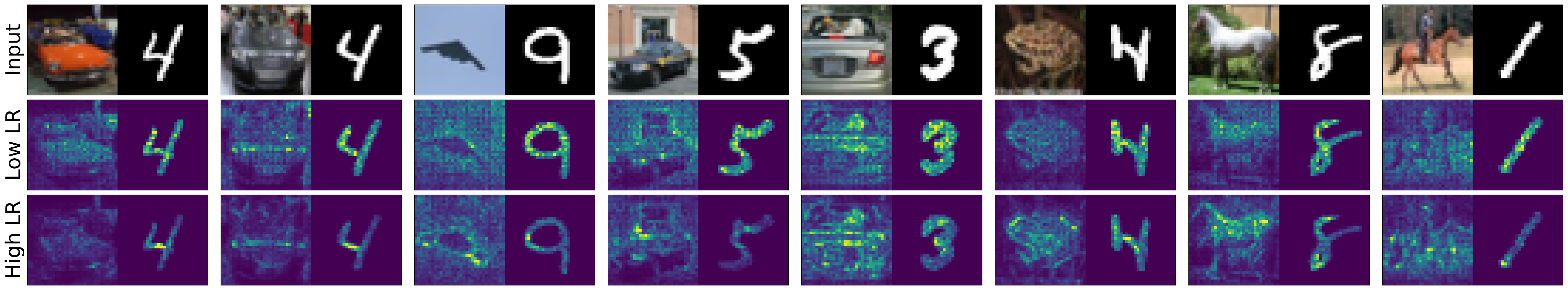}
\end{subfigure}   

\begin{subfigure}{\textwidth}
  \centering
\includegraphics[width=\linewidth]{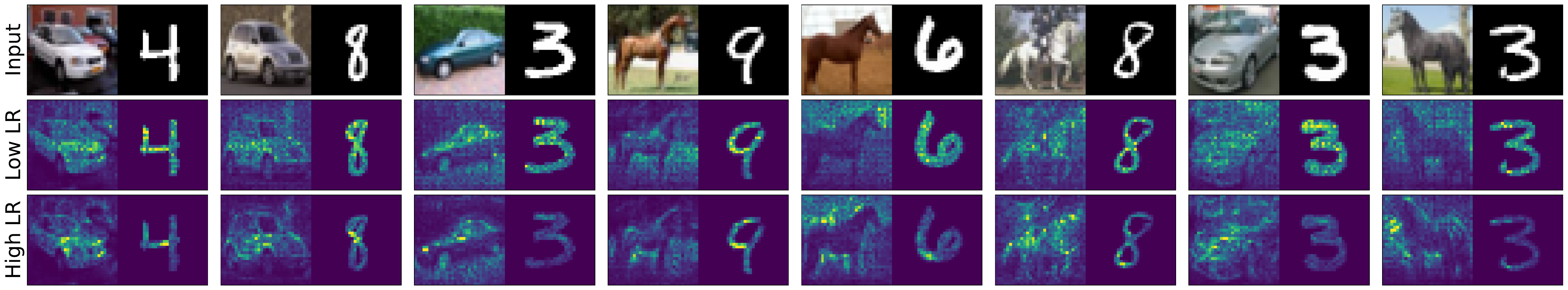}
\end{subfigure}

\begin{subfigure}{\textwidth}
  \centering
\includegraphics[width=\linewidth]{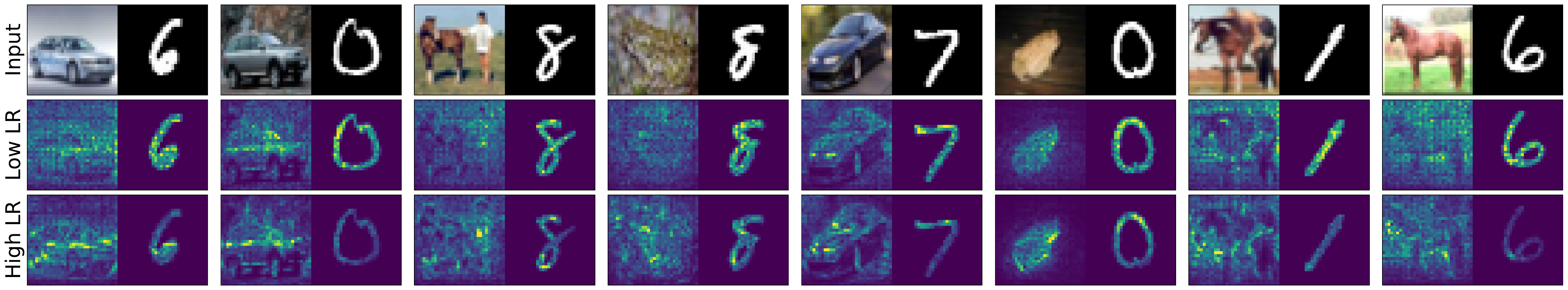}
\end{subfigure}
\vspace{-6mm}
    \caption{Attributions of trained ResNet18 models on MNIST-CIFAR dataset.\vspace{-2mm}} 
    \label{fig:mnistcifar_attr_extended}
\end{figure*}

\begin{figure*}[t]
    \centering
    \begin{subfigure}{\textwidth}
  \centering
\includegraphics[width=\linewidth]{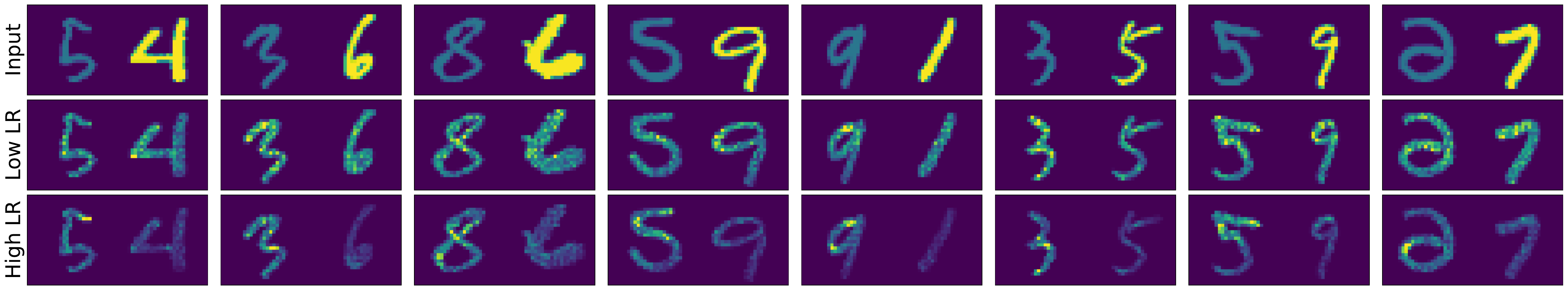}
\end{subfigure}   

\begin{subfigure}{\textwidth}
  \centering
\includegraphics[width=\linewidth]{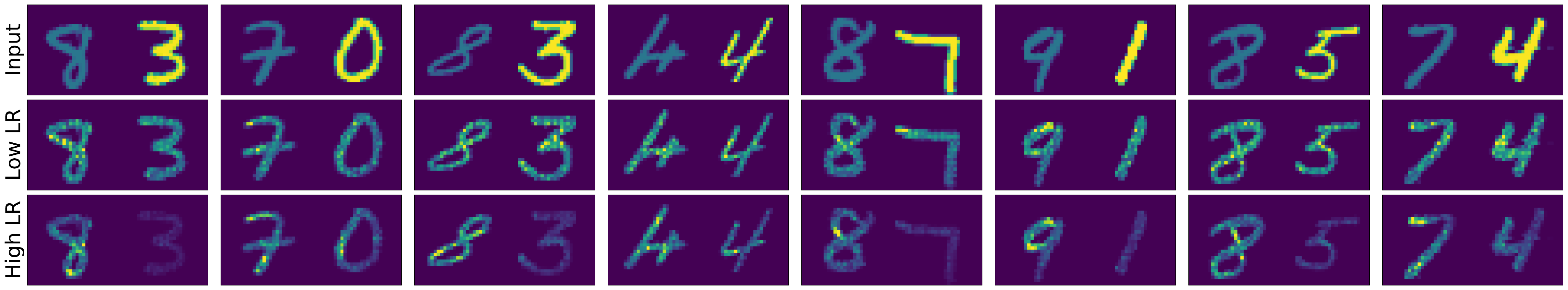}
\end{subfigure}

\begin{subfigure}{\textwidth}
  \centering
\includegraphics[width=\linewidth]{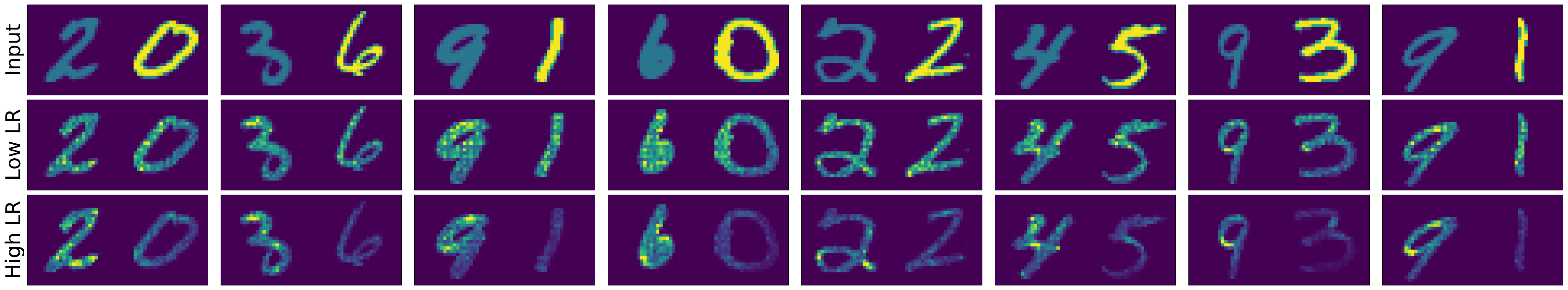}
\end{subfigure}
\vspace{-6mm}
    \caption{Attributions of trained ResNet18 models on MNIST-CIFAR dataset.\vspace{-2mm}}
    \label{fig:dmnist_attr_extended}
\vspace{3mm}
\end{figure*}
\begin{figure*}[t]
    \centering
    \begin{subfigure}{\textwidth}
  \centering
\includegraphics[width=\linewidth]{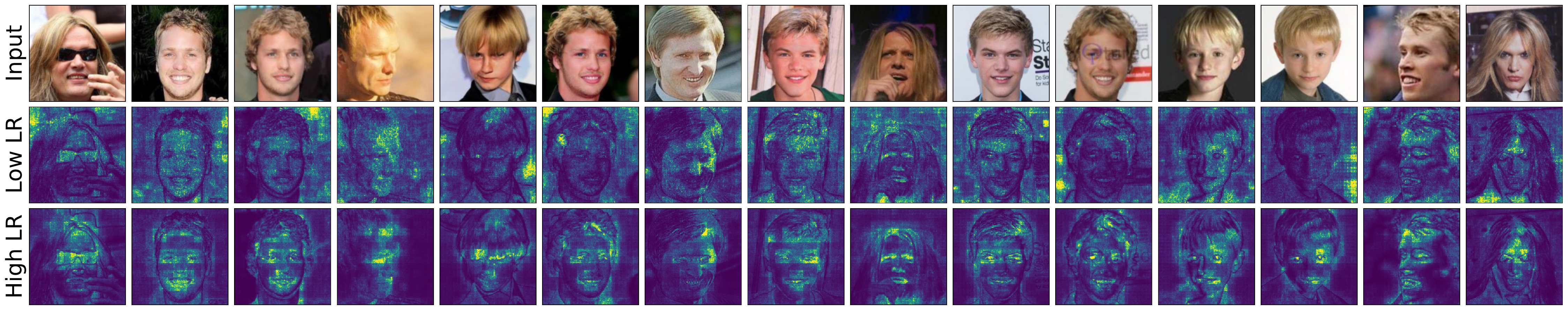}
\end{subfigure}   

\begin{subfigure}{\textwidth}
  \centering
\includegraphics[width=\linewidth]{img/celeba_attr_demonst_15_1.pdf}
\end{subfigure}

\begin{subfigure}{\textwidth}
  \centering
\includegraphics[width=\linewidth]{img/celeba_attr_demonst_15_1.pdf}
\end{subfigure}
\vspace{-6mm}
    \caption{Attributions of trained Swin Transformer models on CelebA dataset.\vspace{-2mm}} 
    \label{fig:celeba_attr_extended}
\end{figure*}

\begin{figure*}[t]
    \centering
    \begin{subfigure}{\textwidth}
  \centering
\includegraphics[width=0.98\linewidth]{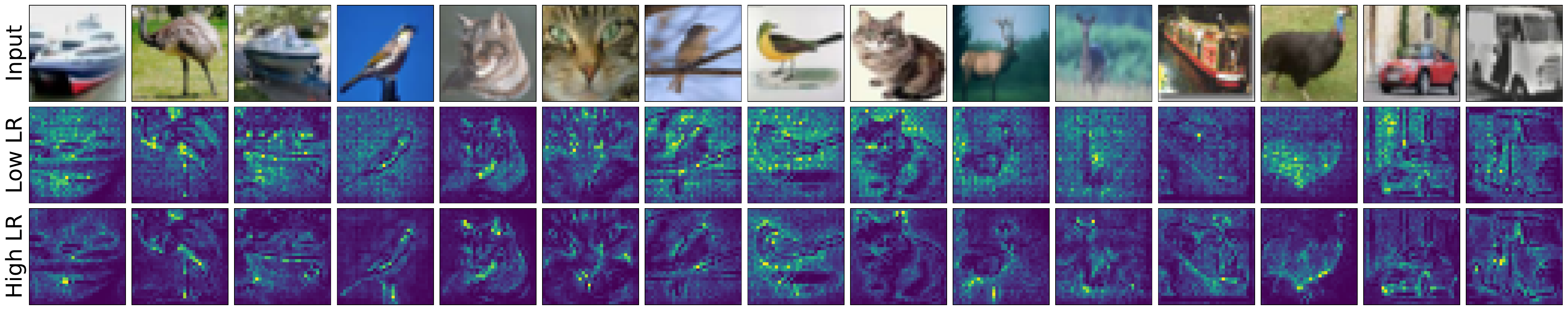}
\end{subfigure}   

\begin{subfigure}{\textwidth}
  \centering
\includegraphics[width=0.98\linewidth]{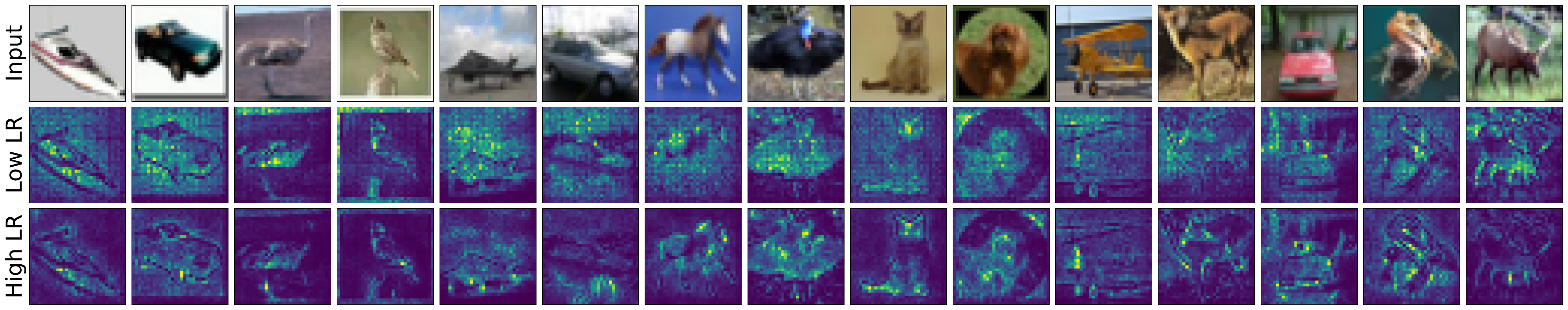}
\end{subfigure}

\begin{subfigure}{\textwidth}
  \centering
\includegraphics[width=0.98\linewidth]{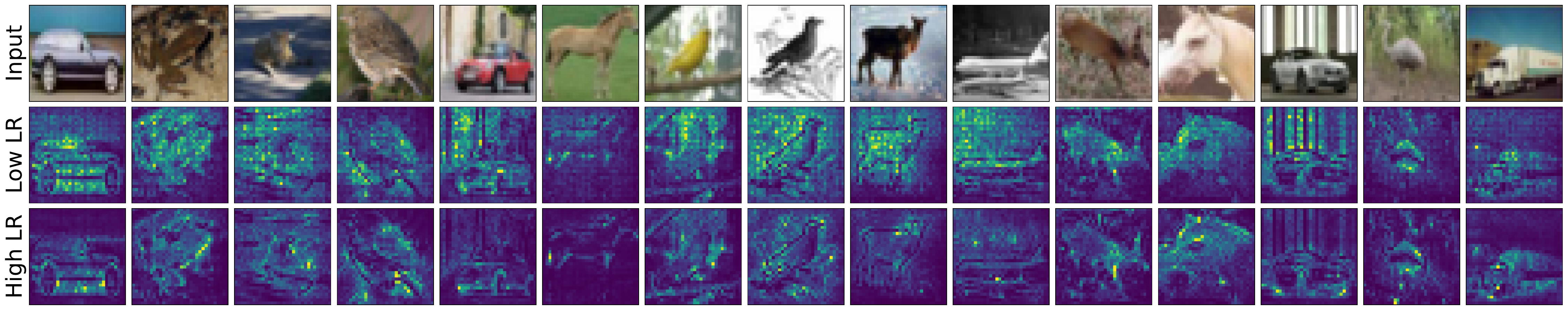}
\end{subfigure}
\vspace{-6mm}
    \caption{Attributions of trained ResNet18 models on CIFAR-10 dataset.\vspace{-2mm}}
    \label{fig:cifar10_attr_extended}
\vspace{3mm}
\end{figure*}
\begin{figure*}[t]
    \centering
    \begin{subfigure}{\textwidth}
  \centering
\includegraphics[width=0.98\linewidth]{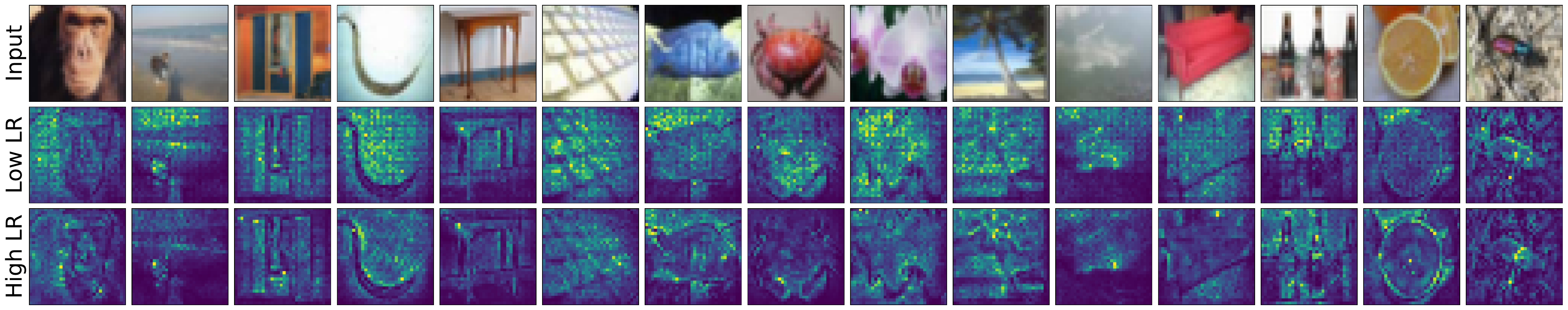}
\end{subfigure}   

\begin{subfigure}{\textwidth}
  \centering
\includegraphics[width=0.98\linewidth]{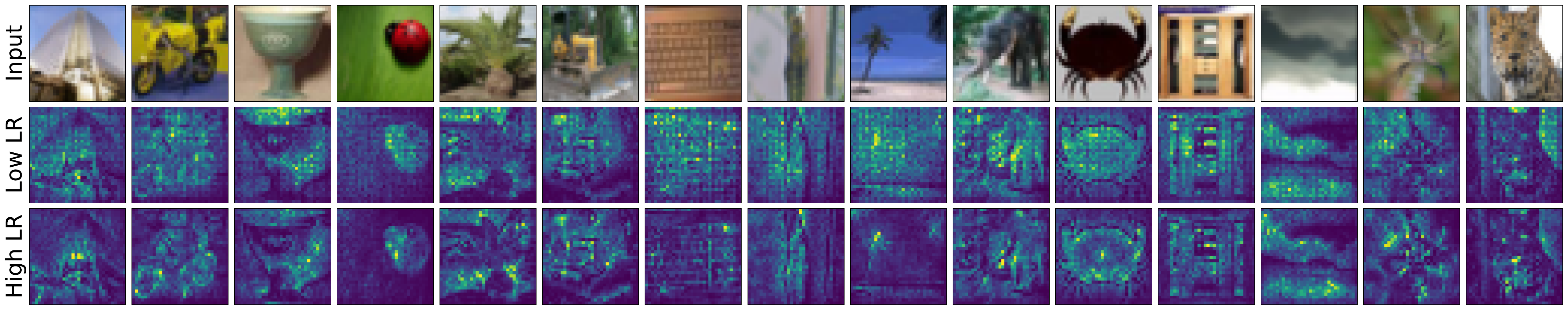}
\end{subfigure}

\begin{subfigure}{\textwidth}
  \centering
\includegraphics[width=0.98\linewidth]{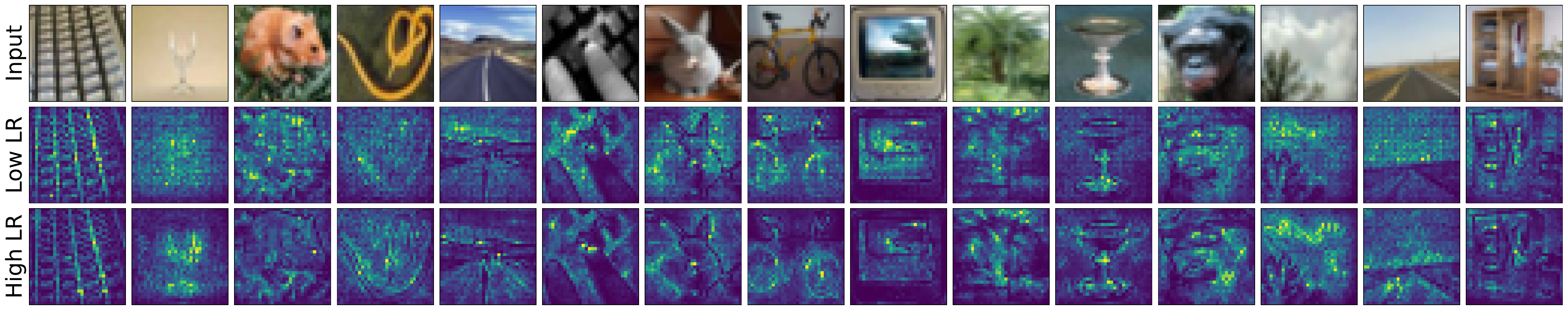}
\end{subfigure}
\vspace{-6mm}
    \caption{Attributions of trained ResNet18 models on CIFAR-100 dataset.\vspace{-2mm}} 
    \label{fig:cifar100_attr_extended}
\end{figure*}

\end{document}